\def\eqref#1{equation~\ref{#1}}
\def\1{\bm{1}}
\def\va{{\bm{a}}}
\def\vb{{\bm{b}}}
\def\vf{{\bm{f}}}
\def\vu{{\bm{u}}}
\def\vv{{\bm{v}}}
\def\vw{{\bm{w}}}
\def\vz{{\bm{z}}}
\DeclareMathAlphabet{\mathsfit}{\encodingdefault}{\sfdefault}{m}{sl}
\SetMathAlphabet{\mathsfit}{bold}{\encodingdefault}{\sfdefault}{bx}{n}
\theoremstyle{plain}
\newtheorem{theorem}{Theorem}[section]
\newtheorem{proposition}[theorem]{Proposition}
\newtheorem{lemma}[theorem]{Lemma}
\newtheorem{corollary}[theorem]{Corollary}
\theoremstyle{definition}
\theoremstyle{remark}
\icmltitlerunning{From Relational Pooling to Subgraph GNNs}
\begin{document}

\twocolumn[
\icmltitle{From Relational Pooling to Subgraph GNNs:\\ A Universal Framework for More Expressive Graph Neural Networks}

% It is OKAY to include author information, even for blind
% submissions: the style file will automatically remove it for you
% unless you've provided the [accepted] option to the icml2023
% package.

% List of affiliations: The first argument should be a (short)
% identifier you will use later to specify author affiliations
% Academic affiliations should list Department, University, City, Region, Country
% Industry affiliations should list Company, City, Region, Country

% You can specify symbols, otherwise they are numbered in order.
% Ideally, you should not use this facility. Affiliations will be numbered
% in order of appearance and this is the preferred way.
\icmlsetsymbol{equal}{*}

\begin{icmlauthorlist}
\icmlauthor{Cai Zhou}{equal,1}
\icmlauthor{Xiyuan Wang}{equal,2}
\icmlauthor{Muhan Zhang}{2}
% \icmlauthor{Firstname4 Lastname4}{sch}
% \icmlauthor{Firstname5 Lastname5}{yyy}
% \icmlauthor{Firstname6 Lastname6}{sch,yyy,comp}
% \icmlauthor{Firstname7 Lastname7}{comp}
% %\icmlauthor{}{sch}
% \icmlauthor{Firstname8 Lastname8}{sch}
% \icmlauthor{Firstname8 Lastname8}{yyy,comp}
%\icmlauthor{}{sch}
%\icmlauthor{}{sch}
\end{icmlauthorlist}

\icmlaffiliation{1}{Department of Automation, Tsinghua University}
\icmlaffiliation{2}{Institute for Artificial Intelligence, Peking University}
% \icmlaffiliation{3}{BIGAI}

\icmlcorrespondingauthor{Muhan Zhang}{muhan@pku.edu.cn}
% \icmlcorrespondingauthor{Firstname2 Lastname2}{first2.last2@www.uk}

% You may provide any keywords that you
% find helpful for describing your paper; these are used to populate
% the "keywords" metadata in the PDF but will not be shown in the document
\icmlkeywords{Graph Neural Network, Weisfeiler-Lehman test}

\vskip 0.3in
]

\newcommand{\muhan}[1]{\textcolor{red}{MZ: #1}}
\newcommand{\cai}[1]{\textcolor{blue}{CZ: #1}}
\newcommand{\wxy}[1]{{\color{red}#1}}
\newcommand{\mset}[1]{\{\!\{#1\}\!\}}
\newcommand{\bmset}[1]{\big\{\!\big\{#1\big\}\!\big\}}
\newcommand{\Bmset}[1]{\Big\{\!\Big\{#1\Big\}\!\Big\}}
\newcommand{\tuple}[1]{\left(#1 \right)}
% this must go after the closing bracket ] following \twocolumn[ ...

% This command actually creates the footnote in the first column
% listing the affiliations and the copyright notice.
% The command takes one argument, which is text to display at the start of the footnote.
% The \icmlEqualContribution command is standard text for equal contribution.
% Remove it (just {}) if you do not need this facility.

%\printAffiliationsAndNotice{}  % leave blank if no need to mention equal contribution
\printAffiliationsAndNotice{\icmlEqualContribution} % otherwise use the standard text.

\begin{abstract}
Relational pooling (RP) is a framework for building more expressive and permutation-invariant graph neural networks (GNN). However, there is limited understanding of the exact enhancement in the expressivity of RP and its connection with the Weisfeiler–Lehman (WL) hierarchy. Starting from RP, we propose to explicitly assign labels to nodes as additional features to improve graph isomorphism distinguishing power of message passing neural networks. The method is then extended to higher-dimensional WL, leading to a novel $k,l$-WL algorithm, a more general framework than $k$-WL. We further introduce the subgraph concept into our hierarchy and propose a localized $k,l$-WL framework, incorporating a wide range of existing work, including many subgraph GNNs. Theoretically, we analyze the expressivity of $k,l$-WL w.r.t. $k$ and $l$ and compare it with the traditional $k$-WL. Complexity reduction methods are also systematically discussed to build powerful and practical $k,l$-GNN instances. We theoretically and experimentally prove that our method is universally compatible and capable of improving the expressivity of any base GNN model. Our $k,l$-GNNs achieve superior performance on many synthetic and real-world datasets, which verifies the effectiveness of our framework.
\end{abstract}

\section{Introduction}

Graph-structured data has recently revealed a significant importance in many fields, including bio-informatics, combinatorial optimization and social-network analysis, among which graph neural networks (GNNs) achieve great successes~\citep{BronsteinBLSV16, DBLP:journals/corr/abs-2003-03123, DBLPCombine}. Message passing neural network (MPNN) is one of the simplest and most commonly used GNNs~\citep{Zhou2018GraphNN}, whereas its expressivity in distinguishing non-isomorphic graphs is bounded by the one-dimensional Weisfeiler-Lehman test (1-WL)~\citep{HowPowerfulGNN, HigherorderGNN}. Therefore, designing GNNs with stronger expressivity has aroused increasing attention. 

\begin{figure}[t]
\begin{center}
\centerline{\includegraphics[page=3, width=\columnwidth]{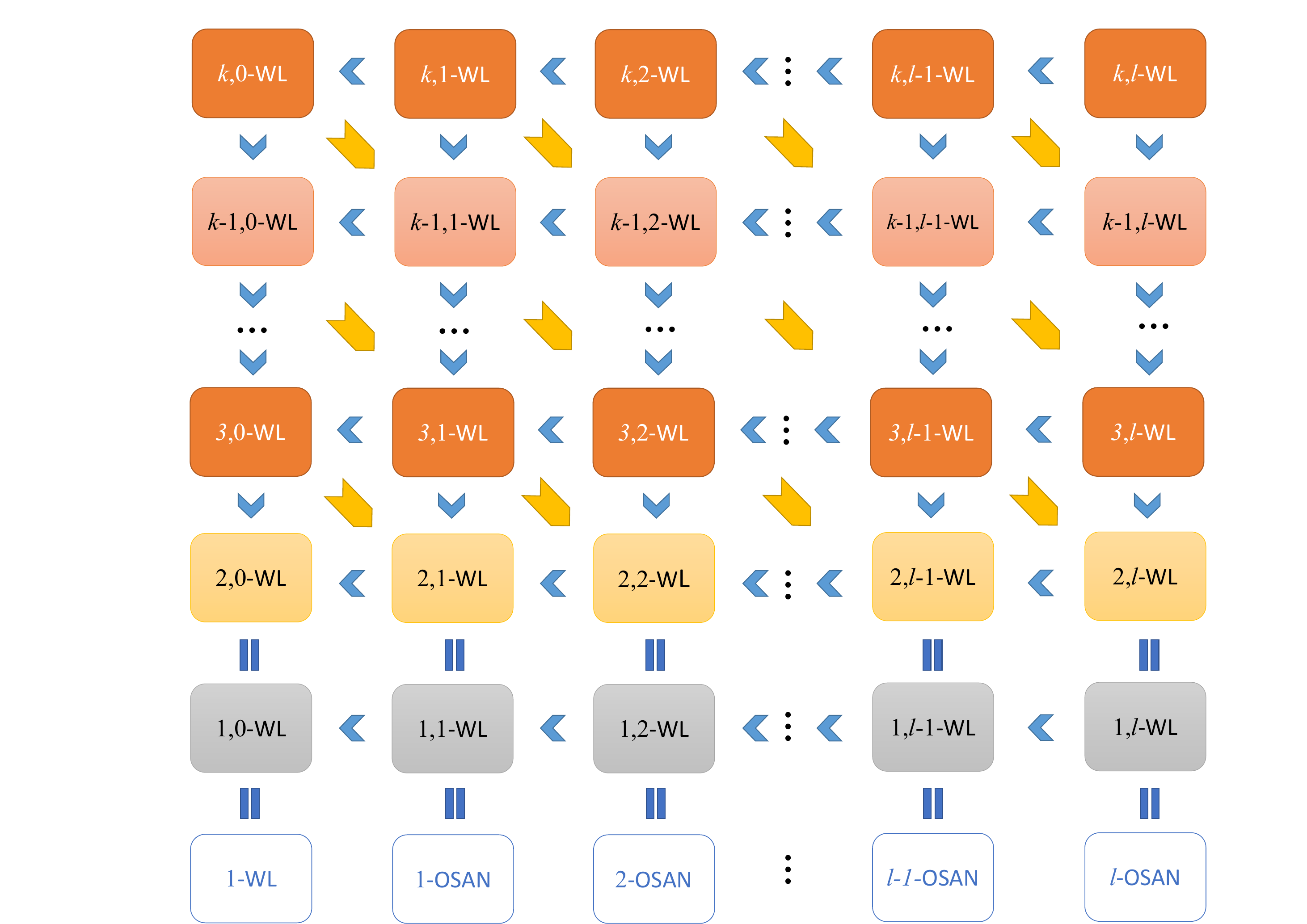}}
\caption{The expressivity hierarchy of $k,l$-WL. The blue arrows indicate \cref{k+1l>kl} and \cref{kl+1>kl}, showing that increasing $k$ and $l$ will strictly increase expressivity. The yellow arrows imply \cref{k-1l+1<=kl}, which states that $k+1,l$-WL is strictly more powerful than $k,l+1$-WL when $k\geq 2$.}
\label{expressivityfigure}
\end{center}
\vskip -0.2in
\end{figure}

Numerous approaches have been proposed to enhance GNN's expressivity. Relational Pooling (RP)~\citep{Murphy2019RelationalPF,CountSubstructures} is a framework to build powerful permutation-invariant models by symmetrizing expressive permutation-sensitive base models. Concretely, RP first feed adjacency matrix to a powerful permutation-sensitive model, like Multi-Layer Perceptron (MLP), to achieve strong expressivity. Then the permutation invariance is guaranteed by averaging or summing over representations under all permutations of node IDs (hence all permutations of adjacency matrix). However, RP is impractical for most real-world graphs due to the $O(n!)$ complexity, where $n$ is the number of nodes. Based on RP, \citet{CountSubstructures} further introduce a local version called Local Relational Pooling (LRP), which performs permutation and averaging within an induced subgraph. LRP's time complexity is reduced to the number of subgraphs $O(n^l)$, where $l$ is the subgraph size. But so far, there is a lack of theoretical analysis on the expressivity of LRP. In this paper, we propose ID-MPNN, a variation of LRP that avoids the time complexity of RP. Instead of using MLP as the base encoder, ID-MPNN runs MPNN on the whole graph. Meanwhile, to improve expressivity, $l$ nodes in the whole graph are labeled with $1,2,...,l$. Through the lens of ID-MPNN, we establish a connection between (local) Relational Pooling and subgraph GNNs.

Furthermore, by replacing MPNN with more powerful base encoders $k$-WL, we propose $k,l$-WL, a universal framework for many expressive GNNs (shown in Figure~\ref{algorithmfigure}). Intuitively, $k,l$-WL can be viewed as running $k$-WL on a graph with $l$ nodes labeled and symmetrizing over $l$. We theoretically analyze the expressivity of $k,l$-WL and build a complete expressivity hierarchy of the algorithms with different $k, l$ as shown in Figure~\ref{expressivityfigure}. As a universal framework, $k,l$-WL incorporates a wide range of existing algorithms and GNN models, including relational pooling, the original $k$-WL, many subgraph GNNs, and some other GNN extensions. 

In summary, the organization of this paper and our main contributions are as follows. 
\begin{enumerate}
    \item \cref{Sectionframework} proposes ID-MPNN to improve LRP. ID-MPNN is further extended to a general framework $k,l$-WL, which incorporates a majority of existing WL and GNN variations, including RP and many subgraph GNNs. 
    \item \cref{Theory} theoretically analyzes the algorithm's expressivity and builds a strict $k,l$-WL expressivity hierarchy, which is more general than the $k$-WL hierarchy. 
    \item \cref{sectionimprovements} discusses practical issues in our $k,l$-WL framework and proposes techniques to improve scalability. 
    \item \cref{experiments} evaluates $k,l$-WL with extensive experiments on both synthetic and real-world datasets. Our models achieve state-of-the-art results on several tasks and significantly outperforms previous works based on RP.
\end{enumerate}

\section{Related Work}\label{RelatedWork}

\paragraph{Graph Neural Network and Weisfeiler-Lehman test} Weisfeiler-Lehman tests are a classical family of algorithms to distinguish non-isomorphic graphs. Previous works have built connections between the expressivity of GNNs and WL hierarchy~\citep{HowPowerfulGNN, UnderstandingSymmetry,WLgoNeural,WLgoSparse}. We propose $k,l$-WL hierarchy, which is finer than $k$-WL and covers a wide range of existing models.

\paragraph{Subgraph GNNs} Subgraph GNNs encode a set of subgraph instead of the original graph for graph representation learning. Through careful designs, they can have both strong expressivity and good scalability. Many subgraph GNNs sample a subgraph for each node~\citep{IDAwareGNN,NGNN,GSN,SUGARSN,GNNAK}. \citet{UnderstandingSymmetry,AComplteHierarchy} upper bound the expressivity of these subgraph GNNs by 3-WL. $I^2$-GNN extracts a subgraph for each connected node pair and boosts the cycle counting power~\citep{I2GNN}. \citet{OSAN} further extract a subgraph for each $l$-tuple of nodes and propose $l$-OSAN, which is equivalent to our ID-MPNN and $2,l$-WL. We propose a more general framework $k,l$-WL to incorporate most existing subgraph GNNs.

\begin{figure}[t]
%\vskip 0.2in
\begin{center}
\centerline{\includegraphics[width=\columnwidth]{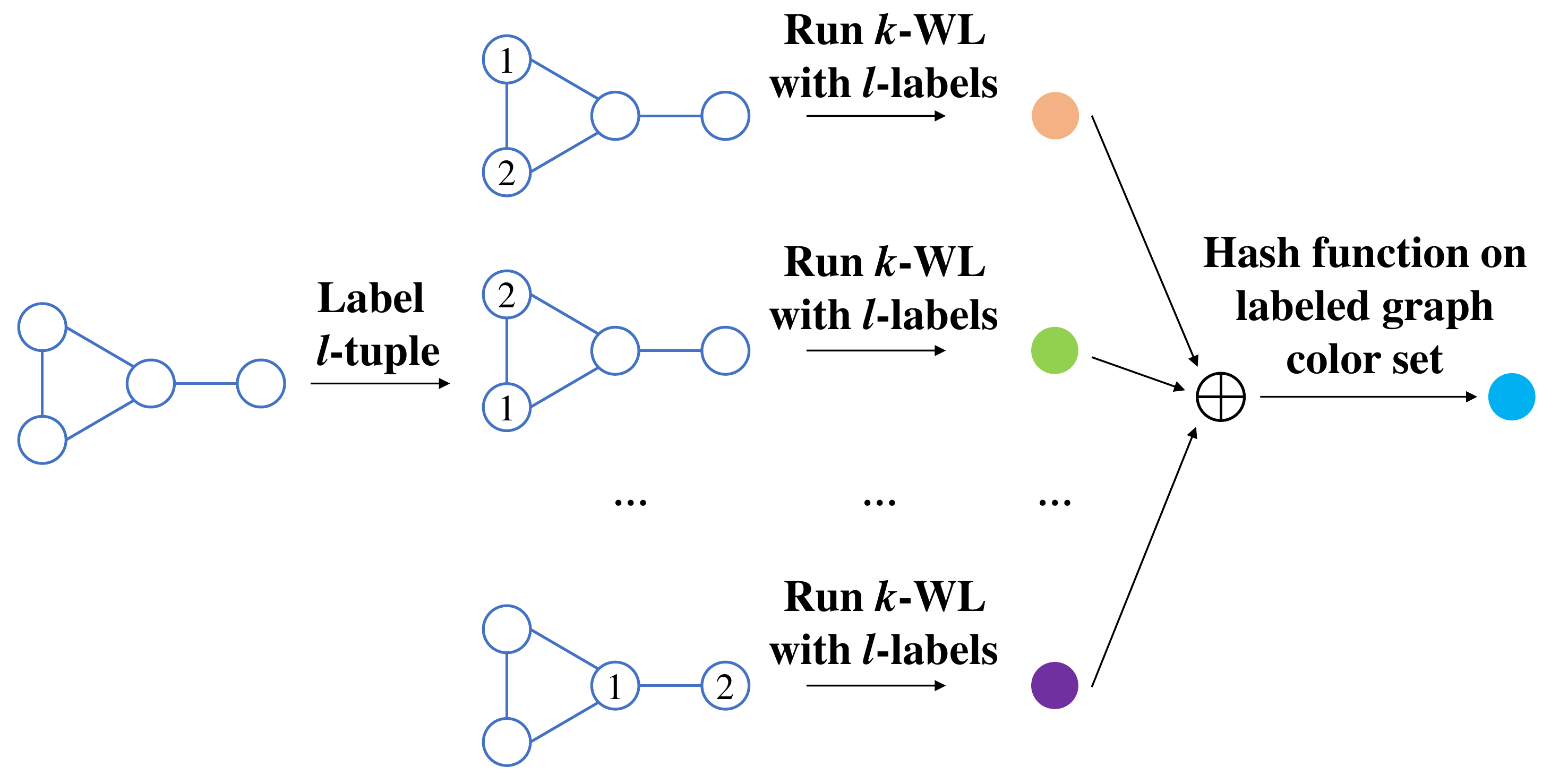}}
\caption{Procedure of $k,l$-WL algorithm. Given an input graph, we label all $l$-tuples with explicit $l$ IDs. Then $k$-WL is performed on every labeled graph, whose initialization considers the isomorphism type of $k$-tuples under the $l$ labels. Finally, a hash function maps the multiset of colors computed in all labeled graphs to output the final color of the original graph. In this example, $l=2$.}
\label{algorithmfigure}
\end{center}
\vskip -0.2in
\end{figure}

\section{Preliminary}
Given an undirected graph $G=(V, E, X)$, where $V, E$ are the node set and edge set respectively, and $X_i$ is the node feature of node $i$, let $N(v, G)=\{u\in V|(u,v)\in E\}$ denote the set of neighbors of node $v$ in graph $G$. Let $[n]$ denote the set $\{1,2,...,n\}$. Given $k$-tuple $\va\in V^k$ and $l$-tuple $\vb \in V^l$, let $\va|\!|\vb$ denote a $k+l$-tuple, the concatenation of $\va$ and $\vb$. Let $\va_i$ denote the $i$-th element in tuple $\va$, $\psi_i(\va, u)$ denote a tuple produced by replacing $\va_i$ with $u$. Let $\va_{a:b}$ denote the slice of tuple $\va$ containing $a,a+1,...,b-1$-th elements, where $a$ is omitted if $a=1$, $b$ is omitted if $b=|\va|+1$, and $|\va|$ is the length of $\va$.

Weisfeiler-Lehman test (1-WL) is a common graph isomorphism test, which also bounds the expressivity of message passing neural networks (MPNNs)~\citep{HowPowerfulGNN}. It assigns a color $c_1^{0}(v, G)$ to each node $v$ in graph $G$ initially according to $X_v$. If the graph has no node feature, the colors of all nodes are the same. Then, 1-WL iteratively updates the node colors. The $t$-th iteration is as follows.
\begin{small}
\begin{equation}
c_1^{t}\!(v, G)\!=\!\text{Hash}(c_1^{t-1}\!(v, G), \mset{c_1^{t-1}\!(u, G)|u\!\in\!N(v, G)}),
\end{equation}
\end{small}
where $c_1^{t}(v, G)$ is the color of node $v$ at the $t$-th iteration. The color of $v$ is updated by its original color and the colors of its neighbors. The color of the whole graph is the multiset of the node colors
\begin{equation}
    c_1^{t}(G)=\text{Hash}(\mset{c_1^{t}(v, G)|v\in V(G)}).
\end{equation}

There still exist non-isomorphic graphs that 1-WL cannot differentiate. $k$-dimensional Weisfeiler-Lehman test has stronger expressivity. It assigns colors to all $k$-tuples and iteratively updates them. The initial color $c_k^{0}(\vv, G)$ of tuple $\vv\in V(G)^k$ is determined by the isomorphism type of tuple $\vv$~\citep{PPGN} (see Appendix~\ref{app:proof}).
At the $t$-th iteration, the color updating scheme is
\begin{multline}
    c_k^{t}(\vv, G)=\text{Hash}(c_k^{t-1}(\vv, G), (\mset{\\
    c_k^{t-1}(\psi_i(\vv, u), G)|u\in V(G)}|i\in [k])), 
\end{multline}
where $\psi_i(\vv, u)$ means replacing the $i$-th element in $\vv$ with $u$. The color of $\vv$ is updated by its original color and the color of its high-order neighbors $\psi_i(\vv, u)$. The color of the whole graph is the multiset of all tuple colors,
\begin{equation}
    c^{t}_k(G)=\text{Hash}(\mset{c^{t}_k(\vv, G)|\vv\in V(G)^k}).
\end{equation}

Note that $k$-WL ($k\ge 2$) takes a different form from $1$-WL. Our discussion mainly focuses on $k\ge 2$ cases. Since $1$-WL has the same expressivity as $2$-WL, we can directly apply the conclusion of $2$-WL to $1$-WL.

\section{$k,l$-WL: A Universal Framework} \label{Sectionframework}

\subsection{Message passing with labels: enhancement by asymmetry}\label{sectionMPNN}

The expressivity of models built by Relational Pooling (RP) depends on the power of the base encoder before symmetrization. Some previous works use MLP \citep{CountSubstructures} and RNN \citep{PG-GNN} to capture relations between nodes. They have high expressivity but little inductive bias for graph data. Moreover, in practical settings, where Local Relational Pooling (LRP) is used, they can only encode induced subgraphs and lose the global information of graph. To solve these problems, we introduce asymmetry to MPNN by assigning nodes unique labels (which are additional features, different from the node indices only to name different nodes) and use \textit{MPNN with labels} as the base encoder.

Given an input graph $G$, MPNN with labels first assigns label $i$ (node ID) to each node $i$ as an additional feature and then runs standard message passing on the labeled graph. %As the label depends on the node order, the model is permutation sensitive and needs RP to restore symmetry. 
MPNN with full labels is expressive enough to encode the full graph information: MPNN can encode the multiset of neighbors into node representations. With node ID labels, each node's representation can identify the neighboring nodes connected to it. Therefore, the representation of the whole graph can identify the connectivity between nodes in the whole graph and thus enable distinguishing non-isomorphic graphs. Moreover, the standard message passing introduces inductive bias on graph data. MPNN with labels can also be easily adapted to the LRP setting. Instead of assigning all nodes unique labels, MPNN with labels can assign $1,2,..,l$ to only $l$ nodes and run message passing on the whole graph. Therefore, MPNN with labels can still capture global graph feature, unlike standard LRP only taking induced subgraphs as input. 

Our ID-MPNN combines MPNN with labels with LRP. An ID-MPNN parameterized by $l$ (called $l$-IDMPNN) explicitly assigns $l$ \textbf{unique} labels (IDs) to $l$ nodes (can be duplicated, thus $n^l$ labeled graphs in total) as an additional feature. Then, a standard message passing is performed on each labeled graph. Finally, the representations of these labeled graphs are aggregated to produce the original graph representation. A contemporary work by \citet{OSAN} also proposes similar models. However, they neither connect ID-MPNN with LRP nor extend ID-MPNN to a more general framework $k,l$-WL as in the following section. % Since the nodes in $k$-tuple can be replicated, in $k,l$-WL, one node may appear more than once in one tuple and thus be assigned multiple labels. However, allowing replication or not will not affect expressivity.??

\subsection{$k,l$-WL: enhancement by higher dimension}\label{klWL}

So far, it is natural to ask: what if we replace the MPNN with other more powerful GNNs? Equivalently, can 1-WL be replaced by higher-dimensional WL tests on the labeled graphs? When all nodes are assigned with unique labels, even MPNN can distinguish all non-isomorphic graphs and thus using more powerful models is meaningless. However, when the number of nodes labeled is fixed, we give a positive answer: if we run $k$-WL ($k\ge 3$) with $l$ labels, it will be more powerful than $1$-WL (with $l$ labels). We name running $k$-WL on labeled graphs with $l$ IDs as $k,l$-WL, which is formally defined as follows.

\begin{enumerate}
    \item Given an $l$-tuple of nodes $\vv$ in graph $G=(V,E, X)$, the labeled graph is $G^{\vv}=(V,E,X^{\vv})$, where $\forall u\in V, X^{\vv}_u=\text{Hash}\tuple{X_u,\mset{i|\vv_i=u,i\in [l]}}$. In other words, node $\vv_i$ will have an extra label $i$. 
    \item $k,l$-WL then runs $k$-WL on each labeled graph $G^{\vv}$.
    \begin{itemize}
        \item $c_k^{0}(\vu, G^{\vv})$, the color of $k$-tuple $\vu$ in graph $G^{\vv}$ is initialized by the isomorphism type of $\vu$ in $G^{\vv}$.
        \item The tuple color at the $t$-th iteration:
        \begin{multline}
            c_k^{t}(\vu, G^{\vv})=\text{Hash}(c_k^{t-1}(\vu, G^{\vv}), \\
                (\mset{
                c_k^{t-1}(\psi_i(\vu,w), G^{\vv})|w\in V}|i\in [k])).
        \end{multline}
        \item The full graph color  at the $t$-th iteration:
        \begin{multline}
            c_k^{t}(G^{\vv})=\text{Hash}(\mset{
                c_k^{t}(\vu, G^{\vv})|\vu\in V^k}).
        \end{multline}
    \end{itemize}
    \item The color of the whole graph is produced by aggregating the colors of all labeled graphs.
    % \begin{small}
    \begin{equation}
    c^{t}_{k,l}(G)\!=\!\text{Hash}\big(\mset{c_k^{t}(G^{\vv})|\vv\in V^l}\big).
    \end{equation}
    % \end{small}
\end{enumerate}

Here, we briefly explain the $k,l$-WL algorithm, a more general and powerful form of $k$-WL. The key difference between $k,l$-WL and traditional $k$-WL lies in the initialization process. In $k$-WL, by applying a hash function, two $k$-tuples will get the same initial color if and only if they are from the same isomorphism class. \iffalse, i.e.:$\forall p,q\in[k]: v_{i_p}=v_{i_q} \Leftrightarrow v_{i_p'}=v_{i_q'}; \phi(v_{i_p})=\phi(v_{j_p}); (v_{i_p},v_{i_q})\in E \Leftrightarrow (v_{i_p'},v_{i_q'})\in E$\fi However, this initialization results in limited initial colors due to the limited size of isomorphism classes within $k$-tuples. For example, $1$-WL assigns all nodes the same color since there is only one isomorphism type of $1$-tuple, which further restricts the expressivity of the following steps in the algorithm. In comparison, at the initialization of $k,l$-WL, unique labels are assigned to $l$-nodes. We then assign colors to these $k$-tuples according to their isomorphism types concerning the labeled tuples. See \cref{app:proof} for details and the mathematical forms. We will show this initialization makes $k,l$-WL more expressive than $k$-WL. Then the update scheme in $k,l$-WL is the same as that of $k$-WL. 

The method to explicitly assign IDs to certain nodes aligns well with the original $k$-WL hierarchy, and we will show in \cref{Theory} that $k,l$-WL is \textbf{strictly} more powerful than $k$-WL when $l>0$. We refer our readers to \cref{app:proof} for detailed proofs, \cref{example} for an example which helps to understand better the effect of explicitly introducing IDs, and \cref{discussionID} for more insights.

Additionally, note that any $k,l$-WL algorithm has a corresponding GNN implementation, which we name as $k,l$-GNN. Theoretically, if the instance contains a base GNN encoder equivalent to $k$-WL, explicitly embeds IDs to $l$-tuples, and uses an injective pooling function, $k,l$-GNN is as powerful as $k,l$-WL. To align with existing methods, we design two practical network architectures to implement $k,l$-GNN as shown in \cref{architecture}.%, and analyze their expressivity in \cref{architecturediscussion}.

\subsection{Unifying existing hierarchies}\label{sectionunify}

Here we briefly discuss the connection between our framework and previous work, including subgraph GNNs, relational pooling, GNN extensions mentioned in \citep{TheoreticalComparison}, and other methods. We refer our readers to \cref{relationshipwithother} for more details. 

Firstly, $k,l$-WL can incorporate all relational pooling (RP) and Local Relational Pooling methods, since node marking is the most general and expressive form and can simulate all other extensions \cite{TheoreticalComparison}. 

Secondly, $k,l$-WL incorporates a wide range of subgraph GNNs. \citet{AComplteHierarchy} shows that all node-based subgraph GNNs fall in one of 6 equivalent classes of Subgraph Weisfeiler-Lehman Tests. \iffalse They show that SSWL is the most expressive form of SWL, which is upper bounded by 3-WL. \fi Remarkably, SWL is exactly $1,1$-WL (and equivalently, $2,1$-WL) in our framework, which reveals the connection between our work and many other subgraph GNNs unified by \citet{AComplteHierarchy}. Moreover, $k,l$-WL also incorporates some subgraph GNNs out of the scope of SWL \cite{AComplteHierarchy}, such as $I^2$-GNN \cite{I2GNN}. Our $1,2$-WL is a slightly more powerful version than $I^2$-GNN since we consider all $2$-tuples to label, while $I^2$-GNN only considers those connected $2$-tuples. $1,2$-WL can distinguish some non-isomorphic graph pairs that SWL and 3-WL fail to discriminate, and the algorithm becomes even more powerful as we increase $k$ or $l$. 

Thirdly, while a number of works such as OSAN \citep{OSAN} are a strict subclass of our framework, there are still some works cannot be incorporated directly. For example, $k,l$-WL operates independently on different labeled graphs and does not include interaction between labeled (sub)graphs as in \citet{GNNAK} and those more expressive SWL variants in \citet{AComplteHierarchy}. Introducing inter-labeled-graph message passing will increase the expressivity of $k,l$-WL, but at the price of additional computation cost. It will also be complicated to analyze its theoretical expressivity if we introduce labeled graph interactions, which we leave for future work.

Finally, our framework incorporates all four kinds of GNN extensions in \citet{TheoreticalComparison}: higher-order WL, counting substructures, injecting local information,
and marking nodes. Due to the limited space, the detailed discussion is in \cref{relationshipwithother}.

\section{The Expressivity Hierarchy of $k,l$-WL}
\label{Theory}

In this section, we theoretically analyze the expressivity of $k,l$-WL. We first define the comparison between algorithms' expressivity:

For any algorithm $A$ and $B$, we denote the final color of graph $G$ computed by them as $c_A(G)$ and $c_B(G)$, we say:
\begin{itemize}
    \item $A$ is \textbf{more powerful} than $B$ ($B\preceq A$) if for any pair of graphs $G$ and $H$, $c_A(G)=c_A(H)\Rightarrow c_B(G)=c_B(H)$. Otherwise, there exists a pair of graphs that $B$ can differentiate while $A$ cannot, denoted as $B\not\preceq A$. 
    \item $A$ is \textbf{as powerful as} $B$ ($A\cong B$) if $B\preceq A \land A\preceq B$.
    \item $A$ is \textbf{strictly more powerful} than $B$ ($B\prec A$) if $B\preceq A \ \land \ A\ncong B$, i.e., for any pair of graphs $G$ and $H$, $c_A(G)=c_A(H)\Rightarrow c_B(G)=c_B(H)$, and there exists at least one pair of graphs $H, G$ s.t. $c_B(G)=c_B(H),c_A(G)\neq c_A(H)$.
    \item $A$ and $B$ are \textbf{incomparable} ($A\nsim B$) if $A \npreceq B \ \land \ B\npreceq A$. In this case, $A$ can distinguish a pair of non-isomorphic graphs that cannot be distinguished by $B$ and vice versa.
\end{itemize}

\subsection{Connection with existing hierarchies}

A special case of $k,l$-WL is that $l=0$ and no extra labels is attached to the graph. We have 
\begin{theorem}
    $\forall k\ge 2$, $k,0$-WL $\cong$ $k$-WL.
\end{theorem}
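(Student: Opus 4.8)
The plan is to unwind the definition of $k,l$-WL in the degenerate case $l=0$ and check that it collapses to an injective recoloring of $k$-WL, which immediately yields $\cong$ in both directions. The starting observation is that $V^0$ contains exactly one element, the empty tuple $()$, so $k,0$-WL produces only a single labeled graph $G^{()}$, and the final aggregation $c_{k,0}^t(G)=\text{Hash}(\mset{c_k^t(G^{\vv})\mid \vv\in V^0})=\text{Hash}(\mset{c_k^t(G^{()})})$ is just $\text{Hash}$ applied to a singleton multiset, hence an injective function of $c_k^t(G^{()})$.

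Next I would show $G^{()}$ carries exactly the same information as $G$ for the purposes of $k$-WL. By definition $X^{()}_u=\text{Hash}(X_u,\mset{i\mid ()_i=u})=\text{Hash}(X_u,\emptyset)$, which is an injective function of $X_u$ applied uniformly to every node. Consequently two $k$-tuples have the same isomorphism type in $G^{()}$ if and only if they have the same isomorphism type in $G$ (the edge relation is untouched and the node features are composed with a common injection), so the initial colors $c_k^0(\cdot,G^{()})$ and $c_k^0(\cdot,G)$ induce the same partition of $V^k$.

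I would then run a straightforward induction on the iteration count $t$: assuming $c_k^{t-1}(\cdot,G^{()})$ and $c_k^{t-1}(\cdot,G)$ induce the same partition of $V^k$ (equivalently, each is an injective relabeling of the other), the identical update rule
\begin{equation}
c_k^{t}(\vu,G^{()})=\text{Hash}\big(c_k^{t-1}(\vu,G^{()}),(\mset{c_k^{t-1}(\psi_i(\vu,w),G^{()})\mid w\in V}\mid i\in[k])\big)
\end{equation}
preserves this property, since the hash argument is a function of the partition data alone. Passing to the graph-level color via the injective multiset hash, $c_k^t(G^{()})$ and $c_k^t(G)$ determine each other. Combining with the first paragraph, $c_{k,0}^t(G)$ is an injective function of $c_k^t(G)$ and vice versa, so for any pair $G,H$ we have $c_{k,0}^t(G)=c_{k,0}^t(H)\iff c_k^t(G)=c_k^t(H)$, which is precisely $k,0\text{-WL}\cong k\text{-WL}$.

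I do not anticipate a genuine obstacle here; the only point requiring mild care is the claim that composing all node features with a fixed injection leaves the isomorphism-type relation on $k$-tuples unchanged, and that every $\text{Hash}$ appearing in the recursion depends on its input only through the induced partition (so that ``injective relabeling'' is propagated rather than lost). Both are routine, so the proof is essentially a definition-chase.
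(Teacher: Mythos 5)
Your proof is correct: the paper does not actually spell out a proof of this theorem (it treats the $l=0$ case as immediate from the definition), and your definition-chase --- the single empty tuple in $V^0$, the uniform injective recoloring $X^{()}_u=\text{Hash}(X_u,\emptyset)$, the induction showing the color partitions on $V^k$ coincide at every iteration, and the injectivity of the final singleton-multiset hash --- is exactly the substantiation the paper leaves implicit. Nothing is missing.
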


Another special case of $k,l$-WL is the $k=1$ case. With no label, $2$-WL is of the same expressivity to $1$-WL. We find that with $l$ labels, the equality still holds.
\begin{theorem}
    $\forall l\ge 0 $, $1,l$-WL $\cong$ $2,l$-WL
\end{theorem}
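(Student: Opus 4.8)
The plan is to reduce the statement to the classical fact that $1$-WL and $2$-WL have exactly the same distinguishing power (already asserted in the Preliminary section) and then show that this equivalence is preserved by the operation ``label all $l$-tuples, run the base test on each labeled graph, and take the multiset over all labeled graphs'' that turns a base WL test into its $k,l$-version. Write $c_1(\cdot)$ and $c_2(\cdot)$ for the stable graph colors produced by $1$-WL and $2$-WL (run to convergence). The classical equivalence says that for \emph{every} pair of (possibly vertex-labeled) graphs $G_1,G_2$ we have $c_1(G_1)=c_1(G_2)\iff c_2(G_1)=c_2(G_2)$; equivalently, there is a single bijection $\phi$, defined on all such graphs, with $c_2(\cdot)=\phi\bigl(c_1(\cdot)\bigr)$, together with a function $f$ (the ``$2$-WL refines $1$-WL'' direction) with $c_1(\cdot)=f\bigl(c_2(\cdot)\bigr)$.

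First I would note that each labeled graph $G^{\vv}$ appearing in the definitions of $1,l$-WL and $2,l$-WL is just an ordinary vertex-labeled graph: its node feature $X^{\vv}_u=\text{Hash}\tuple{X_u,\mset{i\mid \vv_i=u,\,i\in[l]}}$ is an enriched node color and nothing more. Hence the classical equivalence, and in particular the uniform maps $\phi$ and $f$ above, apply verbatim to $G^{\vv}$ for every $\vv\in V^l$. Since the stable $k,l$-WL color of a graph is $c_{k,l}(G)=\text{Hash}\bigl(\mset{c_k(G^{\vv})\mid \vv\in V^l}\bigr)$ with $\text{Hash}$ injective, two graphs receive the same $1,l$-WL (resp.\ $2,l$-WL) color precisely when the corresponding multisets of base colors $\mset{c_1(G^{\vv})}$ (resp.\ $\mset{c_2(G^{\vv})}$) coincide.

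The two inclusions are then symmetric multiset manipulations. For $1,l$-WL $\preceq$ $2,l$-WL: if $\mset{c_2(G^{\vv})\mid\vv}=\mset{c_2(H^{\vv})\mid\vv}$, then, since a multiset equality is witnessed by a bijection of the index sets and $f$ is a fixed function, applying $f$ to every entry gives $\mset{c_1(G^{\vv})\mid\vv}=\mset{c_1(H^{\vv})\mid\vv}$, i.e.\ equal $1,l$-WL colors. For $2,l$-WL $\preceq$ $1,l$-WL: identically, but applying the bijection $\phi$ entrywise to pass from equal multisets of $1$-WL colors to equal multisets of $2$-WL colors. The case $l=0$ is included, since then $V^0$ is a singleton, $G^{\vv}=G$, and the claim collapses to the classical $1$-WL $\cong$ $2$-WL. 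Combining the two directions yields $1,l$-WL $\cong$ $2,l$-WL.

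The one place where actual content hides — and thus the main thing to get right — is the direction $2,l$-WL $\preceq$ $1,l$-WL, which relies on the \emph{uniform} half of the classical statement: that a single bijection $\phi$ carries $c_1(G')$ to $c_2(G')$ for \emph{all} labeled graphs $G'$ simultaneously, so that it can be pushed inside the multiset over the labeled graphs $G^{\vv}$. If one prefers not to invoke the classical theorem as a black box, the task is to re-prove ``$2$-WL is no more powerful than $1$-WL'' for vertex-labeled graphs; this is the usual stability/induction argument, and it goes through unchanged here because the extra node labels $X^{\vv}_u$ are ordinary vertex colors and play no special role in it. Everything else in the proof is the routine ``apply a fixed map inside a multiset'' bookkeeping described above.
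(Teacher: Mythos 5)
Your proof is correct and follows essentially the same route as the paper's: the paper reduces the claim to a general proposition that whenever two base algorithms $A\cong B$ on vertex-labeled graphs, the multisets $\mset{c_A(G^{\vv})}$ and $\mset{c_B(G^{\vv})}$ determine each other, and then invokes the classical $1$-WL $\cong$ $2$-WL. You simply make explicit the uniform-bijection/entrywise-map bookkeeping that the paper leaves implicit, which is a faithful (and slightly more careful) rendering of the same argument.
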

where $1,l$-WL is just $l$-OSAN~\citep{OSAN}.

A variant of $k$-WL test is $k$-Folklore Weisfeiler-Lehman ($k$-FWL) test. It is known that $\forall k\ge 1$, $k$-FWL $\cong$ $k+1$-WL. With labels, the equality still holds:
\begin{theorem}
    $\forall k\ge 1, l\ge 0$, $k,l$-FWL $\cong$ $k+1,l$-WL
\end{theorem}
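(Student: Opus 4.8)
The plan is to mimic the classical proof that $k$-FWL $\cong (k+1)$-WL and verify that the label bookkeeping is preserved throughout. Recall that the classical equivalence is established by a two-sided color-refinement comparison: one shows that the stable coloring produced by $k$-FWL on $k$-tuples and the stable coloring produced by $(k+1)$-WL on $(k+1)$-tuples induce the same partition, via an explicit translation that maps a $k$-tuple $\va$ to its color together with the ``profile'' obtained by sweeping a fresh $(k+1)$-st coordinate over all of $V$. Since $k,l$-FWL and $k+1,l$-WL are, by definition, just $k$-FWL and $(k+1)$-WL run on each labeled graph $G^{\vv}$ and then aggregated via the same injective hash $\mset{\,\cdot\mid \vv\in V^l}$, it suffices to prove the per-labeled-graph statement: for every fixed labeled graph $G^{\vv}$, the $k$-FWL coloring and the $(k+1)$-WL coloring are equivalent, i.e. induce the same equivalence on graphs $G^{\vv}, H^{\vw}$. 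Aggregating over $\vv$ then gives the theorem, because identical multisets of per-graph colors on one side force identical multisets on the other.

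First I would fix notation: write $c^{t}_{k,\mathrm{F}}(\va, G^{\vv})$ for the $k$-FWL color of a $k$-tuple $\va$ and $c^{t}_{k+1}(\vb, G^{\vv})$ for the $(k+1)$-WL color of a $(k+1)$-tuple $\vb$, all on the labeled graph $G^{\vv}$. The key observation is that the labels only enter through the initialization — the isomorphism type of a tuple now records, in addition to adjacency and base features, the multiset of ID-labels attached to each coordinate — while the update recurrences for both $k$-FWL and $(k+1)$-WL are literally unchanged from the unlabeled case. Therefore the standard inductive argument goes through verbatim once we check the base case. For the base case I would verify that the initial $(k+1)$-WL color of $\vb=(b_1,\dots,b_{k+1})$ in $G^{\vv}$ determines, and is determined by, the initial $k$-FWL color of $(b_1,\dots,b_k)$ together with enough information to recover how each $b_i$ relates to $b_{k+1}$ — this is exactly the isomorphism type of the $(k+1)$-tuple, and since the iso-type in the labeled graph refines the unlabeled iso-type only by adding per-coordinate label data (which is already coordinate-wise, hence compatible with slicing), the classical correspondence is preserved.

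For the inductive step I would carry out both directions. ($\preceq$) To show $k,l$-FWL $\preceq k+1,l$-WL: by induction on $t$, show that $c^{t}_{k+1}(\va\|u, G^{\vv})$ is a function of $c^{t}_{k,\mathrm{F}}(\va, G^{\vv})$ refined by the data from the extra coordinate $u$; the crux is that the $k$-FWL update aggregates over the term $\mset{(c^{t-1}_{k,\mathrm{F}}(\psi_1(\va,w),G^{\vv}),\dots,c^{t-1}_{k,\mathrm{F}}(\psi_k(\va,w),G^{\vv}))\mid w\in V}$, which is precisely the information $(k+1)$-WL gathers when it sweeps its last coordinate — the joint tuple of colors obtained by substituting a common $w$ into all $k$ positions. ($\succeq$) The reverse direction is the subtler one: one shows $(k+1)$-WL is no finer than $k$-FWL by arguing that the $(k+1)$-WL color of $\vb$ can be reconstructed from the $k$-FWL colors of all $k$-subtuples of $\vb$ (with the remaining coordinate as the ``pebble''), which again uses only the coordinate-wise structure of the labeled iso-type. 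I expect the main obstacle to be purely notational: keeping straight which coordinate plays the role of the FWL ``broadcast'' coordinate and ensuring the label-augmented iso-type is invariant under the relevant coordinate permutations so that the slicing operations $\va_{a:b}$ and the substitutions $\psi_i$ interact cleanly with the hashed labels $X^{\vv}_u$. No genuinely new idea beyond the classical proof is needed; the content of the theorem is that the label mechanism, being local to initialization and coordinate-wise, does not disturb the FWL/WL dictionary.
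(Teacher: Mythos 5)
Your proposal is correct and follows essentially the same route as the paper: the paper likewise reduces the claim to the classical equivalence $k$-FWL $\cong$ $k{+}1$-WL applied to each labeled graph $G^{\vv}$ (which is just a graph with augmented node features), and then invokes a lifting proposition stating that equivalence of the base algorithms transfers to equality of the aggregated multisets $\mset{c(G^{\vv})\mid \vv\in V^l}$. The only difference is one of emphasis — you re-derive the classical FWL/WL dictionary in the presence of labels, whereas the paper simply observes that labels enter only as node features and cites the known result — but the underlying argument is identical.
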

where $k,l$-FWL runs $k$-FWL on $l$-labeled graphs. See \cref{app:kwl} for more details.

$1,1$-WL is equivalent to the vanilla subgraph Weisfeiler-Lehman test (SWL(VS)) proposed by \citet{AComplteHierarchy}.
\begin{theorem}
    $1,1$-WL $\cong$ SWL(VS).
\end{theorem}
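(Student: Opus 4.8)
The statement to prove is $1,1$-WL $\cong$ SWL(VS), i.e., the $k=l=1$ instance of our framework coincides in expressive power with the vanilla subgraph WL test of \citet{AComplteHierarchy}. The plan is to prove the two inclusions $1,1\text{-WL} \preceq \text{SWL(VS)}$ and $\text{SWL(VS)} \preceq 1,1\text{-WL}$ by unpacking both definitions and exhibiting a color refinement correspondence. Recall that SWL(VS) constructs, for each node $v \in V(G)$, a subgraph $G^{(v)}$ which is $G$ itself with $v$ marked, runs a node-level WL-style refinement on the disjoint union (or the collection) of these marked graphs with updates that aggregate over neighbors within each subgraph, and finally hashes the multiset over $v$ of the multiset of node colors. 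On our side, $1,1$-WL takes each $1$-tuple $\vv=(w)$, forms $G^{\vv}$ with $X^{\vv}_u = \text{Hash}(X_u, \mset{i \mid \vv_i = u})$ — which marks exactly the node $w$ — runs $1$-WL on each $G^{\vv}$, and hashes $\mset{c_1^t(G^{\vv}) \mid \vv \in V^1}$. So both algorithms do the same thing: mark one node, run node-level WL, and symmetrize; the task is to verify the updates and initializations match up to relabeling of colors.

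First I would set up the bijection between the index sets: $1$-tuples $\vv \in V^1$ correspond to marked nodes $v$, and $G^{\vv} \cong G^{(v)}$ as labeled graphs because our initial node feature $X^{\vv}_u = \text{Hash}(X_u, \mset{1})$ for $u = w$ and $\text{Hash}(X_u, \emptyset)$ otherwise encodes exactly the same information as "$u$ is marked / not marked" together with the original feature $X_u$. Second, I would prove by induction on $t$ that there are bijections between the color palettes of the two algorithms such that $c_1^t(u, G^{\vv})$ and the SWL(VS) color of node $u$ in subgraph $G^{(v)}$ refine each other: the base case is the initialization matching just described, and the inductive step uses that the $1$-WL update $\text{Hash}(c_1^{t-1}(u,G^{\vv}), \mset{c_1^{t-1}(u',G^{\vv}) \mid u' \in N(u,G)})$ is the same aggregation as SWL(VS)'s neighbor aggregation within the (same) subgraph. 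Then the graph-level hashes $c_1^t(G^{\vv}) = \text{Hash}(\mset{c_1^t(u,G^{\vv}) \mid u \in V})$ and its SWL(VS) counterpart match, and finally the outermost hash over $\vv \in V^1$ versus over marked nodes $v$ matches by the index bijection. This gives both directions simultaneously and hence $\cong$.

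The main obstacle is a bookkeeping hazard rather than a conceptual one: I must check that the precise update rule of SWL(VS) in \citet{AComplteHierarchy} is genuinely the "vanilla/single-subgraph-channel" one — aggregating only over same-subgraph neighbors with no cross-subgraph messages and no extra "root-distance" or pooling channels — so that it lines up with plain $1$-WL run independently on each $G^{\vv}$. I would state this explicitly as the defining property of SWL(VS) we rely on (quoting their Definition), and note that any discrepancy in which auxiliary features are included at initialization (e.g., whether the root's identity is re-injected each layer) only affects colors up to a fixed relabeling and does not change the induced partition, so the refinement-equivalence argument is unaffected. A secondary point to handle carefully is that $1$-WL's graph color is a multiset of node colors whereas some formulations of SWL track a pair (root color, multiset); since the root is recoverable from the marked feature, these carry the same information, and I would remark on this to avoid an apparent mismatch. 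With these caveats pinned down, the induction is routine.
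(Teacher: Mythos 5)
Your proposal is correct and matches the paper's intent: the paper gives no explicit proof of this theorem, treating the equivalence as essentially definitional (it remarks that SWL(VS) ``is exactly $1,1$-WL'' by unfolding both definitions), and your careful version --- a bijection between $1$-tuples and marked roots, an induction on refinement rounds showing the color partitions coincide, and a check that the two-stage pooling matches --- is precisely the routine verification the paper leaves implicit. The one caveat you rightly flag, that SWL(VS) must be the variant with only same-subgraph neighbor aggregation and no cross-subgraph channels, is indeed the load-bearing assumption; with it pinned down the argument is complete.
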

SWL(VS) further unifies various subgraph GNNs, like Nested GNN~\citep{NGNN} and ID-GNN~\citep{IDAwareGNN}.

\subsection{Expressivity hierarchy of $k,l$-WL}

Similar to WL tests, we can establish a hierarchy for $k,l$-WL in terms of distinguishing non-isomorphic graphs. The full hierarchy is shown in Figure~\ref{expressivityfigure}.

$\forall k\ge 2, l\ge 0$, $k, l$-WL essentially produces colors for $|V|^{k+l}$ tuples. Intuitively, increasing $k$ and $l$ will boost expressivity as more tuple colors will be computed. We show increasing $k$ and $l$ both \textbf{strictly} increases the expressivity.
\begin{theorem}\label{k+1l>kl}
    $\forall k\ge 2,l\ge 0,k,l$-WL $\prec$ $k+1,l$-WL.
\end{theorem}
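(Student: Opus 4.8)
The statement bundles two claims: the non-strict inclusion $k{+}1,l$-WL $\succeq k,l$-WL, and the existence of a separating pair. For the inclusion, the plan is to lift the classical fact that ordinary $k$-WL is refined by ordinary $k{+}1$-WL (valid for $k\ge 2$, and, crucially, for graphs carrying node features) to the labeled setting. The $l$ labels on a labeled graph $G^{\vv}$ are just node features applied uniformly over $G$, so the classical refinement argument — induction on the number of rounds, showing that the final $k{+}1$-WL color of the padded tuple $\vu|\!|\vu_k$ determines the final $k$-WL color of $\vu$, the only delicate step being to recover the $i{=}k$ neighbour multiset of $k$-WL from a move of the extra $(k{+}1)$-st coordinate — carries over verbatim. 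This produces a single function $f$, independent of the graph and of $\vv$, with $c_k(G^{\vv})=f\!\left(c_{k+1}(G^{\vv})\right)$. Since Hash is injective on multisets, $c_{k+1,l}(G)=c_{k+1,l}(H)$ gives $\mset{c_{k+1}(G^{\vv})\mid\vv}=\mset{c_{k+1}(H^{\vw})\mid\vw}$, and applying $f$ elementwise yields $\mset{c_k(G^{\vv})}=\mset{c_k(H^{\vw})}$, i.e.\ $c_{k,l}(G)=c_{k,l}(H)$. Along the way I will also record the easy weak monotonicity $k{+}1,l$-WL $\succeq k{+}1$-WL (forgetting a label is a coarsening of the $k$-WL initialization that is preserved by the update, and $c_{k,l+1}(G)=c_{k,l+1}(H)$ already forces $|V(G)|=|V(H)|$, so the resulting scaled multiset identity can be divided through by $|V(G)|$).

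For strictness I must exhibit $G,H$ with $c_{k,l}(G)=c_{k,l}(H)$ but $c_{k+1,l}(G)\neq c_{k+1,l}(H)$. It does \emph{not} suffice to reuse a pair separated by plain $k{+}1$-WL but not plain $k{+}l$-WL — none exists for $l\ge 1$, since $k{+}1$-WL $\preceq k{+}l$-WL while also $k,l$-WL $\preceq k{+}l$-WL (the latter because the final $k{+}l$-WL coloring of $G$ determines, for each $l$-tuple $\vv$, the $k$-WL coloring of $G^{\vv}$ by freezing the last $l$ coordinates). So the separating pair must genuinely use that $l$ labels boost $k{+}1$-WL strictly more than $k$-WL. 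I would take a Cai--F\"urer--Immerman pair $G=\mathrm{CFI}(\mathcal G)$, $H=\widetilde{\mathrm{CFI}}(\mathcal G)$ over a base graph $\mathcal G$ of tree-width exactly $k{+}l{+}1$ (a clique, or a leaner multipede-style gadget so that one label pins one base element), arranged so that fixing the "right" $l$ graph-vertices collapses the effective base to tree-width $k{+}1$ without touching the twist edge. Then (i) for that distinguished $l$-tuple $\vv^\ast$, $G^{\vv^\ast}$ is effectively CFI over a tree-width-$(k{+}1)$ base, which $k{+}1$-WL separates from its twist; since the labelings that do \emph{not} help are matched between $G$ and $H$ by the canonical vertex bijection $\beta\colon V(G)\to V(H)$ (which respects automorphisms), the multisets $\mset{c_{k+1}(G^{\vv})}$ and $\mset{c_{k+1}(H^{\vw})}$ differ, giving $c_{k+1,l}(G)\neq c_{k+1,l}(H)$; (ii) for \emph{every} $l$-tuple the effective base still has tree-width $\ge k{+}1$, so $k$-WL never detects the twist, $c_k(G^{\vv})=c_k(H^{\beta(\vv)})$ for all $\vv$, and hence $\mset{c_k(G^{\vv})}=\mset{c_k(H^{\vw})}$ and $c_{k,l}(G)=c_{k,l}(H)$.

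The main obstacle is the quantitative CFI bookkeeping behind (i) and (ii): a CFI gadget spans several graph vertices, so I must pin down exactly how many base elements a budget of $l$ labels can fix, and then invoke the sharp tree-width characterization of the CFI WL-dimension to argue that the \emph{most favorable} placement of $l$ labels still keeps $k$-WL below threshold while letting $k{+}1$-WL cross it; promoting these per-labeling statements to the multiset (in)equalities is then a routine orbit-counting argument through $\beta$. As consistency checks: for $l=0$ this is the classical strict separation $k$-WL $\prec k{+}1$-WL; for $k=2$ it reduces to known facts about subgraph WL, since $2,1$-WL $\cong 1,1$-WL $\cong$ SWL(VS), which is strictly weaker than $3$-WL $\preceq 3,1$-WL.
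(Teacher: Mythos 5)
Your first half---the non-strict inclusion $k,l$-WL $\preceq$ $k{+}1,l$-WL---is essentially the paper's own argument (Lemma~\ref{lem:kl<=k+1l}): an induction on rounds showing that $c_{k+1}^t(\vv,G^{\vu})$ determines $c_k^t(\vv_{:k+1},G^{\vu})$ on every labeled graph, followed by pushing this single, graph-independent map through the two-level multiset pooling. That part is fine.

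The strictness half is where the gap is. The paper does \emph{not} construct a new separating pair for this theorem; it imports one. By the proposition taken from Lemma~27 of \citet{OSAN}, the CFI pair over the clique $K_{k+l+1}$ is distinguished by $2,k{+}l{-}1$-WL but not by $k{+}l$-WL. Chaining Lemma~\ref{lem:kl+1<=k+1l} one way gives $2,k{+}l{-}1$-WL $\preceq 3,k{+}l{-}2$-WL $\preceq\cdots\preceq k{+}1,l$-WL, so $k{+}1,l$-WL separates the pair; chaining it the other way gives $k,l$-WL $\preceq k{+}1,l{-}1$-WL $\preceq\cdots\preceq k{+}l$-WL, so $k,l$-WL does not. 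This is exactly the ``pair separated by a low-order \emph{labeled} WL but not by plain $k{+}l$-WL'' that you correctly identify as necessary, but you then set out to rebuild it from scratch and defer precisely the step that carries the content (``the quantitative CFI bookkeeping''). As written your sketch also has concrete problems: (i) under the CFI facts the paper relies on, $m$-WL already fails on CFI over $K_{m+1}$, whose treewidth is $m$, so a collapsed base of treewidth $k{+}1$ would defeat $k{+}1$-WL rather than be separated by it, and your starting base of treewidth $k{+}l{+}1$ is one too large (the correct object is $K_{k+l+1}$, of treewidth $k{+}l$); (ii) ``effective treewidth $\ge k{+}1$ after fixing any $l$ vertices'' is not the hypothesis of the indistinguishability lemma---one needs the separator condition, and, more importantly, one must convert the per-tuple claim $c_k(G^{\vv})=c_k(H^{\beta(\vv)})$ into an explicit Duplicator strategy on the labeled graphs, which is what the paper actually does (via the $C_k$ pebble game with a branch-wise strategy) when it proves the harder Theorem~\ref{k-1l+1<=kl}; (iii) a ``canonical vertex bijection $\beta$ respecting automorphisms'' between two non-isomorphic graphs is not something you can simply invoke, so the orbit-matching that turns per-tuple equalities into equality of the multisets $\mset{c_k(G^{\vv})}$ and $\mset{c_k(H^{\vw})}$ still needs an argument. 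The idea is salvageable---the paper's $\gamma_k(K_{k+l+1})$ construction realizes a strengthened version of it---but in its current form the strictness claim is not proved.
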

\begin{theorem}\label{kl+1>kl}
    $\forall k\ge 1,l\ge 0,k,l$-WL $\prec$ $k,l+1$-WL.
\end{theorem}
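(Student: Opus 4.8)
The statement is a strict‑containment ($\prec$) claim, so I would prove it in two halves: first that $k,l$-WL $\preceq k,l+1$-WL, and then strictness, i.e.\ that some pair of graphs is separated by $k,l+1$-WL but not by $k,l$-WL. By the equivalence $1,l$-WL $\cong 2,l$-WL stated above it is enough to handle $k\ge 2$, the $k=1$ case being identical; and since two graphs with different vertex counts are separated by both tests (the relevant multisets have different sizes), I may assume $|V(G)|=|V(H)|=n$.

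For the first half I would run the standard ``simulation'' argument, adapted from the way $(k+1)$-WL simulates $k$-WL, to recover the $l$-labeled colours from the $(l+1)$-labeled ones. Write an $(l+1)$-tuple as $(\vv',w)$ with $\vv'\in V^l$ and $w\in V$; the point is that $G^{\vv'}$ is exactly $G^{(\vv',w)}$ with the single label $l+1$ erased. I would first show, by induction on the iteration count $t$, that there is a function $g_t$ depending on neither $G$ nor $\vv'$ nor $w$ such that $c_k^{t}(\vu,G^{\vv'})=g_t\big(c_k^{t}(\vu,G^{(\vv',w)})\big)$ for every $\vu\in V^k$: the base case $t=0$ is just ``forget label $l+1$'' at the level of isomorphism types (injectivity of the feature Hash lets one strip the label and re‑hash), and the inductive step uses injectivity of the $k$-WL update Hash to read off from $c_k^{t}(\vu,G^{(\vv',w)})$ the entire argument of its $t$-th update, to each component of which the induction hypothesis applies. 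Hashing over $\vu\in V^k$ then produces $h_t$ with $c_k^{t}(G^{\vv'})=h_t\big(c_k^{t}(G^{(\vv',w)})\big)$. Since $(\vv',w)\mapsto\vv'$ is exactly $n$-to-one from $V^{l+1}$ onto $V^l$, applying $h_t$ entrywise to the multiset $\mset{c_k^{t}(G^{(\vv',w)}):(\vv',w)\in V^{l+1}}$ defining $c_{k,l+1}^{t}(G)$ yields the multiset $\mset{c_k^{t}(G^{\vv'}):\vv'\in V^{l}}$ with every multiplicity scaled by $n$; dividing multiplicities by $n$ and re‑hashing recovers $c_{k,l}^{t}(G)$ from $c_{k,l+1}^{t}(G)$ through a map independent of the graph. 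Letting $t$ run past the common stabilization bound gives $k,l$-WL $\preceq k,l+1$-WL.

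For strictness, note that $c_{k,l+1}(G)$ is the hash of $\mset{c_k(G^{\vv}):\vv\in V^{l+1}}$ and Hash is injective, so it suffices to exhibit, for every $k\ge 2$ and $l\ge 0$, a pair $(G,H)$ with $\mset{c_k(G^{\vv}):\vv\in V^{l+1}}\ne\mset{c_k(H^{\vv}):\vv\in V^{l+1}}$ but $\mset{c_k(G^{\vv'}):\vv'\in V^{l}}=\mset{c_k(H^{\vv'}):\vv'\in V^{l}}$ -- intuitively, $(G,H)$ fools $k$-WL with $l$ node‑markings, yet one further marking of $G$ makes it $k$-WL‑distinguishable from $H$ with any single further marking. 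For the base case $k=2$ (equivalently $1$), $l=0$ one can take $G=C_6$ and $H=C_3\sqcup C_3$: $1$-WL, hence $2$-WL, cannot tell them apart, but after marking one vertex $1$-WL sees a $6$-cycle on one side and a $3$-cycle on the other, so $1,1$-WL separates them. For general $(k,l)$ the natural candidate is a CFI‑type pair over a base graph of WL‑hardness $k+l+1$: a node‑marking behaves like an extra pebble, so $l$ markings together with $k$-WL stay below the threshold (forcing equality of the $l$-tuple multisets) while some $(l+1)$-marking together with $k$-WL crosses it. I expect this last step to be the main obstacle: choosing the base graph and verifying, uniformly in $(k,l)$, both the equality of the $l$-tuple multisets and the inequality of the $(l+1)$-tuple multisets requires a pebble‑game / Ehrenfeucht--Fra\"iss\'e analysis adapted to labeled graphs, whereas the first half is essentially routine bookkeeping.
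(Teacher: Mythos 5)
Your first half (the simulation showing $k,l$-WL $\preceq k,l+1$-WL) is correct and is essentially the paper's own argument (Lemma~\ref{lem:kl<=kl+1}): induct on $t$ to show that $c_k^{t}(\vu,G^{(\vv',w)})$ determines $c_k^{t}(\vu,G^{\vv'})$ by erasing label $l+1$, then push this through the two levels of multiset pooling. Your observation about the $n$-to-one collapse of $V^{l+1}$ onto $V^l$ and the multiplicity rescaling is a detail the paper leaves implicit, and your reduction to $k\ge 2$ via $1,l$-WL $\cong 2,l$-WL is fine.

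The strictness half, however, has a genuine gap, and it sits exactly where you flag the ``main obstacle.'' Your guiding heuristic --- ``a node-marking behaves like an extra pebble, so $l{+}1$ markings together with $k$-WL crosses the $(k{+}l{+}1)$-WL threshold'' --- is precisely what is \emph{not} true in general: the paper proves $k,l$-WL $\prec k+l$-WL strictly (Corollary~\ref{k+l>=kl}), so $k$-WL with $l+1$ markings is strictly weaker than $(k{+}l{+}1)$-WL, and a generic pair at the $(k{+}l{+}1)$-WL hardness threshold need not be separated by it. What is needed is a pair with two properties simultaneously: it fools $(k{+}l)$-WL (hence, by $k,l$-WL $\preceq k+l$-WL, it fools $k,l$-WL), yet it is separated by the much weaker test $2,(k{+}l{-}1)$-WL (hence, by the chain $2,k{+}l{-}1$-WL $\preceq 3,k{+}l{-}2$-WL $\preceq\cdots\preceq k,l{+}1$-WL of Lemma~\ref{lem:kl+1<=k+1l}, it is separated by $k,l{+}1$-WL). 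The existence of such a pair --- the CFI construction over the clique $K_{k+l+1}$, for which individualizing vertices collapses the required WL dimension --- is the nontrivial content; the paper imports it as Lemma~27 of \citet{OSAN} rather than re-deriving it, and its proof is a labeled pebble-game argument of the kind you defer. So your proposal correctly identifies the witness family and the two multiset conditions to verify, but the decisive lemma is asserted by analogy rather than proved, and the analogy as stated points the wrong way. Your $k=2,l=0$ base case ($C_6$ vs.\ $C_3\sqcup C_3$) is correct but does not bootstrap to general $(k,l)$ without that lemma.
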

% ?? Corollary k,l<=k+l
% Other Corollarys.
However, with fixed $k+l$ and number of colors, a larger $k$ will lead to more message passing processes between tuples and stronger expressivity, shown in the following.
\begin{theorem} \label{k-1l+1<=kl}
    $\forall k\ge 2, l\ge 0$, $k, l+1$-WL $\prec$ $k+1,l$-WL.
\end{theorem}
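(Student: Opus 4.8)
The plan is to prove \cref{k-1l+1<=kl} in two parts: first the inclusion $k,l+1\text{-WL} \preceq k+1,l\text{-WL}$, and then strictness by exhibiting a pair of graphs separated by $k+1,l$-WL but not by $k,l+1$-WL.

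\textbf{Part 1: the inclusion.} The key idea is that one labeled node can be ``absorbed'' into one extra dimension of the tuple. Concretely, I would set up a simulation argument: fix a graph $G$ and an $(l+1)$-tuple $\vv = \vv' \,||\, (v_{l+1})$ where $\vv' \in V^l$. The labeled graph $G^{\vv}$ used by $k,l+1$-WL puts labels $1,\dots,l$ on $\vv'$ and label $l+1$ on $v_{l+1}$, then runs $k$-WL on it. I claim this is dominated by $k+1,l$-WL which, on the labeled graph $G^{\vv'}$ (labels $1,\dots,l$ on $\vv'$), runs $(k{+}1)$-WL and in particular computes colors of $(k{+}1)$-tuples of the form $\vu \,||\, (v_{l+1})$ for $\vu \in V^k$. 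The heart of the argument is a homomorphism/refinement lemma: the stable coloring that $k+1,l$-WL assigns to tuples $\vu\,||\,(w)$ in $G^{\vv'}$, restricted to a fixed last coordinate $w = v_{l+1}$, refines the stable coloring that $k,l+1$-WL assigns to $\vu$ in $G^{\vv}$. This is proved by induction on the iteration count $t$: at initialization, the isomorphism type of $\vu\,||\,(w)$ in $G^{\vv'}$ determines the isomorphism type of $\vu$ in $G^{\vv'\,||\,(w)}=G^{\vv}$ (the extra coordinate records exactly the ``label $l+1$ on $w$'' information, plus the $w$-to-$\vu$ adjacency which $k,l+1$-WL also sees); for the inductive step, each neighbor aggregation $\psi_i(\vu,x)$ for $i\le k$ that $k,l+1$-WL performs on $G^{\vv}$ is mirrored by the corresponding $\psi_i(\vu\,||\,(w),x)$ aggregation of $(k{+}1)$-WL on $G^{\vv'}$ (the extra $(k{+}1)$-st aggregation direction only gives $k+1,l$-WL \emph{more} information, so refinement is preserved). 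One must check that summing/hashing over the choice of $v_{l+1}=w$ on the $k+1,l$-WL side recovers the multiset over labeled graphs $G^{\vv}$ on the $k,l+1$-WL side; this follows because iterating $w$ over $V$ on one side corresponds to iterating the last label position over $V$ on the other. I would then lift tuple-color refinement to graph-color refinement via the standard observation that if $A$'s stable tuple-coloring refines a coloring from which $B$'s graph color is a function, then $B \preceq A$. Since $k \ge 2$, $(k{+}1)$-WL genuinely has the extra aggregation direction available, which is where the hypothesis $k\ge 2$ enters (for $k=1$ the WL update has a different shape, as noted in the Preliminary).

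\textbf{Part 2: strictness.} For the separating example I would invoke \cref{k+1l>kl}: it is already known that $k,l'\text{-WL} \prec k+1,l'\text{-WL}$ for every $l' \ge 0$. The natural move is to find, for the specific parameters, graphs $G,H$ with $c_{k,l+1}(G) = c_{k,l+1}(H)$ yet $c_{k+1,l}(G)\neq c_{k+1,l}(H)$. A clean route: by \cref{kl+1>kl} and \cref{k+1l>kl} both applied, we know $k,l+1\text{-WL} \prec k+1,l+1\text{-WL}$ and we want to ``trade'' the strictness down. I would instead argue directly with known hard instances — e.g. the Cai–F\"urer–Immerman-type constructions that separate consecutive levels of the WL hierarchy — showing that the CFI gadget graphs distinguishing $k$-WL from $(k{+}1)$-WL remain indistinguishable by $k,l+1$-WL (because adding $l+1$ labels to a vertex-transitive-enough CFI graph, then symmetrizing, does not help a $k$-dimensional test beat the gadget's $(k{+}1)$-WL-hardness — the labels can be absorbed but only give the power of $k,l+1 \preceq k+1,l$, which by Part 1 we've \emph{already} bounded, so we need the converse-direction hard pair) while $k+1,l$-WL distinguishes them (it already does with $l=0$ by the classical CFI result, and adding labels only increases power by \cref{kl+1>kl}-style monotonicity in $l$). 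The cleanest packaging is: take $G,H$ a pair separated by $(k{+}1)$-WL but not $k$-WL such that moreover neither $G$ nor $H$ is separated by $k,l+1$-WL — for the standard CFI construction one can always pad/choose the gadget size large relative to $l$ so that any $l+1$ chosen marked vertices lie in a ``generic'' position and the symmetrization over all $(l{+}1)$-tuples washes out, leaving $k,l+1$-WL no better than $k$-WL on this pair. Then $c_{k+1,l}(G)\neq c_{k+1,l}(H)$ (by CFI-hardness for $(k{+}1)$-WL $\cong$ $k$-FWL, inherited at $l\ge 0$), while $c_{k,l+1}(G)=c_{k,l+1}(H)$, giving strictness.

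\textbf{Main obstacle.} The hard part will be Part 2, specifically verifying that the separating pair is genuinely \emph{not} distinguished by $k,l+1$-WL. Since $k,l+1$-WL is substantially more powerful than plain $k$-WL, one cannot just cite classical CFI; I would need either (i) a tailored construction where the marked-vertex symmetrization provably collapses — likely using a highly symmetric (e.g. arc-transitive on the relevant gadget) CFI variant so that marking any $l+1$ vertices and summing over all choices yields identical multisets for $G$ and $H$ — or (ii) an automorphism-counting argument showing $G$ and $H$ have matching orbit statistics on $(l{+}1)$-tuples at every $k$-WL-refined level. I expect route (i), with an explicit CFI gadget of girth/size chosen as a function of $k$ and $l$, to be the cleanest, and this construction plus its verification is where the real work lies; Part 1 is a fairly mechanical induction once the bookkeeping between ``label on a node'' and ``extra tuple coordinate'' is set up carefully.
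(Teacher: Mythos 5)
Your Part 1 is essentially the paper's argument (its Lemma~\ref{lem:kl+1<=k+1l}): the extra tuple coordinate absorbs the extra label, proved by induction on the iteration count at initialization and update, then lifted to graph colors through the pooling step. That half is sound and matches the paper.

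Part 2 contains a genuine gap, and the specific route you favor would fail. You propose taking a standard CFI pair separated by $(k{+}1)$-WL but not $k$-WL and padding the gadget so that "the symmetrization over all $(l{+}1)$-tuples washes out, leaving $k,l{+}1$-WL no better than $k$-WL on this pair." But labels on CFI graphs do \emph{not} wash out: the paper's cited result (Lemma~27 of OSAN) is precisely that $2,k$-WL distinguishes the CFI pair over $K_{k+2}$ that $(k{+}1)$-WL cannot, i.e.\ marking vertices genuinely breaks CFI hardness. Worse, the two tests you are trying to separate sandwich the same interval: by \cref{k-1l+1<=kl} applied repeatedly, $2,k{+}l{-}1$-WL $\preceq k,l{+}1$-WL $\preceq k{+}1,l$-WL $\preceq (k{+}l{+}1)$-WL. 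So any standard CFI pair hard enough to defeat $k,l{+}1$-WL (which dominates $2,k{+}l{-}1$-WL) is obtained by pushing the base graph's connectivity past $k{+}l{+}1$, at which point it also defeats $k{+}1,l$-WL; and any pair that $k{+}1,l$-WL cracks via this interval is already cracked by $k,l{+}1$-WL. No padding of a single-level CFI construction can separate the two.

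What the paper actually does is a two-level nested construction. The outer level is a CFI-style graph $\gamma_k(K_{k+l+1})$ over $K_{k+l+1}$, chosen just large enough that $l{+}1$ labeled vertices must leave one basic block untouched, so player~2 can still apply the twist automorphism there. The new ingredient is that each "binary-number" node of every block carries an attached branch which is itself a copy of $G_k$ or $H_k$ --- a CFI pair that $k$-WL cannot distinguish but $(k{+}1)$-WL can. The indistinguishability proof is a pebble-game argument in which player~2 composes the outer twist automorphism with the $C_k$-game winning strategy played independently inside each branch; the distinguishability proof observes that $(k{+}1)$-dimensional tuples can resolve each branch's isomorphism class, collapsing the instance to the unbranched $\omega(K_{k+l+1})$ pair, which $2,k{+}l{-}1$-WL $\preceq k{+}1,l$-WL separates. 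Your proposal correctly flags that verifying non-distinguishability by $k,l{+}1$-WL is the crux, but neither of your routes (i) or (ii) leads to this nesting of a $k$-WL-hard gadget inside an outer gadget sized to survive $l{+}1$ markings, which is the idea the proof actually requires.
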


With these theorems, we can prove a lot of useful corollaries. For example, $k, l$-WL is less expressive than $k+l$-WL:
\begin{corollary} \label{k+l>=kl}
    $\forall k\ge 2, l\ge 1$, $k,l$-WL $\prec$ $k+l$-WL. 
\end{corollary}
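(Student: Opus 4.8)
The plan is to prove the corollary by a short telescoping argument: each of the $l$ labels can be absorbed into the tensor dimension at the cost of a strict gain in power, using \cref{k-1l+1<=kl} repeatedly, and then the resulting $k+l,0$-WL is identified with $k+l$-WL via the earlier theorem that $k,0$-WL $\cong k$-WL.

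First I would record that the relation $\prec$ is transitive: if $A\prec B$ and $B\prec C$, then $A\preceq C$ follows from transitivity of $\preceq$, and if we had $A\cong C$ then $C\preceq A\preceq B$ would give $C\preceq B$, hence $B\cong C$, contradicting $B\prec C$; so $A\prec C$. The same computation shows that composing $\prec$ with $\cong$ on either side again yields $\prec$.

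Next, I apply \cref{k-1l+1<=kl} with its dimension parameter equal to $k+i$ and its label parameter equal to $l-i-1$. The hypotheses require the dimension parameter to be at least $2$, which holds since $k\ge 2$ and $i\ge 0$, and the label parameter to be nonnegative, which holds precisely when $i\le l-1$. This gives, for each $i=0,1,\dots,l-1$, the strict inequality $(k+i),(l-i)$-WL $\prec$ $(k+i+1),(l-i-1)$-WL. Since $l\ge 1$, there is at least one such step. Chaining these $l$ strict inequalities by transitivity of $\prec$ yields $k,l$-WL $\prec$ $k+l,0$-WL. Finally, as $k+l\ge 2$, the theorem $k,0$-WL $\cong k$-WL gives $k+l,0$-WL $\cong k+l$-WL, and composing with the previous chain produces $k,l$-WL $\prec$ $k+l$-WL, as claimed.

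The main obstacle is purely bookkeeping: one must check that the side conditions ($k+i\ge 2$ and $l-i-1\ge 0$) of \cref{k-1l+1<=kl} are met at every index $i$, and that strictness survives the composition of the whole chain — both handled by transitivity of $\prec$. No new graph constructions are needed; all the distinguishing examples are already packaged inside \cref{k-1l+1<=kl}. A minor point worth double-checking is the precise indexing convention in \cref{k-1l+1<=kl} (which parameter plays the role of ``$l+1$''), so that the telescoping endpoints come out to be exactly $k,l$-WL and $k+l,0$-WL rather than off by one.
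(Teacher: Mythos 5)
Your proof is correct and is essentially the argument the paper intends: the same telescoping of \cref{k-1l+1<=kl} (which the paper itself uses in the form $2,k+l-1$-WL $\preceq 3,k+l-2$-WL $\preceq \dots \preceq k+1,l$-WL when deriving its other corollaries), followed by the identification $k+l,0$-WL $\cong k+l$-WL. The side-condition bookkeeping and the transitivity of $\prec$ (including composition with $\cong$) are handled correctly.
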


\iffalse
In general, LT pooling leads to stronger expressivity than TL pooling,
\begin{theorem}\label{TL<=LT}
    $\forall k\ge 1, l\ge 0$, $k, l$-WL(TL)$\preceq$ $k,l$-WL(LT).
\end{theorem}
\fi 

Moreover, $l+1$-WL is not more powerful than $2,l$-WL~\citep{OSAN}.
\begin{theorem}
    $\forall l\ge 1$, $2,l$-WL $\not\preceq$ $l+1$-WL.
\end{theorem}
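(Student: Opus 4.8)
The statement $2,l\text{-WL}\not\preceq (l+1)\text{-WL}$ unwinds, via the definitions in \cref{Theory}, to: for every $l\ge 1$ there is a non-isomorphic pair $G_l,H_l$ whose $(l+1)$-WL colors coincide but whose $2,l$-WL colors differ. Since the earlier theorem gives $1,l$-WL $\cong$ $2,l$-WL, the plan is to produce such a pair that is separated by $1,l$-WL (i.e.\ by $l$-OSAN) instead; this is the lighter object to reason about, since $1,l$-WL runs plain $1$-WL on each labeled graph $G^{\vv}$ and then hashes the multiset $\mset{c_1(G^{\vv}) : \vv\in V^l}$.

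For the witnessing pair I would use Cai--Fürer--Immerman (CFI) graphs. Fix a base graph $B$ of treewidth large enough that the CFI theorem applies at level $l+1$ (a complete graph $K_{l+O(1)}$ or a suitable grid works, and such $B$ exists for every $l$), and let $G_l,H_l$ be the untwisted and twisted CFI gadget graphs over $B$. These are non-isomorphic, and by the CFI theorem they receive the same color under $(l+1)$-WL, which discharges the first half of the requirement; note that $(l+2)$-WL does separate them, consistently with \cref{k+l>=kl}. For the base case $l=1$ one can bypass CFI altogether: take $G_1=2C_3$ and $H_1=C_6$, which share a $1$-WL $\cong$ $2$-WL color, yet are separated by $1,1$-WL, which equals SWL(VS) and is strictly more powerful than $1$-WL — a warm-up that already exhibits the mechanism.

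It then remains to show $1,l$-WL separates $G_l$ from $H_l$. The approach is to exhibit an explicit $l$-tuple $\vv^\star$ of vertices of $G_l$ that "pebbles" a separator of $B$, so that after attaching the IDs $1,\dots,l$ the parity constraint defining the CFI twist becomes \emph{locally} visible; then ordinary $1$-WL on $G_l^{\vv^\star}$ stabilizes to a coloring that does not arise as $c_1(H_l^{\vv})$ for any of the labelings $\vv$ of $H_l$, so the multisets $\mset{c_1(G_l^{\vv})}$ and $\mset{c_1(H_l^{\vv})}$ differ and hence $c_{1,l}(G_l)\neq c_{1,l}(H_l)$; combined with $1,l$-WL $\cong$ $2,l$-WL this concludes. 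The main obstacle is precisely this last step — certifying that freezing $l$ well-chosen vertices lets unadorned $1$-WL "see through" a twist that $(l+1)$-WL cannot. The clean way to make it rigorous is the bijective pebble game: $1$-WL with $l$ committed labels corresponds to a restricted $(l+2)$-pebble game in which $l$ pebbles are placed once and never moved, and one must arrange $B$ and $\vv^\star$ so that Spoiler wins this restricted game on $\{G_l,H_l\}$ although Duplicator survives $l+1$ freely moving pebbles. The delicate part of that argument is the bookkeeping showing the witnessing stable color genuinely never occurs on the $H_l$ side, rather than merely that the two stable partitions superficially differ; everything else is standard.
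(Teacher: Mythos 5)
Your overall strategy matches the paper's: the witnessing pair is a Cai--F\"urer--Immerman pair over a complete graph (the paper uses $H_{l+1}=\chi(K_{l+2})$ and its twist $\tilde\chi(K_{l+2})$), the $(l+1)$-WL indistinguishability comes from the standard CFI theorem, and the reduction from $2,l$-WL to $1,l$-WL via their equivalence is fine, as is your $l=1$ warm-up with $C_6$ versus $2C_3$. However, there is a genuine gap at exactly the point you flag yourself: you never actually establish that $1,l$-WL (equivalently $2,l$-WL) distinguishes the CFI pair. That claim is the entire content of the theorem --- everything else is boilerplate --- and your proposal reduces it to ``exhibit an $l$-tuple $\vv^\star$ pebbling a separator so that the twist becomes locally visible, then win a restricted pebble game,'' without constructing $\vv^\star$, without verifying that $l$ \emph{static} labels (as opposed to $l$ freely reusable pebbles) suffice against a twist that survives $l+1$ moving pebbles, and without the multiset-level argument that the resulting stable color of $G_l^{\vv^\star}$ occurs in $\mset{c_1(G_l^{\vv})\mid \vv\in V^l}$ but in no $\mset{c_1(H_l^{\vv})\mid\vv\in V^l}$. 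The paper does not prove this step from scratch either: it discharges it by citing Lemma~27 of the OSAN paper, which states that $2,k$-WL(TL) differentiates a pair that $(k+1)$-WL cannot, and then combines that cited fact with the monotonicity lemmas of \cref{SectionProofExpressiveHierarchy}. So to complete your argument you would need either to import that lemma explicitly or to carry out the marked-CFI analysis you sketch --- in particular, the quantitative matching between the number of frozen labels $l$ and the separator/pebble budget of $K_{l+2}$, which you currently leave at the level of ``a complete graph $K_{l+O(1)}$ or a suitable grid works.'' As stated, the proposal identifies the right construction but does not prove the one nontrivial direction.
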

When $l=2$, the above result recovers the known result of $I^2$-GNN \citep{I2GNN}. 

Besides graph isomorphism power, counting power is also an important measure of the representation capability of GNNs. We conclude that $k,l$-WL is able to count all connected substructures within $k+1$ nodes, see also \citet{EfficientlyCoutingSubstructures}. This is also verified by our experiments on substructure counting.

\section{Practical $k,l$-GNN: Improving Scalability and Compatibility}\label{sectionimprovements}

In this section, we will discuss $k,l$-GNN, the neural implementation of $k,l$-WL. $k,l$-GNN runs GNNs with the same expressivity as $k$-WL~\citep{HowPowerfulGNN,PPGN} on $l$-labeled graphs. We will discuss practical issues affecting the performance and scalability as well as our solutions. Our framework can also be applied to any other expressive GNNs to improve expressivity. 

\subsection{Implementation of $k,l$-GNN}

\begin{figure}[t]
\vskip 0.2in
\begin{center}
\centerline{\includegraphics[width=\columnwidth]{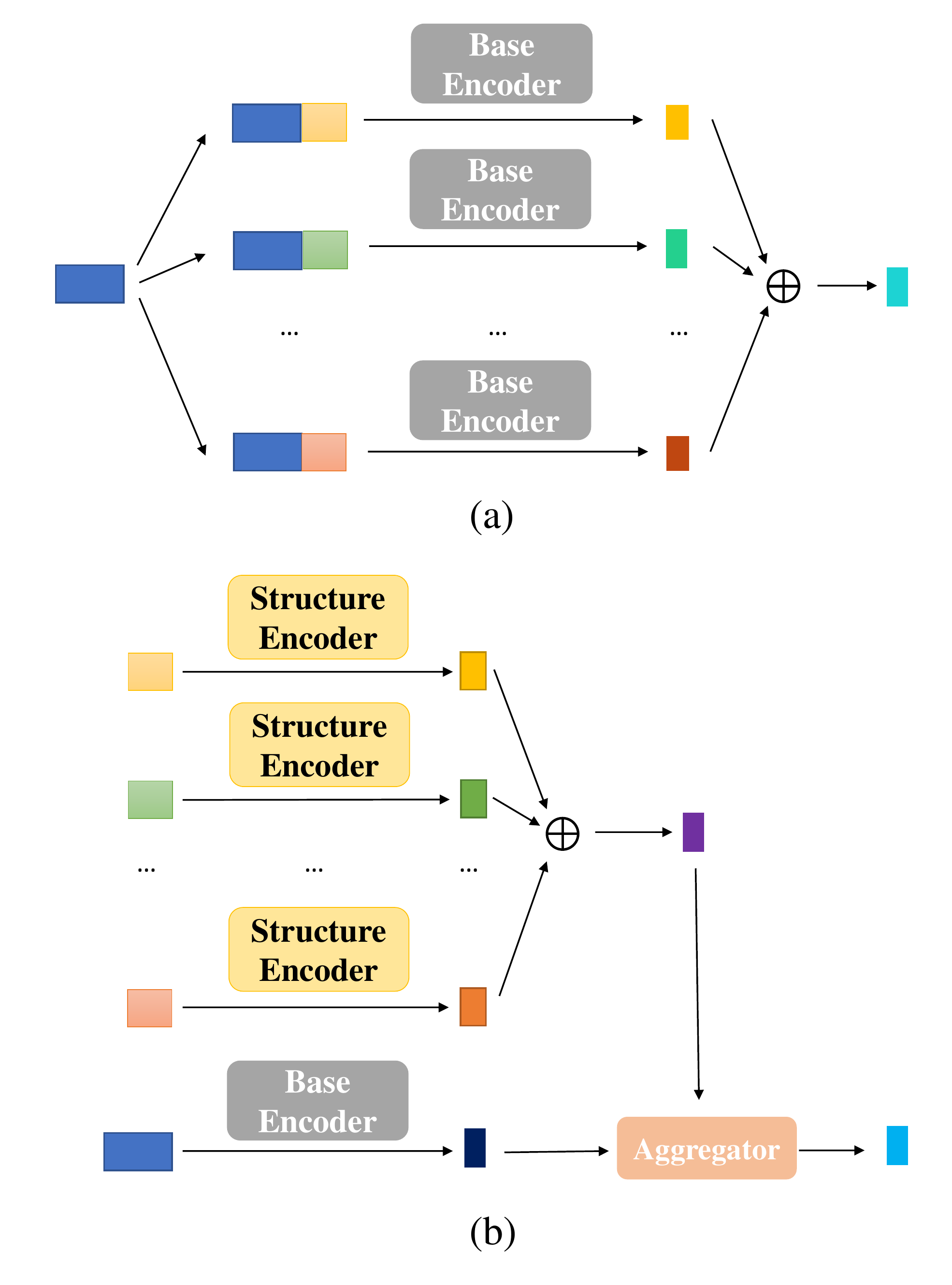}}
\caption{Two architectures of $k,l$-GNN. (a) The input graph feature (represented by blue rectangle) is duplicated for each labeled subgraph, and one base encoder jointly learns ID features (represented by squares of different colors) and graph features. (b) The graph features and ID features are parallelly learned by the base encoder and a structure encoder, respectively, which are then aggregated together and passed to downstream architectures.}
\label{architecture}
\end{center}
\vskip -0.2in
\end{figure}

%Our $k,l$-WL hierarchy is a highly general theoretical framework, so $k,l$-GNN can incorporate many existing models. 
Our $k,l$-WL runs $k$-WL on $l$-labeled graphs. Therefore, in implementation, we run a base encoder on all labeled graphs and pool their representations. Generally speaking, we can adopt any architecture as our base encoder and improve its expressive power via labeling. Specifically, when the base encoder has the same expressivity as $k$-WL, it reduces to our standard $k,l$-GNN. The base encoder, however, does not need to have exactly the same expressive power as certain $k$-WL in practice, since we can always upper bound the expressive power of the lifted model by some $k,l$-WL. In other words, our method can lift the expressivity of many existing architectures through labeling.

To make our framework applicable to any model, we propose two architectures to lift the expressive power of a base model. For convenience, we first suppose that the base encoder has $k$-WL-equivalent expressivity, and we will lift its expressivity to $k,l$-WL. \textit{Architecture (a)} in \cref{architecture} (a) exactly simulates the original form of $k,l$-WL: The input node feature matrix (represented by blue rectangles) is replicated to the number of labeled graphs and concatenated (or by other combination methods like add) with the corresponding label features. Then the original node features and label features are jointly learned by the same $k$-WL-equivalent GNN (base encoder). If the pooling function is injective (like Deepset~\citep{deepset}), architecture (a) fully preserves the expressivity of $k,l$-WL. \textit{Architecture (b)} in \cref{architecture} (b) is slightly different from the original $k,l$-WL, which learns original node features and new label features by two models separately. The $k$-WL-equivalent base encoder learns the original node features only once. An extra $k'$-WL ($k'\neq k$ is permitted) equivalent \textit{structure encoder} learns label features without node features in each $l$-labeled graph. The $l$-labeled graph representations are then aggregated by an aggregator and concatenated to the representation of the original graph. We call it structure encoder since label features do not introduce information more than graph structure.

Theoretical analysis and comparison between these two architectures are presented in \cref{architecturediscussion}. In short, architecture (b) is less expressive but more scalable.  As the number of labeled graphs is $n^l$, and the complexity of $k$-WL-equivalent GNN can be $n^k$, architecture (a) can preserve the expressivity of $k,l$-WL with a complexity of $O(n^{k+l})$. Architecture (b) can use different structure encoder and base encoder, i.e. $k'< k$. Then the complexity of architecture (b) can be reduced to $O(n^k+n^{k'+l})$. Moreover, we experimentally find that architecture (b) tends to outperform (a). One intuition behind is that we may need two different sets of parameters to learn label feature and node features, respectively. %In other words, the decoupled architecture better encodes pure spatial structure information, aligning with conclusions in \cref{Theory}. 

% \cai{Intuition why structure (b) is better: we need different parameters to learn label feature and node features, respectively.}

% Both structures are compatible with any existing model and can improve the expressivity of the base encoder by introducing asymmetry. When the encoders (both base encoder and structure encoder) and aggregation functions satisfy certain conditions (see \cref{architecturediscussion}), both architectures can achieve the expressivity bound of $k,l$-WL, since they completely simulate the procedure in $k,l$-WL test. Despite their equivalence in GI problems, we experimentally find that the decoupled architecture in \cref{architecture} (b) tends to perform better in real-world tasks. We attribute it to the decoupled architecture that can better encode spatial structure information, aligning with conclusions in \cref{Theory}.

In summary, architecture (b) is designed to reduce the complexity at the cost of losing part of expressive power compared with architecture (a), the original GNN implementation of $k,l$-WL. However, the decoupled architecture tends to perform better in real-world tasks. Despite their differences, we emphasize that both architectures can boost the expressivity. 

Below, we propose several $k,l$-GNN instances parameterized by different $k$ and $l$, including ID-MPNN and ID-PPGN. We as well lift some other base models through our framework, such as ID-Transformer. Note that the following instances can all adopt either architecture (a) or (b), depending on application scenarios. 

\paragraph{ID-MPNN} An $l$-IDMPNN is an instance of $1,l$-WL, since Message Passing Neural Network (MPNN) is equivalent to $1$-WL \citep{HowPowerfulGNN}. The model is easy to implement but reveals strong expressivity as $l$ increases. One can easily verify that ID-MPNN incorporates Identity-aware GNN ($l=1$), $I^2$-GNN ($l=2$) and $l$-OSAN. With our implementation, ID-MPNN outperforms the above models experimentally. % Moreover, our $l$ can be up to $8-10$, which is the largest practical number of labels to our knowledge, with our complexity reduction methods in the following section.

\paragraph{ID-PPGN} For $3$-WL equivalent base encoders, we select PPGN \citep{PPGN} as our base encoder. An ID-PPGN with $l$ labels is as powerful as $3,l$-WL, which is strictly more powerful than $3$-WL when $l>0$. % and has affordable complexity. How low?. % if architecture (b) is adopted.

\paragraph{ID-Transformer} We also apply our framework to graph transformers. It is not a $k,l$-GNN strictly as graph transformer is not $k$-WL equivalent. However, our framework is general enough to take any base graph learning model. %which can produce graph representations. 

Additionally, many techniques can be applied in the implementation of $k,l$-GNN, such as positional encodings (PE) and structure encodings (SE) \citep{GPS}, GNN as kernel techniques \citep{GNNAK}, etc. In conclusion, our method is a universal framework to improve the expressivity of base models while being compatible with many other methods and techniques. Please refer to \cref{implementationdetail} for more implementation details. In the next section, we discuss how to reduce the complexity when $l$ is large.

\subsection{Labeled graph selection}
%Like traditional higher-order WL algorithms, a naive version of $k,l$-WL and $k,l$-GNN implemented by architecture (a) with large $k, l$ will result in high complexity. Since there are $O(n^l)$ labeled graphs (each containing $l$ labels), the total complexity is $O(n^{k+l})$. However, for real-world tasks like graph regression, requirements on expressivity are lower. Therefore, various strategies, which may not maintain expressivity, can be applied to reduce complexity. Although some strategies will make the model partly lose their expressivity compared with the full $k,l$-WL, they can still boost base encoders (see \cref{experiments} for details). For complexity reduction, we focus on labeled tuple selection in this section and will discuss subgraph (where a localized version of $k,l$-WL is performed) selection in \cref{sectionlocality}.

As $k,l$-GNN runs a base encoder on each labeled graph, the total complexity is proportional to the number of labeled graphs. This section focuses on how to select a subset of labeled graphs. We also discuss how to segregate a subgraph from the labeled graph and thus reduce the size of labeled graphs in \cref{sectionlocality}, as subgraph size also affects the scalability and reduced subgraph size is important for the successes of many subgraph GNNs~\citep{NGNN}.

%\paragraph{Labeled tuple selection}
%We find that appropriate labeled tuple selection methods can reduce complexity while boosting model performance. 
We list some labeled graph selection strategies as follows. Among them, random sampling and node-based policies are commonly used. We propose two new sampling policies: constraint-based policy and hierarchical policy. 

\begin{itemize}
    \item Random sampling. This is one of the most common methods in existing subgraph GNNs. Statistically, the graph representation is permutation invariant and unbiased but with a variance among different samplings. We can conduct parallel samplings for variance reduction and use mean/median/voting as the final representation. The complexity of randomly sampling $l$-tuples is $O(\alpha n^l)$, where $\alpha$ is the sampling rate.
    \item Node-based policies. This is another family of invariant sampling methods. For example, we can extract a K-hop ego-net for each root node and select all $l$-tuples in the ego-net with the root node always being the first node in each $l$-tuple. The complexity is $O(nm^{l-1})$, where $m$ is the average size of ego-nets. %Identity-aware GNN\citep{IDAwareGNN} is a typical instance of $1,1$-GNN adopting a node-based policy.
    \item Constraint-based policies. These methods search within all possible $l$-tuples and filter out those failing to meet certain constraints. For example, if we upper-bound the shortest path distance $3$ between any pair of nodes in a $6$-tuple, we can sample many $6$-rings while excluding any $6$-paths. Compared with node-based policy, constraint-based methods do not sample the same induced subgraphs repetitively and enjoy a higher design freedom. In implementation, this can be efficiently implemented by dynamic programming, e.g., Floyd-Warshall algorithms. The number of subgraphs depends on the graph and constraints.
    \item Hierarchical policies. Theses methods hierarchically select subgraphs. We can use algorithms like min-cut or node clustering (e.g., spectral clustering) to divide the graph into clusters with an average size $m$. Labeled nodes are then selected only within each cluster, resulting in an average complexity of $O(m^l\cdot \frac{n}{m})$. Since $m\ll n$, the hierarchical policy can significantly reduce the number of subgraphs while still being able to encode sufficient local structure information.
\end{itemize}

%In practice, since a small number $l$ of IDs is powerful enough, we can obtain some expressive and scalable models. For example, $I^2$-GNN \citep{I2GNN} is an instance of $k,l$-GNN parameterized by $k=1,l=2$. Using node-based policy, the complexity of $I^2$-GNN is $O(nd)$, where $d$ is the average vertex degree. This reveals that we can obtain a model strictly powerful than 1-WL and partly more powerful than 3-WL within linear complexity. It's also worth mentioning that all the subgraph selections discussed above can be made by preprocessing, thus do not increase the training time while efficiently reducing the time and memory complexity of $k,l$-GNNs. 

In most real-world task experiments, we use constraint-based and hierarchical policies, achieving impressive experimental results at a low computation complexity. See \cref{ablation} for an ablation study on different sampling policies.

% \paragraph{Permutation reduction}\label{permutationreduction}

% Now let's view the problem from relational pooling perspective. An equivalent way to select ordered subgraphs is to find a set of $l$ nodes and perform permutation on their assigned labels. Therefore, the total complexity of $k,l,m$-WL is $O(n^k \cdot n_s \cdot k!)$, where $n_s$ is the number of unordered subgraphs. Hence, reducing number of permutation is another way to scale up the model.

% A random selection of permutation $g\in S_l$ is a possible way due to its unbiasedness and consistency. However, this will break permutation invariance in practical, resulting in a large variance and poorer performance compared with adopting the full symmetric group, see \cref{ablation} for details.

% Another possible direction is to find a possible permutation group other than the symmetric group. \citep{PG-GNN} proposed a smaller permutation method that is able to capture 2-ary relationship. However, its experimental performance are limited, and there are still a large space to theoretically explore the permutation design.

\paragraph{Learn to sample and Learn to label}

Aside from traversing or sampling according to certain rules, \citet{OSAN} use Implicit-MLE framework \citep{IMLE} (which allows us to back-propagate through continuous-discrete architecture) to sample subgraphs in a data-driven fashion. With this method, $k,l$-GNN can also learn to label tuples and sample subgraphs to minimize the target loss function in a data-driven manner. See \cref{ablation} for more details and experimental results. 

% Besides, we propose another method to let our model learn to label. We can measure "importance" of nodes by calculating self-attention of node features obtained by a base encoder (e.g. a MPNN), and label according to the importance. This method allow us to label arbitrary number of labels in $O(1)$ complexity, since no permutation is demand to keep invariance. However, this method is not applicable in GI tasks when graphs reveal strong symmetry, since the representation are bounded by expressivity of base encoder. See \cref{ablation} for more experimental results.

\begin{table*}[t]
\caption{QM9 results (MAE $\downarrow$)}
\label{QM9}
\vskip 0.15in
\begin{center}
\begin{small}
\begin{sc}
% \resizebox{1.\columnwidth}{!}{
\begin{tabular}{l|ccccc|c}
\toprule
Target & MPNN & DTNN & DeepLRP & PPGN & Nested GNN & IDMPNN(l=4)\\
\midrule
$\mu$ & 0.358 & \textbf{0.244} & 0.364 & \textbf{\textcolor{red}{0.231}} & 0.433 & 0.398 \\
$\alpha$ & 0.89 & 0.95 & 0.298 & 0.382 & \textbf{0.265} & \textbf{\textcolor{red}{0. 226}} \\
$\epsilon_{\textrm{HOMO}}$ & 0.00541 & 0.00388 & \textbf{\textcolor{red}{0.00254}} & 0.00276 & 0.00279 & \textbf{0.00263}\\
$\epsilon_{\textrm{LUMO}}$ & 0.00623 & 0.00512 & \textbf{0.00277} & 0.00287 & \textbf{\textcolor{red}{0.00276}} & 0.00286\\
$\Delta \epsilon$  & 0.0066 & 0.0112 & \textbf{\textcolor{red}{0.00353}} & 0.00406 & \textbf{0.00390} & 0.00398\\
$\langle R^2 \rangle$ & 28.5 & 17.0 & 19.3 & \textbf{16.7} & 20.1 & \textbf{\textcolor{red}{10.4}}\\
$\textrm{ZPVE}$ & 0.00216 & 0.00172 & 0.00055 & 0.00064 & \textbf{0.00015} & \textbf{\textcolor{red}{0.00013}}\\
$U_0$ & 2.05 & 2.43 & 0.413 & 0.234 & \textbf{0.205} & \textbf{\textcolor{red}{0.0189}}\\
$U$ & 2.00 & 2.43 & 0.413 & 0.234 & \textbf{0.200} & \textbf{\textcolor{red}{0.0152}}\\
$H$ & 2.02 & 2.43 & 0.413 & \textbf{0.229} & 0.249 & \textbf{\textcolor{red}{0.0160}}\\
$G$ & 2.02 & 2.43 & 0.413 & \textbf{0.238} & 0.253 & \textbf{\textcolor{red}{0.0159}}\\
$c_{\textrm{v}}$ & 0.42 & 0.27 & 0.129 & 0.184 & \textbf{\textcolor{red}{0.0811}} & \textbf{0.0890}\\

\bottomrule
\end{tabular}
% }
\end{sc}
\end{small}
\end{center}
\vskip -0.1in
\end{table*}

% \subsection{Molecular properties prediction}

\section{Experiments}
\label{experiments}

In this section, we conduct experiments on both synthetic and real-world tasks to verify our models' expressivity and real-world performance. 

\subsection{Graph isomorphism task}
\paragraph{Dataset}
We select two synthetic datasets, EXP and SR25, to empirically verify our models' expressivity for graph isomorphism tasks. EXP \citep{EXP} contains 600 pairs of non-isomorphic graphs that 1-WL and 2-WL fail to distinguish. SR25 \citep{SR25} contains 15 non-isomorphic strongly regular graphs (i.e., 105 non-isomorphic pairs) that 3-WL fails to distinguish. An accuracy of $50\%$ on EXP and $6.67\%$ on SR25 suggest the model fails to distinguish any non-isomorphic graphs in the dataset.

\paragraph{Models}

For baseline models, we choose GIN, PNA \citep{PNA}, Identity-aware GNN \citep{IDAwareGNN}, GIN-AK+ \citep{GNNAK} and PPGN \citep{PPGN}. In comparison, we choose our ID-MPNN and ID-PPGN to understand the expressivity hierarchy better.

\paragraph{Results}

The results are shown in \cref{Simulation}. When the number of IDs $l\geq 2$, ID-MPNN and ID-PPGN achieve perfect performance on the two datasets. In comparison, all other models fail on the SR25 dataset. By comparing the results of GIN and $l$-IDMPNN as well as PPGN and $l$-IDPPGN, we verify that our framework can improve expressivity. Concretely, we have the following conclusions:
\begin{itemize}
    \item $1,1$-WL is more powerful than $1$-WL and $2$-WL.
    \item $1,2$-WL and $3,2$-WL can distinguish some non-isomorphic graph pairs that are indistinguishable by $3$-WL.
\end{itemize}
These results are consistent with our theoretical analysis in \cref{Theory}.

\begin{table}[t]
\caption{Synthetic dataset performances}
\label{Simulation}
\vskip 0.15in
\begin{center}
\begin{small}
\begin{sc}
\begin{tabular}{ccc}
\toprule
Model & EXP (Acc$\uparrow$) & SR25 (Acc$\uparrow$) \\
\midrule
GIN & 50 & 6.67 \\
PNA & 50 & 6.67 \\
ID-Aware GNN & 100 & 6.67\\
GIN-AK+ & 100 & 6.67\\
PPGN & 100 & 6.67\\
\midrule
$l$-IDMPNN($l\geq2$) & 100 & 100 \\
$l$-IDPPGN($l\geq2$) & 100 & 100\\
\bottomrule
\end{tabular}
\end{sc}
\end{small}
\end{center}
\vskip -0.1in
\end{table}

\subsection{Substructure counting}

\begin{table}[t]
\caption{Counting substructures (MAE $\downarrow$)}
\label{Counting}
\vskip 0.15in
\begin{center}
\begin{small}
\begin{sc}
\resizebox{1.\columnwidth}{!}{
\begin{tabular}{lcccc}
\toprule
Model & Tri. & Tailed Tri. & Star & Chordal Cycle\\
\midrule
GCN & 4.19E{-1} & 3.25E{-1} & 1.80E{-1} & 2.82E{-1}\\
KP-GIN+ & 3.77E{-2} & 3.14E{-2} & 2.40E{-3} & 2.58E{-2}\\
GIN-AK+ & 1.23E{-2} & 1.12E{-2} & 1.50E{-2} & 1.26E{-2}\\
DeepLRP & \textbf{1.76E{-7}} & \textbf{1.41E{-5}} & \textbf{1.41E{-5}} & \textbf{9.81E{-5}}\\
\midrule
IDMPNN & \textcolor{red}{\textbf{5.86E-46}} & \textcolor{red}{\textbf{2.25E{-7}}} & \textcolor{red}{\textbf{7.67E{-6}}} & \textcolor{red}{\textbf{1.49E{-45}}} \\
\bottomrule
\end{tabular}
}
\end{sc}
\end{small}
\end{center}
\vskip -0.1in
\end{table}

\begin{table}[t]
\caption{Zinc12K results (MAE $\downarrow$)}
\label{Zinc}
\vskip 0.15in
\begin{center}
\begin{small}
\begin{sc}
% \resizebox{.7\columnwidth}{!}{
\begin{tabular}{l c}
\toprule
Method & Test MAE \\
\midrule
GIN  & $0.163\pm0.004$\\
PNA & $0.188\pm0.004$\\
GSN & $0.115\pm 0.012$\\
DeepLRP & $0.223\pm0.008$\\
OSAN & $0.187\pm0.004$\\
KP-GIN+ & $0.119\pm0.002$\\
GNN-AK+ & $0.080\pm0.001$\\
CIN & $0.079\pm0.006$\\
GPS & $\textbf{0.070}\pm0.004$\\
\midrule
4-IDMPNN & $0.083\pm0.003$\\
3-IDMPNN & $0.085\pm0.003$\\
\bottomrule
\end{tabular}
% }
\end{sc}
\end{small}
\end{center}
\vskip -0.1in
\end{table}

\begin{table}[t]
\caption{ogbg-molhiv results (AUC $\uparrow$)}
\label{OGB}
\vskip 0.15in
\begin{center}
\begin{small}
\begin{sc}
\begin{tabular}{l c}
\toprule
Method & Test AUC \\
\midrule
PNA  & $79.05\pm1.32$\\
DeepLRP & $77.19\pm1.40$\\
NGNN & $78.34\pm 1.86$\\
KP-GIN+-VN & $78.40\pm0.87$\\
$I^2$ -GNN & $78.68\pm0.93$\\
CIN & $\textbf{80.94}\pm0.57$\\
SUN(EGO) & $80.03\pm0.55$\\
\midrule
4-IDMPNN & $79.31\pm0.63$\\
\bottomrule
\end{tabular}
\end{sc}
\end{small}
\end{center}
\vskip -0.1in
\end{table}

\paragraph{Dataset} To verify our model's expressivity of counting substructures, we evaluate on random regular graph dataset \citep{CountSubstructures}. There are four target substructures: triangle, tailed triangle, star and chordal-cycle. Test MAE measures the results.

\paragraph{Models} We choose GCN, KP-GIN+ \citep{Khop}, GIN-AK+ \citep{GNNAK}, and DeepLRP \citep{CountSubstructures} as the baseline models. %Note that in \citep{CountSubstructures}, the original metric is MSE, and we report the reproduced results of DeepLRP here. 
We use $l$-IDMPNN, and additionally restricts message passing only on the labeled tuples, where $l$ is the size of the target substructure. 

\paragraph{Results} The results are shown in \cref{Counting}. Our model achieves state-of-the-art performance on all tasks, and the test MAE is nearly $0$. This verifies our theoretical results that $1,l$-GNN can completely count substructures within $l$ nodes.

\subsection{Molecular properties prediction}

\paragraph{Dataset} For real-world tasks, we choose three popular molecular property prediction datasets: QM9 \citep{QM9}, ZINC \citep{Zinc} and ogbg-molhiv \citep{OGB}. QM9 is a graph property regression dataset containing 130k small molecules and 19 regression targets, such as the energy of the molecules. We follow a commonly used training/validation/test split ratio of 0.8/0.1/0.1, and the results of the first 12 targets are reported. ZINC12k is a subset of ZINC250k containing 12k molecules. The task is also molecular property (constrained solubility) regression. ogbg-molhiv contains 41k molecules for graph binary classification (whether a molecule inhibits HIV virus replication or not). We use the official split for ZINC and ogbg-molhiv.

\paragraph{Models} For QM9 dataset, MPNN, DTNN \citep{DTNN}, DeepLRP~\citep{CountSubstructures}, PPGN~\citep{PPGN} and Nested GNN \citep{NGNN} are chosen as baseline models. For ZINC12k, we choose GIN~\citep{HowPowerfulGNN}, PNA \citep{PNA}, DeepLRP~\citep{CountSubstructures}, OSAN~\citep{OSAN}, KP-GIN+ \citep{Khop}, GNN-AK+ \citep{GNNAK}, CIN \citep{CIN} and GPS \citep{GPS} for comparison. For ogbg-mohiv, PNA~\citep{PNA}, DeepLRP~\citep{CountSubstructures}, NGNN~\citep{NGNN}, KP-GIN~\citep{Khop}, $I^2$-GNN \citep{I2GNN}, CIN~\citep{CIN} and SUN(EGO) \citep{UnderstandingSymmetry} are selected.

\paragraph{Results} Our $4$-IDMPNN achieves superior performance in 7 out of 12 tasks on QM9 dataset (\cref{QM9}), while results for the remaining targets are also highly competitive. On ZINC12k (\cref{Zinc}) and ogbg-molhiv (\cref{OGB}), although IDMPNN does not achieve the best results, it is still comparable to the state-of-the-art models. Moreover, we do not use any additional features or pretraining in any datasets, reflecting the power of our model. %Remarkably, $k,l$-GNN is designed to increase graph isomorphism power. Its performance on real-world tasks is also inspiring. 
This suggests that our method can effectively enhance the performance of base encoders for real-world tasks in addition to increasing the expressivity.%, partly because $k,l$-GNN can well capture local structure information, as discussed before. 
While our $k,l$-GNN framework captures DeepLRP, GSN and OSAN, we find instances such as $4$-IDMPNN that greatly surpass these works in real-world tasks.

\section{Conclusions}

In this work, we establish a novel $k,l$-WL framework that explicitly assigns labels to $l$ nodes while running a $k$-WL algorithm. We theoretically analyze the expressivity hierarchy of $k,l$-WL, which incorporates many existing relational pooling methods and subgraph GNNs. Due to its strong compatibility, our framework can improve the expressivity of any base model by just augmenting ID features on (sampled) subgraphs. Various acceleration methods are also discussed to build practical, effective models. Some of our $k,l$-GNN instancees achieve state-of-the-art performance on several synthetic and real-world tasks, verifying the power of our framework.

% \clearpage
\section*{Acknowledge}

This project is supported in part by the National Key Research and Development Program of China (No. 2021ZD0114702).

\bibliography{ref}
\bibliographystyle{icml2023}

\newpage
\appendix
\onecolumn
% \section{Notations}
% \begin{table}[t]
% \caption{Notations for the paper}
% \label{Notations}
% \vskip 0.15in
% \begin{center}
% \begin{small}
% \begin{sc}
% \begin{tabular}{lcc}
% \toprule
%     Notation & Meaning & Example\\
% \midrule
%     $G$ & group & permutation group\\
%     $g$ & element of group & a permutation mapping in a permutation group\\
%     $Sym(M)$ & symmetric group of set $M$ & $S_n=Sym(\{1,...,n\})$\\
%     $X=(V,E)$ & graph $X$ with vertex set $V$ and edge set $E$ & Complete graph $C_k$\\
%     $v$ & a node in vertex set $V$ & $v_1$\\
%     $h,H$ & vector, matrix or tensor representation of a node / graph & h(v), H(X)\\
%     $\{\}$ & set & $\{a, b, c\}$\\
%     $()$ & tuple & $(a, b, c)$\\
%     $\{\{\}\}$ & multi-set & $\{\{a, a, b\}\}$
% \bottomrule
% \end{tabular}
% \end{sc}
% \end{small}
% \end{center}
% \vskip -0.1in
% \end{table}

\section{$k$-FWL}\label{app:kwl}

In our main text, we mainly use the family of Weisfeiler-Lehman (WL) algorithm. Note that there's another form of graph isomorphic algorithm called Folklore Weisfeiler-Lehman (FWL) test, and the difference between $k$-WL and $k$-FWL lies in their tuple color update process. Concretely, at $t$-th iteration,
\begin{equation}
    c_k^{t}(\vv, G)=\text{Hash}\Big(c_k^{t-1}(\vv, G), \bmset{(
    c_k^{t-1}(\psi_i(\vv, u), G)|i\in [k])|u\in V(G)}\Big), 
\end{equation}

In this paper, we will strictly distinguish $k$-WL and $k$-FWL. A well known result is $\forall k\geq 1$, $k$-FWL is equivalent to $k+1$-WL, and we will show that $\forall l, \forall k\geq 1$, $k,l$-FWL is equivalent to $k+1,l$-WL in our proof. For simplicity, we will use $k,l$-FWL in most of our contexts and theorems.

\section{Localized $k,l$-WL: enhancement by locality} \label{sectionlocality}

% \cai{put it into appendix; m is strictly weaker than l, but sometimes it may work, such as KP-GNN}

% Subgraph-enhanced GNNs are a huge class of GNN variations that reveal strong isomorphism power. A hypothesis is that GNN fails to distinguish larger non-isomorphic graphs because they contains local patterns or substructures that GNN cannot distinguish, see \citep{GSN}. Therefore, subgraph-enhanced GNNs either encode substructure information manually as in \citep{GSN}, or design methods to better learn local information. 
Recently, various subgraph-enhanced GNNs have been proved to lift up expressivity of GNNs. However, it has been proved that the expressivity of traditional node-based subgraph-GNNs are upper-bounded by 3-WL, see \citep{UnderstandingSymmetry}. In this section, we put forward a localized variation of $k,l$-WL by considering locality, which is more computational efficient and is able to incorporate more existing subgraph GNNs. Concretely, we first select a subgraph according to some permutation-invariant strategies, then select labeled $l$-tuples within the subgraph. Further, in every single run of $k$-WL with $l$ labels, a localized update procudure is performed on the selected subgraph instead of on the whole graph. Note that different subgraphs do not need to have the same sizes as long as the subgraph selection policy is permutation invariant. While a localized $k,l$-WL obviously has less computation cost than full $k,l$-WL, it is even more expressive in some situations, see \cref{example} for an intuitive example.

% Now we formally propose our full $k,l,m$-WL hierarchy. The procedure of $k,l,m$-WL is shown in \cref{algorithmfigure}. The algorithmic statement is in \cref{algorithmpresentappendix} due to limited space. Briefly speaking, our algorithm select a set of ordered subgraphs based on subgraph selection policy. For each ordered subgraph $G_i$ with size $m_i$, a node tuple sized $l$ are further assigned unique labels as initialization. Then we run the $k$-WL with $l$ labels algorithm (a permutation sensitive single run as discussed in \cref{klWL}) on each labeled subgraph. Finally, we aggregate representations of the subgraphs by hashing the set of subgraph histograms and output the final color of the original unlabeled graph. See \cref{sectionimprovements} for further discussion. 

% $k,l,m$-WL is a universal framework, unifying relational pooling and subgraph GNNs. Remarkably, our $k,l,m$-WL breaks the 3-WL upper-bound of traditional subgraph GNNs by explicitly embed IDs to nodes within the subgraph, which is true even when $l$ is very small. For example, when $k=1, l=2$, the algorithm can distinguish some non-isomorphic strongly regular graphs that 3-WL (and 2-FWL) fails. On the other hand, introducing subgraph parameter $m$ can further improve expressivity compared with $k,l$-WL in some instance. This can also be applied for complexity reduction, see \cref{sectionimprovements}.

While all three parameters, dimension of WL algorithm $k$, number of IDs $l$, and subgraph size $m$ will affect graph isomorphism power of the algorithm, see \cref{example} for illustration. However, complete theoretical analysis of localized $k,l$-WL is non-trivial, which we leave for future work.

While localization affects the expressive power of $k,l$-WL, a comprehensive theoretical analysis is non-trivial, which we leave for future work. Here we only give a preliminary discussion on the relationship between parameters $k$, $l$ and localization. We prove in this paper that for $k\geq 2$, $k+1,l$-WL is strictly more powerful than $k,l+1$-WL (see \cref{k-1l+1<=kl}), though at same magnitude of complexity. This implies that $k$ is more important than $l$. Similarly, effects of localization can be simulated by parameter $l$ to some extent. For example, $t$ rounds of message passing starting from the labeled root node results to a $t$-hop subgraph. This observation indicates that localization is generally less important and is likely to be simulated by labeling. An interesting phenomenon is that increasing subgraph size does not necessarily increase expressivity. However, the concrete analysis is complicated, and  we leave complete theoretical analysis of the impact of localization on expressivity as an open question.

After introducing the localization, there are also some additional complexity reduction methods to be discussed. We mainly focus on selecting subgraph (so called 'message passing' range) to reduce number of labeled tuples and decrease duplicated computation. While the labeled nodes should definitely be included in the selected subgraph, a localized update is able to accelerate the algorithm.

Some of common and useful subgraph selection policies applicable to localized $k,l$-WL include: 
\begin{itemize}
    \item Label based selection: we only perform $k,l$-WL on the labeled nodes, while unlabeled nodes are ignored for each labeled $l$-tuple. This subgraph  selection is useful for counting substructure.
    \item Full graph: we directly run $k,l$-WL on the original whole graph. This is the original form of $k,l$-WL with no complexity reduction. 
    \item Node based selection: for example, extract K-hop ego-net of each labeled node. Node based selection recovers a wide range of subgraph GNNs, including K-hop GNN.
\end{itemize}

\section{Algorithm of localized $k,l$-WL}\label{algorithmpresentappendix}

We formally use pseudo code to describe our localized $k,l$-WL in \cref{klmWL}.

\begin{algorithm}[tb]
   \caption{Localized $k,l$-WL}
   \label{klmWL}
\begin{algorithmic}
   \STATE {\bfseries Input:} An uncolored, undirected. unlabeled graph $G=(V,E)$, subgraph generation policy $\mathcal P_s$, label node tuple generation policy $\mathcal P_t$,  dimension of WL test $k$, number of labels $l$, subgraph selection policy (optional, align with $\mathcal P_s$)\\
   \STATE
   \STATE Generate unordered subgraph $\{G_i\}$ according to $\mathcal P_s$.
   \FOR{$G_i\in \{G_i\}$}
   \STATE Generate labeled tuple $(v_{g1},...,v_{gl})\in G_i$
   \FOR{ordered graph $G^g$}
   \STATE \textbf{Initialization}: $C[\vec v]=_0\chi^k_{G^g}(\vec v)$, for all $\vec v\in V^k$;$M=\emptyset$;$List =\{_0\chi^k_{G^g}(\vec v):\vec v\in V(G^g)^k\}$;$S=\emptyset$. 
    \STATE ** All tuples initially colored with their isomorphism types and labels $L(\vec v)$;multiset $M$ empty; work list initialized with the initial color-label classes of $k$-tuples in order subgraph $G^g$; multiset $S$ storing color-labels of ordered subgraphs emtpy.
    \WHILE{$L\neq \empty$}
    \FOR{each tuple color-label class $c=(C,L)\in List$}
    \STATE remove $c$ from $List$
    \FOR{each tuple $\vec w$ with $C[\vec w]=c$}
    \FOR{each $j\leq k$}
    \FOR{each $u\in V(G^g)$}
    \STATE let $\vec v=\vec w[j,u]$;
    \STATE add ($\vec v,(C,L)[\vec v[1,u]], ..., (C,L)[\vec v[k,u]]$) to M.
    \ENDFOR
    \ENDFOR
    \ENDFOR
    \ENDFOR
    \STATE Perform Radix Sort of $M$.
    \STATE Scan $M$ replacing tuples $(\vec v, c_1^1, ...,c_k^1),...,(\vec v, c_1^r, ...,c_k^r)$ with the single tuples $(C[\vec v];(\vec v, c_1^1,...,c_k^1),...,(\vec v, c_1^r,...,c_k^r);\vec v)$.
    \STATE Perform Radix Sort of $M$.
    Scan $M$ for each color-label class $c=(C,L)$ that has been split, leave the larget part still color-labeled $c$, and update the colors $C$ of other parts, remain labels $L$ unchanged; add new color-label types to $List$.
    \ENDWHILE
    \STATE Append the current color-labeling of all $k$-tuples in the labeled subgraph $V(G^g)^k$ to $S$.
    \ENDFOR
    \ENDFOR
    \STATE Output multiset $S$ that contains color-labels for $k$-tuples in all ordered subgraphs.
\end{algorithmic}
\end{algorithm}

\section{Mathematical form of $1,l$-WL}

Here, we will give an example of tensor implementation for $1,l$-WL. Each node $v$ is represented using a $l$-dimensional vector $\mathbf{h}(v)$, and the $j^{th}$ component of the vector $\mathbf{h}_j(v)$ corresponds to label $j(j=1,2,...,l)$. To keep the representation permutation invariant, we need to adopt a permutation invariant labeling method, for instance, pool over a permutation group of IDs on the k-tuple. Denote set $M=(1,2,...,l)$, we can use the symmetric group $Sym(M)$ (i.e. the full permutation with $k!$ elements) as the permutation group acting on the ID set. The group is not unique as long as it makes the graph level representation permutation invariant, but to facilitate understanding, we will select $Sym(M)$ at first in our illustration. For each element $g$ of the permutation group, the labeled graph $X^g$ is represented by a matrix shaped $(n, k)$: $\mathbf{H}(X^{g})=(\mathbf{h}^{g}(v_1),...,\mathbf{h}^{g}(v_n))^T$. Since there are $k!$ elements in $S_k$, thus the representation for the graph $X$ will be a $(l!, n, l)$ shaped tensor: $\mathbf{H}(X)=(\mathbf{H}(X^{g_1}), ..., \mathbf{H}(X^{g_{l!})}))^T$. 

At initialization, in each labeled subgraph $G^{g}$, node $v_i$ with label $j$ is initialized as $\mathbf{x}^g_0(v_i)=(\mathbbm{1}(j=1),...,\mathbbm{1}(j=l))$, i.e. the $j^{th}$ element is $1$ and other elements are $0$. The vector will be $\mathbf{0}$ if the node is unlabeled. Thus the $H_0^{g}(X)$ is a sparse matrix with $l$ elements of $1$.

In each iteration $t$ of the 1-WL on k-tuple algorithm, the update process is performed as below. First, we aggregate information from neighbors of each node in each subgraph and update the representation of nodes. One instance of aggregation can be described as follows:

\begin{equation} \label{aggregate}
    \mathbf{\bar h}_{t+1}^{g}(v_i)=U_t\bigg (\mathbf{h}_{t}^{g}(v),\sum_{j:v_j\in \mathcal N(v_i)}M_t\Big (\mathbf{h}_{t}^{g}(v_j)\Big )\bigg )
\end{equation}

As a simple instance, the message function $M_t$ can be identity mapping, and the node update function $U_t$ simply adds two terms (vector in the last iteration, and aggregated neighbors' information). The summation can also be replaced by other permutation invariant pooling methods, e.g. mean or max.

Note that we then apply hash functions on three dimensions respectively to get final representation of iteration $t+1$. On the node level in each permutation, the hash function $f_{node}$ is applied on vector (tuple) $\mathbf{x}^{(m)}(v)$, so there are $2^l$ different output hash values. Then we apply hash function $f_{lbg}$ on each labeled graph $G^m$, in which we regard the multi-set (instead of tuple) of nodes as in 1-WL. Finally, hash function $f_{graph}$ is applied to the graph level, in which the multi-set of labeled subgraphs will be considered. 

\begin{equation} \label{node_hash}
    \mathbf{h}_{t+1}^{g}(v_i)=f_{node}(\mathbf{\bar h}_{t+1}^{g}(v_i)), \forall g\in S_l, \forall i\in (1,2,...,n)
\end{equation}
\begin{equation}  \label{lbg_hash}
    \mathbf{H}_{t+1}^{g}=f_{lbg}(\{\mathbf{h}_{t+1}^{g}(v_i)|i\in (1,2,...,n)\}), \forall g\in S_l
\end{equation}
\begin{equation}  \label{graph_hash}
    \mathbf{H}_{t+1}(X)=f_{graph}(\{\mathbf{X}_{t+1}^{g}|g\in S_l\})
\end{equation}

where $\{\cdot\}$ means a multi-set. Actually, in the hash step, each latter equation can use the tensors in the previous equation either before or after hashing.

\section{Proof}\label{app:proof}

\subsection{Notations}
\paragraph{Isomorphism type of node tuple} $k,l$-WL and $k$-WL use the isomorphism type of tuple to initialize colors, which is defined as follows:

Given graphs $G^1=(V^1, E^1, X^1),G^2=(V^2,E^2,X^2)$ and $k$-tuples $\vv^1,\vv^2$ in $G^1, G^2$ respectively. $\vv_1, \vv_2$ have the same isomorphism type iff
\begin{enumerate}
    \item $\forall i_1,i_2\in [k]$,  $\vv^1_{i_1}=\vv^1_{i_2}\leftrightarrow\bold \vv^2_{i_1}=\vv^2_{i_2}$.
    \item $\forall i\in [k], X^1_{\vv^1_i}=X^2_{\vv^2_i}$.
    \item $\forall i_1,i_2\in [k]$, $(\vv^1_{i_1},\vv^1_{i_2})\in E_1\leftrightarrow (\vv^2_{i_1},\vv^2_{i_2})\in E_2$.
\end{enumerate} 

\paragraph{Expressivity comparison}

Given two function $f, g$, $f$ can be expressed by $g$ means that there exists a function $\phi$ $\phi\circ g=f$, which is equivalent to given arbitrary input $H, G$, $f(H)=f(G)\Rightarrow g(H)=g(G)$. We use $f\to g$ to denote that $f$ can be expressed with $g$. If both $f\to g$ and $g\to f$, there exists a bijective mapping between the output of $f$ to the output of $g$, denoted as $f\leftrightarrow g$.

Here are some basic rule.
\begin{itemize}
\item $g\to h\Rightarrow f\circ g\to f\circ h$.
\item $g\to h, f\to s\Rightarrow f\circ g\to s\circ h$.
\item $f$ is bijective, $f\circ g\to g$
\end{itemize}

\paragraph{$k,l$-WL} In this section, we use $c_{k}^t$ to denote the color produced by $k$-WL in $t$ iteration and $c_{k}$ to denote the stable color. 

Let $G^{\vv}=(V, E, X^{\vv})$ denote the graph labeled by tuple $\vv$. The only difference from the original graph is that node feature $X$ add extra label. $X^{\vv}_u=\tuple{X_u,\mset{i|\vv_i=u,i\in [l]}}$.

Let $\psi_{\va}(\vv, \vu)$ denote a tuple produced by replacing $\vv_{\va_i}$ with $\vu_i$ for $i=1,2,...,|\va|$, where $\va$ is an index tuple, $i<j\Rightarrow \va_i<\va_j$. Given $\vv\in V^k, \va\in [k]^l$, $\vv_{\va}$ denote a $l$-tuple whose $i$-th element is $\vv_{\va_i}$.

We need to reduce colors $c^{t}(\vu, G^{\vv})$ of $|V|^{l+k}$ tuples $\vv,\vu$ to a single color $c^{t}(G)$ of the full graph. Two pooling methods exist.
    \begin{itemize}
        \item Tuple-label (TL) pooling: First aggregate tuple representations of the same labeled graph, then aggregate representations of different labeled graphs. It is also the pooling method used in maintext.
        \begin{equation}
            c^{t}(G)=\text{Hash}(\mset{\text{Hash}(
            \mset{c^{(t)}(\vu, G^{\vv})|\vu \in V^k})|\vv\in V^l}).
        \end{equation}
        \item Label-tuple (LT) pooling: First aggregate representations of the same tuple in different labeled graphs, then aggregate tuple representations.
        \begin{equation}
            c^{t}(G)=\text{Hash}(\mset{\text{Hash}(\mset{c^{(t)}(\vu, G^{\vv})|\vv \in V^l})|\vu\in V^k}).
        \end{equation}
    \end{itemize}
    Let $k,l$-WL($LT$) and $k,l$-WL($TL$) denote $k,l$-WL with the two pooling methods. By default, $k,l$-WL use $TL$-pooling, which is more similar to relational pooling. However, we prove that $LT$ pooling leads to higher expressivity. 
    
    If the number of iteration $t$ is specified, $t$ is by default large enough to produce stable color: Given two graphs $\forall \vu_1, \vu_2\in V^k, \vv_1, \vv_2 \in V^l,$
    \begin{small}
        \begin{equation}
        \!\!c^{t}\!(\vu_1,\!G^{\vv_1}\!)\!=\!c^{t}\!(\vu_2,\!G^{\vv_2}\!)\!\leftrightarrow\!  c^{t+1}\!(\vu_1,\!G^{\vv_1}\!)=c^{t+1}\!(\vu_2,\!G^{\vv_2}\!)
    \end{equation}
    \end{small}

Tuple colors of $k,l$-WL is $c_{k}$ on labeled graph. We use $c_{k,l,TL},c_{k,l,LT}$ to denote graph color produced by $k,l$-WL with TL and LT pooling.

\subsection{Connection to existing hierarchy}
\begin{proposition}
    Given two algorithms $A$ $B$, which produce color $c_A(G), c_B(G)$ with graph $G$ as input. If $A\cong B$, then forall graph $G=(V,E,X), H=(V',E',X')$, 
    \begin{equation}
        \mset{c_A(G^\vv)|\vv\in V^l}=\mset{c_A(H^\vv)|\vv\in {V'}^l}\Leftrightarrow \mset{c_A(G^\vv)|\vv\in V^l}=\mset{c_A(H^\vv)|\vv\in {V'}^l}
    \end{equation} 
\end{proposition}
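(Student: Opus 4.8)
The statement claims that if two algorithms $A$ and $B$ have the same expressivity ($A\cong B$), then the multiset-of-labeled-graph-colors constructions built on top of them agree on a pair of graphs if and only if the analogous construction does — a statement that, as written, is tautological in the displayed equivalence (both sides are literally the same), so the intended content must be that replacing $c_A$ by $c_B$ on one side preserves the equivalence. The plan is therefore to show: for all graphs $G=(V,E,X)$ and $H=(V',E',X')$,
\begin{equation}
\mset{c_A(G^\vv)\mid \vv\in V^l}=\mset{c_A(H^\vv)\mid \vv\in {V'}^l}\iff \mset{c_B(G^\vv)\mid \vv\in V^l}=\mset{c_B(H^\vv)\mid \vv\in {V'}^l}.
\end{equation}
Since $A\cong B$ is symmetric, it suffices to prove one direction; the other follows by swapping the roles of $A$ and $B$.

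The key tool is that $A\cong B$ means there is a bijection $\phi$ between the ranges of $c_A$ and $c_B$ with $c_B=\phi\circ c_A$ on all graphs (this is exactly the "$f\leftrightarrow g$" fact from the Notations: $A\preceq B$ and $B\preceq A$ together give mutual expressibility, hence a bijective correspondence on outputs). First I would invoke this to get such a $\phi$. Then, applying $\phi$ elementwise to a multiset is itself a bijection on multisets (because $\phi$ is a bijection on elements, equal multisets map to equal multisets and distinct multisets map to distinct multisets). Hence
\begin{equation}
\mset{c_B(G^\vv)\mid \vv\in V^l}=\mset{\phi(c_A(G^\vv))\mid \vv\in V^l}=\phi\!\left(\mset{c_A(G^\vv)\mid \vv\in V^l}\right),
\end{equation}
and likewise for $H$. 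Therefore the two multisets $\mset{c_B(G^\vv)}$ and $\mset{c_B(H^\vv)}$ are equal exactly when $\mset{c_A(G^\vv)}$ and $\mset{c_A(H^\vv)}$ are equal, which is the claimed equivalence.

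The only mild subtlety — and the step I would write out carefully rather than wave through — is the passage from "$A$ and $B$ are equally expressive as graph-level algorithms" to "$c_B=\phi\circ c_A$ pointwise with $\phi$ a genuine bijection on the realized color sets." One must be careful that expressibility $f\to g$ as defined only yields a function $\phi$ on the \emph{image} of $g$, so $\phi$ is bijective as a map between the images $\{c_A(G):G\}$ and $\{c_B(G):G\}$, which is all that is needed since every $c_A(G^\vv)$ and $c_A(H^\vv)$ lies in that image ($G^\vv$ and $H^\vv$ are themselves graphs). The rest is the routine observation that a bijection on a set lifts to a bijection on finite multisets over that set, so no genuine obstacle remains; the lemma is essentially a bookkeeping consequence of $A\cong B$ and is stated here because it will be reused to transfer hierarchy results between WL variants and their FWL / subgraph-WL counterparts.
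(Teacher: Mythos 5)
Your proof is correct, and you rightly identified that the displayed equivalence contains a typo (the right-hand side should read $c_B$, not $c_A$); the intended claim is indeed the transfer of multiset equality from $c_A$ to $c_B$. The paper states this proposition without any proof, so there is nothing to compare against: your argument --- extracting from $A\cong B$ a bijection $\phi$ between the realized color sets with $c_B=\phi\circ c_A$, noting that the labeled graphs $G^\vv$ are themselves valid inputs so their colors lie in that image, and lifting $\phi$ to finite multisets --- is exactly the bookkeeping the paper implicitly relies on to conclude $1,l$-WL $\cong$ $2,l$-WL and $k,l$-FWL $\cong$ $k{+}1,l$-WL from the unlabeled equivalences, and the subtlety you flag (that $\phi$ is only defined and bijective on the images, which suffices) is the right one to write out.
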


Therefore, as $1-WL\cong 2-WL, k+1-WL\cong k-FWL$, $1,l-WL\cong 2,l-WL, k,l-FWL\cong k+1,l-WL$.

\subsection{expressivity hierarchy}\label{SectionProofExpressiveHierarchy}
\iffalse
We first prove two lemmas on initial color.
\begin{lemma}\label{lem:labeledIsoType}
For all graph $G=(V,E,X)$, $k\ge 2, l\ge 0$, $\vv\in V^k, \vu\in V^l$
    \begin{equation}
        c_{k}^0(\vv, G^{\vu})\leftrightarrow c_{k}^0(\vv|\!|\vu, G)
    \end{equation}
\end{lemma}
\begin{proof}
Given two graph $G=(V,E,X), H=(V',E',X')$ two $k$-tuples $\vv,\vu$, two $l$-tuples $\vw,\vz$
\begin{align}
    c_{k}^0(\vv, G^{\vw})=c_{k}^0(\vu, H^{\vz})
    \Leftrightarrow (\forall i_1,i_2\in [k], \vv_{i_1}=\vv_{i_2}\leftrightarrow \vu_{i_1}=\vu_{i_2})\land (\forall i\in [k], X_{\vv_{i}}^{\vw}={X_{\vu_{i}}'}^{\vz})\land 
    \big(\forall i_1,i_2\in [k], (\vv_{i_1},\vv_{i_2})\in E\leftrightarrow (\vu_{i_1},\vu_{i_2})\in E'\big)
\end{align}
Note that
\begin{align}
    \forall i\in [k], X_{\vv_{i}}^{\vw}={X_{\vu_{i}}'}^{\vz}
\end{align}
\end{proof}

\begin{lemma}
For all graph $G=(V,E,X)$, $k\ge 2, l\ge 0$, $\vv\in V^k, \va \in [k]^l$
    \begin{equation}
        c_{k}^0(\vv, G^{\vu})\to c_{k}^0(\vv_{\va}, G)
    \end{equation}
\end{lemma}
\fi 

\begin{lemma}\label{lem:kwl-pool}
Given a graph $G=(V, E, X)$, $\forall k\ge 2$, $\forall t\ge 0, 0<h\le k, \forall \vv\in V^k, \va\in [k]^h $ with no duplicated elements, $c_k^{t+h}(\vv, G)\to \mset{c_k^{t}(\psi_{\va}(\vv, \vu), G)|\vu\in V^h}$
\end{lemma}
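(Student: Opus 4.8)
The plan is to induct on $h$. The base case $h=1$ is essentially the definition of one step of $k$-WL: the update $c_k^{t+1}(\vv,G)=\text{Hash}\big(c_k^{t}(\vv,G),(\mset{c_k^{t}(\psi_i(\vv,u),G)\mid u\in V}\mid i\in[k])\big)$ exposes, for the single index $i=\va_1$, the multiset $\mset{c_k^{t}(\psi_{\va_1}(\vv,u),G)\mid u\in V}$, which is exactly $\mset{c_k^{t}(\psi_{\va}(\vv,\vu),G)\mid\vu\in V^1}$. So $c_k^{t+1}(\vv,G)\to\mset{c_k^{t}(\psi_{\va}(\vv,\vu),G)\mid\vu\in V}$ follows from the rule ``$f$ bijective $\Rightarrow f\circ g\to g$'' together with projection onto one coordinate of a tuple.

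For the inductive step, suppose the claim holds for $h-1$. Write $\va=\va_{1:h-1}\,|\!|\,(\va_h)$ (after reordering so the last index is the largest, which is harmless since $\va$ has no duplicates and the target multiset does not depend on the order of $\va$). First apply the $h=1$ case: $c_k^{t+h}(\vv,G)=c_k^{(t+h-1)+1}(\vv,G)\to\mset{c_k^{t+h-1}(\psi_{\va_h}(\vv,w),G)\mid w\in V}$. Now to each color $c_k^{t+h-1}(\psi_{\va_h}(\vv,w),G)$ appearing in this multiset, apply the inductive hypothesis with the $(h-1)$-index tuple $\va_{1:h-1}$ and the $k$-tuple $\psi_{\va_h}(\vv,w)$: this gives $c_k^{(t)+(h-1)}(\psi_{\va_h}(\vv,w),G)\to\mset{c_k^{t}(\psi_{\va_{1:h-1}}(\psi_{\va_h}(\vv,w),\vu'),G)\mid\vu'\in V^{h-1}}$. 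Composing (``$g\to h\Rightarrow f\circ g\to f\circ h$'', applied with $f$ the operation of forming a multiset over $w\in V$), we get that $c_k^{t+h}(\vv,G)$ determines the multiset over $w\in V$ of these inner multisets, hence in particular determines the flattened multiset $\mset{c_k^{t}(\psi_{\va_{1:h-1}}(\psi_{\va_h}(\vv,w),\vu'),G)\mid w\in V,\ \vu'\in V^{h-1}}$. Since $\va_h$ is distinct from all entries of $\va_{1:h-1}$, the combined substitution $\psi_{\va_{1:h-1}}(\psi_{\va_h}(\vv,w),\vu')$ equals $\psi_{\va}(\vv,\vu'\,|\!|\,(w))$, and as $(\vu',w)$ ranges over $V^{h-1}\times V$ the concatenation ranges over all of $V^h$. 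Therefore $c_k^{t+h}(\vv,G)\to\mset{c_k^{t}(\psi_{\va}(\vv,\vu),G)\mid\vu\in V^h}$, completing the induction.

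The main obstacle is bookkeeping rather than conceptual: one must be careful that composing ``$\to$'' through a multiset-forming operation is legitimate (it is, by the stated rule $g\to h\Rightarrow f\circ g\to f\circ h$ with $f$ the multiset map, since equal inputs give equal multisets), and that flattening a multiset-of-multisets is a function of the multiset-of-multisets (so the $\to$ relation is preserved). The only genuinely substantive point to check is the index-arithmetic identity $\psi_{\va_{1:h-1}}\big(\psi_{\va_h}(\vv,w),\vu'\big)=\psi_{\va}\big(\vv,\vu'\,|\!|\,(w)\big)$ when $\va_h\notin\{\va_1,\dots,\va_{h-1}\}$, which is immediate once the no-duplicates hypothesis is invoked, and the fact that $\vu\mapsto$ its decomposition $(\vu',w)$ is a bijection $V^h\to V^{h-1}\times V$ so no colors are lost or double-counted. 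I would also remark that iterating $h$ extra rounds (rather than one) is exactly what lets us peel off $h$ coordinates, which is why the statement needs $t+h$ on the left.
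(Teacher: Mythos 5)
Your proposal is correct and follows essentially the same route as the paper: induction on $h$, with the base case read off from the $k$-WL update rule and the inductive step peeling off the index $\va_h$ first and then applying the hypothesis to $\va_{1:h-1}$ before flattening the nested multiset. The paper's own proof is terser and leaves implicit the bookkeeping points you spell out (composing $\to$ through the multiset map, the identity $\psi_{\va_{1:h-1}}(\psi_{\va_h}(\vv,w),\vu')=\psi_{\va}(\vv,\vu'|\!|(w))$, and the bijection $V^h\cong V^{h-1}\times V$), so your write-up is a faithful, slightly more careful version of the same argument.
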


\begin{proof}
We enumerate $h$,

$h=1$: 
$\forall i\in k$
\begin{equation}
c_k^{t+1}(\vv, G)=\text{Hash}(c_k^{t}(\vv, G), \big(\mset{c_k^{t}(\psi_{j}(\vv, u), G)|u\in V}|j\in [k]\big)\to \mset{c_k^{t}(\psi_{i}(\vv, u), G)|u\in V}
\end{equation}

Assuming that $h>1$, $\forall \vv\in V^k, \va\in [k]^{h-1} $ with no duplicated elements, $c_k^{t+h-1}(\vv, G)\to \mset{c_k^{t}(\psi_{\va}(\vv, \vu), G)|\vu\in V^{h-1}}$.

$\forall \va\in [k]^{h}$
\begin{align}
    c_k^{t+h}(\vv, G)&\to \mset{c_k^{t+h-1}(\psi_{\va_h}(\vv, u), G)|u\in V}\\
    &\to \mset{\mset{c_k^{t}(\psi_{\va_{:h}}(\psi_{\va_h}(\vv, u), \vu), G)|\vu\in V^{h-1}}|u\in V}\\
    &\to \mset{c_k^{t}(\psi_{\va}(\vv, \vu), G)|\vu\in V^{h}}
\end{align}
\end{proof}

\begin{lemma}\label{lem:kl<=kl+1}
    $\forall k\ge 2, l\ge 0$, $k,l$-WL(TL) $\preceq$ $k,l+1$-WL(TL), $k,l$-WL(LT) $\preceq$ $k,l+1$-WL(LT)
\end{lemma}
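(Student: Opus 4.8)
The plan is to show that any color distinction available to $k,l$-WL is recoverable from the colors computed by $k,l+1$-WL, so that $c_{k,l+1}(G) = c_{k,l+1}(H)$ forces $c_{k,l}(G) = c_{k,l}(H)$. The key observation is that a labeled graph $G^{\vv}$ with $\vv \in V^l$ can be ``extracted'' from the family $\{G^{\vw} : \vw \in V^{l+1}\}$ by looking at all extensions $\vw = \vv\,||\,(u)$ as $u$ ranges over $V$: the label-$(l{+}1)$ node $u$ is just an extra distinguished node, and pooling over $u$ should recover the unlabeled-$u$ situation. So first I would establish a tuple-level statement: for a fixed graph $G$ and fixed $\vv \in V^l$, the multiset $\mset{c_k^t(\vu, G^{\vv\,||\,(u)}) \mid u \in V}$ determines $c_k^{t'}(\vu, G^{\vv})$ for an appropriate $t'$ (possibly $t' = t$, or $t' \le t$ with a small shift). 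The point is that $G^{\vv\,||\,(u)}$ and $G^{\vv}$ have the same edge set and the same node features except that in $G^{\vv\,||\,(u)}$ the node $u$ carries one additional label token; hashing over the choice of $u$ ``averages out'' that extra token, and when $u$ is not among the $k$-tuple $\vu$ nor adjacent in a way that matters, the color of $\vu$ in $G^{\vv\,||\,(u)}$ is closely tied to its color in $G^{\vv}$. One must be careful that the $k$-WL update on $G^{\vv\,||\,(u)}$ propagates the label of $u$; I would argue by induction on $t$ that $c_k^t(\vu, G^{\vv\,||\,(u)})$ refines $c_k^t(\vu, G^{\vv})$ (adding a label can only split colors), and that the multiset over $u$ of these refined colors still determines the coarser color.

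Concretely, the induction on the number of iterations $t$ is the workhorse. At $t = 0$ the initial color of $\vu$ in $G^{\vv\,||\,(u)}$ is the isomorphism type of $\vu$ relative to the labels in $\vv\,||\,(u)$, which clearly determines (maps onto) the isomorphism type relative to just $\vv$; moreover for $u \notin \{\vu_1, \dots, \vu_k\}$ the two initial colors of $\vu$ differ only in whether $u$ received its token, and knowing the full multiset over $u$ recovers $c_k^0(\vu, G^{\vv})$. For the inductive step I would apply the $k$-WL update formula in $G^{\vv\,||\,(u)}$ and use the inductive hypothesis together with the expressivity rules listed in the excerpt ($g \to h \Rightarrow f\circ g \to f \circ h$, etc.) to push the implication through the Hash; the appearance of the neighbor multisets $\mset{c_k^{t-1}(\psi_i(\vu, w), G^{\vv\,||\,(u)}) \mid w \in V}$ is handled by the same inductive hypothesis applied to $\psi_i(\vu, w)$. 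Having the tuple-level refinement, the graph-level TL statement follows by pooling: $c_k^t(G^{\vv}) = \text{Hash}(\mset{c_k^t(\vu, G^{\vv}) \mid \vu \in V^k})$ is determined by $\mset{\,\mset{c_k^t(\vu, G^{\vv\,||\,(u)}) \mid u \in V}\,\mid\, \vu \in V^k}$, which in turn is determined by $\mset{c_k^t(G^{\vv\,||\,(u)}) \mid u \in V}$ up to the usual multiset-of-multisets bookkeeping. Then one more outer Hash over $\vv \in V^l$ gives $c_{k,l,TL}^t(G) \to c_{k,l+1,TL}^t(G)$, i.e. $k,l$-WL(TL) $\preceq$ $k,l+1$-WL(TL). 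The LT case is the same argument with the order of the two pooling Hashes swapped; the tuple-level refinement lemma is identical, and only the final two lines reorganizing the multisets change.

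The main obstacle I anticipate is the bookkeeping around \emph{which} iteration count to compare and making the ``pooling over $u$ recovers the unlabeled color'' step airtight. Adding the extra distinguished node $u$ genuinely changes the dynamics of $k$-WL on $G^{\vv\,||\,(u)}$ — the label of $u$ can propagate and split colors that would have been equal in $G^{\vv}$ — so the clean statement is a \emph{refinement} ($c_k^t(\cdot, G^{\vv\,||\,(u)})$ refines $c_k^t(\cdot, G^{\vv})$) rather than an equality, and one has to check that taking the multiset over all $u$ does not lose the ability to reconstruct the coarser $G^{\vv}$-color. The subtle case is when $u$ coincides with one of the entries of the $k$-tuple $\vu$: then the extra token sits on $\vu$ itself, but this is still a legitimate refinement and, crucially, for each fixed $\vu$ there is at least one choice of $u$ (in fact all $u \notin \vu$, which is nonempty as long as $|V| > k$, and the boundary case $|V| \le k$ can be dispatched separately or absorbed) for which the token lands on a node \emph{not} in $\vu$, letting us read off the un-tokened behavior. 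I would phrase the tuple-level lemma carefully as ``$c_k^t(\vu, G^{\vv}) \to \mset{c_k^t(\vu', G^{\vv\,||\,(u)}) : u \in V, \vu' = \vu}$'' — actually as an arrow showing the \emph{refined} family determines the \emph{coarse} color — and prove it by the iteration induction above; everything else is routine application of the composition rules for $\to$ already set up in the Notations subsection.
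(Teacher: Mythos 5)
Your approach is essentially the paper's: the core step is the single-extension refinement $c_k^t(\vu, G^{\vv||(u)}) \to c_k^t(\vu, G^{\vv})$ proved by induction on $t$ (isomorphism types at $t=0$, then pushing the inductive hypothesis through the Hash of the update rule), followed by multiset bookkeeping for the TL and LT poolings — exactly what the paper does, writing the truncation as $\vu_{:l+1}$. Your worries about iteration shifts, the case $u\in\vu$, and ``pooling over $u$ to recover the unlabeled color'' are unnecessary: the refinement holds for every single $u$ at the same iteration $t$, so the coarse color is obtained by elementwise coarsening and no nested-multiset swap is needed.
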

\begin{proof}
Given graph $G=(V,E,X)$, we first prove that $\forall \vv\in V^k, \vu\in V^{l+1}, c_{k}^t(\vv, G^{\vu})\to c_{k}^t(\vv, G^{\vu_{:l+1}})$ by enumerating $t$.

\begin{enumerate}
    \item $t=0$, the color is isomorphism type.
    $\forall G_1=(V_1,E_1,X_1), G_2=(V_2, E_2, X_2),\forall \vv^1\in V_1^{k}, \vv^2\in V_2^{k}, \vu^1\in V_1^{l+1}, \vu^2\in V_2^{l+1}$,
    \begin{align}
        c_{k+1}^0(\vv^1, G_1^{\vu^1})=c_{k+1}^0(\vv^2, G_2^{\vu^2})\Rightarrow (\forall i_1,i_2\in [k], \vv^1_{i_1}=\vv^1_{i_2}\leftrightarrow \vv^2_{i_1}=\vv^2_{i_2})
        \\
        \land (\forall i\in [k], {X_1^{\vu^1}}_{\vv^1_i}={X_2^{\vu^2}}_{\vv^2_i})
        \land (\forall i_1,i_2\in [k], (\vv^1_{i_1},\vv^1_{i_2})\in E_1\leftrightarrow (\vv^2_{i_1},\vv^2_{i_2})\in E_2)\\
        \Rightarrow (\forall i_1,i_2\in [k+1], \vv^1_{i_1}=\vv^1_{i_2}\leftrightarrow \vv^2_{i_1}=\vv^2_{i_2})
        \\
        \land (\forall i\in [k], {X_1^{\vu^1_{:l+1}}}_{\vv^1_i}={X_2^{\vu^2_{:l+1}}}_{\vv^2_i})
        \land (\forall i_1,i_2\in [k], (\vv^1_{i_1},\vv^1_{i_2})\in E_1\leftrightarrow (\vv^2_{i_1},\vv^2_{i_2})\in E_2)\\
        \Rightarrow c_{k}^0(\vv^1, G_1^{\vu^1_{:l+1}})=c_{k}^0(\vv^2, G_2^{\vu^2_{:l+1}})
    \end{align}
    Therefore, $c_{k}^0(\vv, G^{\vu})\to c_{k}^0(\vv, G^{\vu_{:l+1}})$
\item $\forall t>0$,
\begin{align}
    c_{k}^t(\vv, G^{\vu})=\text{Hash}\big(c_{k}^{t-1}(\vv, G^{\vu}), (\mset{c_{k}^{t-1}(\psi_i(\vv, u), G^{\vu})|u\in V}|i\in [k])\big)\\
    \to \text{Hash}\big(c_{k}^{t-1}(\vv, G^{\vu_{:l+1}}), (\mset{c_{k}^{t-1}(\psi_i(\vv, u), G^{\vu_{:l+1}})|u\in V}|i\in [k])\big)\\
    \to c_{k}^t(\vv, G^{\vu_{:l+1}}).
\end{align}
\end{enumerate}

With TL pooling:
\begin{align}
    c_{k,l+1, TL}^t(G)=\text{Hash}(\mset{\mset{c_{k}^t(\vv, G^{\vu})|\vv\in V^k}|\vu\in V^{l+1}})\\
    \to \text{Hash}(\mset{\mset{c_{k}^t(\vv, G^{\vu_{:l+1}})|\vv\in V^k}|\vu\in V^{l+1}})\\
    \to \text{Hash}(\mset{\mset{c_{k}^t(\vv, G^{\vu})|\vv\in V^k}|\vu\in V^{l}})\\
    \to c_{k, l, TL}^t(G)
\end{align}

With LT pooling:

\begin{align}
    c_{k,l+1, LT}^t(G)=\text{Hash}(\mset{\mset{c_{k}^t(\vv, G^{\vu})|\vu\in V^{l+1}}|\vv\in V^{k}})\\
    \to \text{Hash}(\mset{\mset{c_{k}^t(\vv, G^{\vu_{:l+1}})|\vu\in V^{l+1}}|\vv\in V^{k}})\\
    \to \text{Hash}(\mset{\mset{c_{k}^t(\vv, G^{\vu})|\vu\in V^{l}}|\vv\in V^k})\\
    \to c_{k, l, LT}^t(G)
\end{align}
\end{proof}

\begin{lemma}\label{lem:kl<=k+1l}
    $\forall k\ge 2, l\ge 0$, $k,l$-WL(TL) $\preceq$ $k+1,l$-WL(TL), $k,l$-WL(LT) $\preceq$ $k+1,l$-WL(LT)
\end{lemma}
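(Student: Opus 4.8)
The goal is to prove Lemma~\ref{lem:kl<=k+1l}: for all $k\ge 2, l\ge 0$, both $k,l$-WL(TL) $\preceq$ $k+1,l$-WL(TL) and $k,l$-WL(LT) $\preceq$ $k+1,l$-WL(LT). The plan is to mirror the structure of the proof of Lemma~\ref{lem:kl<=kl+1}, but now the extra resource is one more dimension in the WL test rather than one more label. The natural strategy is the standard ``padding'' argument: a $k$-tuple $\vv$ in a labeled graph $G^{\vu}$ is embedded into a $(k{+}1)$-tuple by appending a copy of its last coordinate (or any fixed coordinate), and we show that the color of the padded $(k{+}1)$-tuple under $k{+}1$-WL refines the color of $\vv$ under $k$-WL, on the \emph{same} labeled graph $G^{\vu}$. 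Since the labeling is identical on both sides, the label set $V^l$ over which we pool is unchanged, so once the per-tuple refinement is established the pooling step (TL or LT) goes through verbatim as in Lemma~\ref{lem:kl<=kl+1}.

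First I would fix a graph $G=(V,E,X)$ and a label tuple $\vu\in V^l$, and for $\vv\in V^k$ write $\bar\vv = \vv\,|\!|\,\vv_k \in V^{k+1}$ for the padded tuple. The core claim is: for all $t\ge 0$ and all $\vv\in V^k$, $c_{k+1}^t(\bar\vv, G^{\vu}) \to c_{k}^t(\vv, G^{\vu})$, and moreover this implication is ``uniform'' across tuples and graphs (i.e.\ equality of padded colors implies equality of unpadded colors), which is what $\to$ means here. I would prove this by induction on $t$. The base case $t=0$ uses the isomorphism-type initialization: the iso-type of $\bar\vv$ in $G^{\vu}$ determines the iso-type of $\vv$ in $G^{\vu}$, because the constraints defining the iso-type of $\vv$ (equality pattern among coordinates $1,\dots,k$; node features $X^{\vu}_{\vv_i}$; adjacency among coordinates) are literally a subset of those defining the iso-type of $\bar\vv$ (restrict the indices $i_1,i_2$ to range over $[k]$ instead of $[k{+}1]$). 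This is exactly parallel to the $t=0$ computation in Lemma~\ref{lem:kl<=kl+1}.

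For the inductive step, assume $c_{k+1}^{t-1}(\bar\vw, G^{\vu}) \to c_{k}^{t-1}(\vw, G^{\vu})$ for all $\vw\in V^k$. I need to recover the $k$-WL update for $\vv$ from the $(k{+}1)$-WL update for $\bar\vv$. The $(k{+}1)$-WL update of $\bar\vv$ aggregates, for each position $i\in[k{+}1]$, the multiset $\mset{c_{k+1}^{t-1}(\psi_i(\bar\vv, w), G^{\vu})\mid w\in V}$. For $i\le k$, replacing the $i$-th coordinate of $\bar\vv$ by $w$ yields a tuple whose first $k$ coordinates are $\psi_i(\vv,w)$; so after re-padding issues are handled, the multiset over $w$ maps onto $\mset{c_k^{t-1}(\psi_i(\vv,w), G^{\vu})\mid w\in V}$ via the induction hypothesis. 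The one subtlety is that $\psi_i(\bar\vv, w)$ for $i\le k$ is \emph{not} exactly the padded tuple $\overline{\psi_i(\vv,w)}$ in general (the last coordinate of $\bar\vv$ is $\vv_k$, which is untouched when $i<k$ but equals the replaced coordinate when $i=k$); I would handle this either by using $i=k+1$ (where $\psi_{k+1}(\bar\vv,w) = \vv\,|\!|\,w$, recovering every $\psi_i$-neighbor of $\vv$ by Lemma~\ref{lem:kwl-pool}-style reasoning is not quite what's needed) — more cleanly, I would appeal to the fact that $c_{k+1}^t(\bar\vv,G^{\vu})$ determines $c_{k+1}^{t-1}$ of all its position-$i$ neighbors, and a position-$i$ neighbor of $\bar\vv$ for $i\le k$, restricted to its first $k$ coordinates, is a position-$i$ neighbor of $\vv$, and then invoke the induction hypothesis on these length-$k$ restrictions (noting the restriction commutes with padding-then-restricting). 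Bundling: $c_{k+1}^t(\bar\vv, G^{\vu}) \to \big(c_{k+1}^{t-1}(\bar\vv,G^{\vu}), (\mset{c_{k+1}^{t-1}(\psi_i(\bar\vv,w),G^{\vu})\mid w\in V}\mid i\in[k])\big) \to \big(c_k^{t-1}(\vv,G^{\vu}), (\mset{c_k^{t-1}(\psi_i(\vv,w),G^{\vu})\mid w\in V}\mid i\in[k])\big) \to c_k^t(\vv,G^{\vu})$.

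Finally, with the per-tuple refinement $c_{k+1}^t(\bar\vv, G^{\vu}) \to c_k^t(\vv, G^{\vu})$ in hand, the pooling step is immediate and identical in form to Lemma~\ref{lem:kl<=kl+1}. For TL pooling, $c_{k+1,l,TL}^t(G) = \text{Hash}(\mset{\mset{c_{k+1}^t(\vv', G^{\vu})\mid \vv'\in V^{k+1}}\mid \vu\in V^l})$; the inner multiset over all $(k{+}1)$-tuples in particular contains all padded tuples $\{\bar\vv\mid\vv\in V^k\}$, so it determines $\mset{c_{k+1}^t(\bar\vv,G^{\vu})\mid\vv\in V^k}$, which by the refinement determines $\mset{c_k^t(\vv,G^{\vu})\mid\vv\in V^k}$, and hence $c_{k+1,l,TL}^t(G) \to c_{k,l,TL}^t(G)$. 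The LT case is the same with the two multiset layers swapped, exactly as in Lemma~\ref{lem:kl<=kl+1}. The main obstacle I anticipate is the bookkeeping around padding in the inductive step — specifically making precise that the length-$k$ restriction of a position-$i$ neighbor ($i\le k$) of $\bar\vv$ is the corresponding position-$i$ neighbor of $\vv$, and that padding the last coordinate is a harmless choice; everything else is a routine mechanical imitation of the previous lemma's proof.
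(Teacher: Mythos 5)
Your high-level plan (a per-tuple color refinement on each fixed labeled graph $G^{\vu}$, followed by the pooling step copied from Lemma~\ref{lem:kl<=kl+1}) is the right one and matches the paper's strategy. However, your specific formulation via padding has a genuine gap in the inductive step. Your induction hypothesis is stated only for \emph{padded} tuples: $c_{k+1}^{t-1}(\bar\vw, G^{\vu})\to c_k^{t-1}(\vw,G^{\vu})$ where $\bar\vw=\vw\,|\!|\,\vw_k$. But the $(k{+}1)$-WL update of $\bar\vv$ feeds you the colors $c_{k+1}^{t-1}(\psi_i(\bar\vv,w),G^{\vu})$, and for $i=k$ the neighbor $\psi_k(\bar\vv,w)=(\vv_1,\dots,\vv_{k-1},w,\vv_k)$ is \emph{not} a padded tuple (it differs from $\overline{\psi_k(\vv,w)}=(\vv_1,\dots,\vv_{k-1},w,w)$ unless $w=\vv_k$), so the induction hypothesis cannot be invoked on it. Your parenthetical ``the restriction commutes with padding-then-restricting'' is true only for $i<k$; at the padded position the two tuples genuinely differ, and there is no way to pass from the color of one to the color of the other without already knowing a projection statement for arbitrary $(k{+}1)$-tuples. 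The correct fix is exactly to strengthen the induction hypothesis to that projection statement — for \emph{every} $\vv\in V^{k+1}$, $c_{k+1}^t(\vv,G^{\vu})\to c_k^t(\vv_{:k+1},G^{\vu})$ — which is what the paper proves; once you do this the padding becomes superfluous, since every position-$i$ neighbor ($i\le k$) of any $(k{+}1)$-tuple restricts to the corresponding position-$i$ neighbor of its $k$-prefix and the hypothesis applies directly.

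A second, smaller gap is in your pooling step: from the multiset $\mset{c_{k+1}^t(\vv',G^{\vu})\mid\vv'\in V^{k+1}}$ you claim to recover the sub-multiset over padded tuples because the former ``contains'' the latter, but a multiset of colors does not in general determine a sub-multiset indexed by a subset of tuples. This happens to be repairable here — whether $\vv'_k=\vv'_{k+1}$ is part of the isomorphism type, hence visible in every color from $t=0$ onward, so the padded-tuple colors can be filtered out of the full multiset — but you must say so. The paper's projection formulation avoids the issue entirely: $\mset{c_k^t(\vv'_{:k+1},G^{\vu})\mid\vv'\in V^{k+1}}$ is exactly $|V|$ copies of $\mset{c_k^t(\vw,G^{\vu})\mid\vw\in V^k}$, which determines it immediately.
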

\begin{proof}
Given graph $G=(V,E,X)$, we first prove that $\forall \vv\in V^{k+1}, \vu\in V^{l}, c_{k+1}^t(\vv, G^{\vu})\to c_{k}^t(\vv_{:k+1}, G^{\vu})$ by enumerating $t$.

\begin{enumerate}
    \item $t=0$, the color is isomorphism type.
    $\forall G_1=(V_1,E_1,X_1), G_2=(V_2, E_2, X_2),\forall \vv^1\in V_1^{k+1}, \vv^2\in V_2^{k+1}, \vu^1\in V_1^l, \vu^2\in V_2^l$,
    \begin{align}
        c_{k+1}^0(\vv^1, G_1^{\vu^1})=c_{k+1}^0(\vv^2, G_2^{\vu^2})\Rightarrow (\forall i_1,i_2\in [k+1], \vv^1_{i_1}=\vv^1_{i_2}\leftrightarrow \vv^2_{i_1}=\vv^2_{i_2})
        \\
        \land (\forall i\in [k+1], {X_1^{\vu^1}}_{\vv^1_i}={X_2^{\vu^2}}_{\vv^2_i})
        \land (\forall i_1,i_2\in [k+1], (\vv^1_{i_1},\vv^1_{i_2})\in E_1\leftrightarrow (\vv^2_{i_1},\vv^2_{i_2})\in E_2)\\
        \Rightarrow (\forall i_1,i_2\in [k+1], \vv^1_{i_1}=\vv^1_{i_2}\leftrightarrow \vv^2_{i_1}=\vv^2_{i_2})
        \\
        \land (\forall i\in [k], {X_1^{\vu^1}}_{\vv^1_i}={X_2^{\vu^2}}_{\vv^2_i})
        \land (\forall i_1,i_2\in [k], (\vv^1_{i_1},\vv^1_{i_2})\in E_1\leftrightarrow (\vv^2_{i_1},\vv^2_{i_2})\in E_2)\\
        \Rightarrow c_{k}^0(\vv^1_{:k+1}, G_1^{\vu^1})=c_{k}^0(\vv^2_{:k+1}, G_2^{\vu^2})
    \end{align}
    Therefore, $c_{k}^0(\vv, G^{\vu})\to c_{k}^0(\vv, G^{\vu_{:l+1}})$
\item $\forall t>0$,
\begin{align}
    c_{k+1}^t(\vv, G^{\vu})=\text{Hash}\big(c_{k+1}^{t-1}(\vv, G^{\vu}), (\mset{c_{k+1}^{t-1}(\psi_i(\vv, u), G^{\vu})|u\in V}|i\in [k+1])\big)\\
    \to \text{Hash}\big(c_{k+1}^{t-1}(\vv, G^{\vu}), (\mset{c_{k+1}^{t-1}(\psi_i(\vv, u), G^{\vu})|u\in V}|i\in [k])\big)\\
    \to \text{Hash}\big(c_{k}^{t-1}(\vv_{:k+1}, G^{\vu}), (\mset{c_{k}^{t-1}(\psi_i(\vv_{:k+1}, u), G^{\vu})|u\in V}|i\in [k])\big)\\
    \to c_{k}^t(\vv_{k+1}, G^{\vu}).
\end{align}
\end{enumerate}

With TL pooling:
\begin{align}
    c_{k+1,l,TL}^t(G)=\text{Hash}(\mset{\mset{c_{k+1}^t(\vv, G^{\vu})|\vv\in V^{k+1}}|\vu\in V^{l}})\\
    \to \text{Hash}(\mset{\mset{c_{k}^t(\vv, G^{\vu})|\vv\in V^{k+1}}|\vu\in V^{l}})\\
    \to \text{Hash}(\mset{\mset{c_{k}^t(\vv, G^{\vu})|\vv\in V^k}|\vu\in V^{l}})\\
    \to c_{k, l,TL}^t(G)
\end{align}

With LT pooling:

\begin{align}
    c_{k+1,l,LT}^t(G)=\text{Hash}(\mset{\mset{c_{k}^t(\vv, G^{\vu})|\vu\in V^{l}}|\vv\in V^{k+1}})\\
    \to \text{Hash}(\mset{\mset{c_{k}^t(\vv_{:k+1}, G^{\vu})|\vu\in V^{l}}|\vv\in V^{k+1}})\\
    \to \text{Hash}(\mset{\mset{c_{k}^t(\vv, G^{\vu})|\vu\in V^{l}}|\vv\in V^k})\\
    \to c_{k, l, LT}^t(G)
\end{align}
\end{proof}

\begin{lemma}\label{lem:kl+1<=k+1l}
    $\forall k\ge 2, l\ge 0$, $k,l+1$-WL(TL) $\preceq$ $k+1,l$-WL(TL)
\end{lemma}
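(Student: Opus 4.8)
Proof proposal for Lemma~\ref{lem:kl+1<=k+1l} ($\forall k\ge 2, l\ge 0$, $k,l+1$-WL(TL) $\preceq$ $k+1,l$-WL(TL)).

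The plan is to show that the color table of $k,l+1$-WL on $G$ can be recovered from the color table of $k+1,l$-WL on $G$. The key idea is that labeling one extra node (passing from $l$ to $l+1$ labels) can be simulated by promoting that node into the tuple dimension instead (passing from $k$ to $k+1$ tuple entries). Concretely, for an $l$-tuple $\vu\in V^l$ and an extra labeled node $w\in V$, the labeled graph $G^{\vu||w}$ used by $k,l+1$-WL should be ``matched'' against the colors $c_{k+1}^t(\vv||w, G^{\vu})$ of $k+1,l$-WL, where $\vv\in V^k$ ranges over the first $k$ coordinates and the last coordinate is pinned to $w$. So first I would fix $t$ large enough for stable colors and prove the pointwise claim
\begin{equation}
\forall \vv\in V^k,\ \forall w\in V,\quad c_{k+1}^t(\vv||w, G^{\vu}) \to c_k^t(\vv, G^{\vu||w}),
\end{equation}
by induction on $t$, exactly in the style of \cref{lem:kl<=k+1l} and \cref{lem:kl<=kl+1}. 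The base case $t=0$ is a comparison of isomorphism types: the isomorphism type of the $(k+1)$-tuple $\vv||w$ in $G^{\vu}$ records (i) the equality pattern among $\vv_1,\dots,\vv_k,w$, (ii) the (labeled) node features $X^{\vu}_{\vv_i}$ and $X^{\vu}_w$, and (iii) the adjacency pattern; from this data one can read off the equality pattern among $\vv_1,\dots,\vv_k$, and for each $i$ one can reconstruct $X^{\vu||w}_{\vv_i}$, since $X^{\vu||w}_{\vv_i}=\text{Hash}(X_{\vv_i},\mset{j|(\vu||w)_j=\vv_i})$ and membership ``$\vv_i=w$'' (contributing label $l+1$) is exactly recorded by the equality pattern between $\vv_i$ and the last coordinate $w$. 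Hence the $t=0$ implication holds.

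For the inductive step $t>0$, the update for $c_{k+1}^t(\vv||w, G^{\vu})$ aggregates, for each $i\in[k+1]$, the multiset $\mset{c_{k+1}^{t-1}(\psi_i(\vv||w,u),G^{\vu})|u\in V}$. Restricting attention to $i\in[k]$ (discarding the $i=k+1$ summand, which is the ``lossy'' direction of the implication) leaves exactly the neighbor multisets in which the last coordinate stays pinned at $w$; by the induction hypothesis each such color determines $c_k^{t-1}(\psi_i(\vv,u),G^{\vu||w})$, and the $i=0$ term $c_{k+1}^{t-1}(\vv||w,G^{\vu})$ determines $c_k^{t-1}(\vv,G^{\vu||w})$. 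Feeding these into a $\text{Hash}$ recovers exactly the $k$-WL update on $G^{\vu||w}$, so the claim propagates. The only subtle point here is indexing hygiene — making sure $\psi_i$ for $i\le k$ on the $(k+1)$-tuple corresponds to $\psi_i$ on the first $k$ coordinates — which is routine.

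Finally, I would lift the pointwise statement to the TL-pooled graph colors. Using the pointwise implication and the basic expressivity rules for multisets,
\begin{align}
c_{k+1,l,TL}^t(G)
&=\text{Hash}\big(\mset{\mset{c_{k+1}^t(\vw, G^{\vu})|\vw\in V^{k+1}}|\vu\in V^l}\big)\\
&\to \text{Hash}\big(\mset{\mset{c_{k+1}^t(\vv||w, G^{\vu})|\vv\in V^k, w\in V}|\vu\in V^l}\big)\\
&\to \text{Hash}\big(\mset{\mset{c_k^t(\vv, G^{\vu||w})|\vv\in V^k, w\in V}|\vu\in V^l}\big)\\
&\to \text{Hash}\big(\mset{\mset{c_k^t(\vv, G^{\vu||w})|\vv\in V^k}|w\in V, \vu\in V^l}\big)\\
&\to c_{k,l+1,TL}^t(G),
\end{align}
where the reindexing step uses the bijection $V^{k+1}\cong V^k\times V$ and the penultimate step regroups the double multiset over $\vu\in V^l$ and $w\in V$ into a single multiset over $\vu||w\in V^{l+1}$ (which is exactly what TL pooling of $k,l+1$-WL does). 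The main obstacle is really the $t=0$ isomorphism-type bookkeeping — verifying that the extra label ``$l+1$ on $w$'' inside $G^{\vu||w}$ is faithfully encoded by the coordinate-equality information ``$\vv_i$ equals the last coordinate'' in the $(k+1)$-tuple isomorphism type in $G^{\vu}$; once that is nailed down, the rest is a routine induction plus multiset regrouping. Note this direction is one-way: we cannot recover the $i=k+1$ neighbor aggregation of $k+1,l$-WL from $k,l+1$-WL, which is consistent with the strict separation asserted in \cref{k-1l+1<=kl}.
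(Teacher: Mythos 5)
Your pointwise induction, showing $c_{k+1}^t(\vv|\!|w,G^{\vu})\to c_k^t(\vv,G^{\vu|\!|w})$ for all $t$, matches the paper's argument essentially verbatim, including the $t=0$ bookkeeping that the extra label $l+1$ on $w$ in $G^{\vu|\!|w}$ is recoverable from the coordinate-equality pattern between $\vv_i$ and the last entry of the $(k+1)$-tuple. The gap is in the final TL-pooling step, at the line you describe as ``regrouping the double multiset.'' After applying the pointwise map, what $k+1,l$-WL(TL) hands you for each fixed $\vu\in V^l$ is the \emph{flat} inner multiset $M_{\vu}=\mset{c_k^t(\vv,G^{\vu|\!|w})\mid \vv\in V^k,\, w\in V}$, whereas $k,l+1$-WL(TL) needs the multiset of the \emph{per-$w$} blocks $N_{\vu,w}=\mset{c_k^t(\vv,G^{\vu|\!|w})\mid\vv\in V^k}$. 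Since $M_{\vu}=\biguplus_{w} N_{\vu,w}$, knowing $\mset{M_{\vu}\mid\vu\in V^l}$ does not in general determine $\mset{N_{\vu,w}\mid \vu\in V^l, w\in V}$: a flat multiset does not remember how it partitions into blocks. So the step from your third displayed line to your fourth is not a routine reindexing; it is exactly the point where the lemma needs an extra idea.

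The paper closes this gap with \cref{lem:kwl-pool}: for (near-)stable colors, the color of a single $(k+1)$-tuple already encodes the multiset of colors of all tuples obtained by resubstituting its first $k$ coordinates, i.e.\ $c_{k+1}^{t}(\vv|\!|w,G^{\vu})\to\mset{c_{k+1}^{t-k}(\vv'|\!|w,G^{\vu})\mid\vv'\in V^k}$. Hence each element of the flat multiset $M_{\vu}$ determines (after your pointwise map) the block $N_{\vu,w}$ to which it belongs, each block is reproduced with uniform multiplicity $|V|^k$, and the nested multiset $\mset{N_{\vu,w}\mid w\in V}$ can be reconstructed from $M_{\vu}$. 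You should either invoke this self-pooling property or prove it (it is itself a short induction on the number of resubstituted coordinates using the $k$-WL update rule); without it the regrouping in your chain of implications does not follow. The rest of your argument, including the observation that discarding the $i=k+1$ aggregation term is what makes the implication one-way, is sound.
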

\begin{proof}
Given graph $G=(V,E,X)$, we first prove that $\forall \vv\in V^{k}, \vu\in V^{l}, w\in V, c_{k+1}^t(\vv|\!|w, G^{\vu})\to c_{k}^t(\vv, G^{\vu|\!|w})$ by enumerating $t$.

\begin{enumerate}
    \item $t=0$, the color is isomorphism type. 
    $\forall G_1=(V_1,E_1,X_1), G_2=(V_2, E_2, X_2),\forall \vv^1\in V_1^{k}, \vv^2\in V_2^{k},w^1\in V_1, w_2\in V_2, \vu^1\in V_1^l, \vu^2\in V_2^l$,
    \begin{align}
        c_{k+1}^0(\vv^1|\!|w^1, G_1^{\vu^1})=c_{k+1}^0(\vv^2|\!|w^2, G_2^{\vu^2})\Rightarrow 
        \\
        \big(\forall i_1,i_2\in [k+1], (\vv^1|\!|w^1)_{i_1}=(\vv^1|\!|w^1)_{i_2}\leftrightarrow (\vv^2|\!|w^2)_{i_1}=(\vv^2|\!|w^2)_{i_2}\big)
        \\
        \land (\forall i\in [k], {X_1^{\vu^1}}_{\vv^1_i}={X_2^{\vu^2}}_{\vv^2_i})\land ({X_1^{\vu^1}}_{w^1}={X_2^{\vu^2}}_{w^2})
        \\
        \land (\forall i_1,i_2\in [k+1], ((\vv^1|\!|w^1)_{i_1},\vv^1_{i_2})\in E_1\leftrightarrow ((\vv^2|\!|w^2)_{i_1},(\vv^2|\!|w^2)_{i_2})\in E_2)\\
        \Rightarrow (\forall i_1,i_2\in [k], \vv^1_{i_1}=\vv^1_{i_2}\leftrightarrow \vv^2_{i_1}=\vv^2_{i_2})
        \\
        \land (\forall i\in [k], {X_1^{\vu^1|\!|w^1}}_{\vv^1_i}={X_2^{\vu^2|\!|w^2}}_{\vv^2_i})
        \land (\forall i_1,i_2\in [k], (\vv^1_{i_1},\vv^1_{i_2})\in E_1\leftrightarrow (\vv^2_{i_1},\vv^2_{i_2})\in E_2)\\
        \Rightarrow c_{k}^0(\vv^1, G_1^{\vu^1|\!|w^1})=c_{k}^0(\vv^2, G_2^{\vu^2|\!|w^2})
    \end{align}
    Therefore, $c_{k+1}^t(\vv||w, G^\vu)\to c_{k}^t(\vv, G^{\vu|\!|w})$.
\item $\forall t>0$,
\begin{align}
    c_{k+1}^t(\vv|\!|w, G^{\vu})=\text{Hash}\big(c_{k+1}^{t-1}(\vv|\!|w, G^{\vu}), (\mset{c_{k+1}^{t-1}(\psi_i(\vv|\!|w, u), G^{\vu})|u\in V}|i\in [k+1])\big)\\
    \to \text{Hash}\big(c_{k+1}^{t-1}(\vv|\!|w, G^{\vu}), (\mset{c_{k+1}^{t-1}(\psi_i(\vv, u)|\!|w, G^{\vu})|u\in V}|i\in [k])\big)\\
    \to \text{Hash}\big(c_{k}^{t-1}(\vv, G^{\vu|\!|w}), (\mset{c_{k}^{t-1}(\psi_i(\vv, u), G^{\vu|\!|w})|u\in V}|i\in [k])\big)\\
    \to c_{k}^t(\vv, G^{\vu|\!|w}).
\end{align}
\end{enumerate}

With TL pooling:
\begin{align}
    c_{k+1,l, TL}^t(G)=\text{Hash}(\bmset{\mset{c_{k+1}^t(\vv|\!|w, G^{\vu})|\vv\in V^{k},w\in V}|\vu\in V^{l}})\\
\end{align}
According to Lemma~\ref{lem:kwl-pool},
\begin{align}
    c_{k+1,l, TL}^t(G)    \to \text{Hash}(\Bmset{\bmset{\mset{c_{k+1}^{t-k}(\vv'|\!|w, G^{\vu})|\vv'\in V^{k}}|\vv\in V^{k},w\in V}|\vu\in V^{l}})\\
    \to \text{Hash}(\bmset{\mset{c_{k+1}^{t-k}(\vv'|\!|w, G^{\vu})|\vv'\in V^{k}}|w\in V,\vu\in V^{l}})\\
    \to \text{Hash}(\bmset{\mset{c_{k}^{t-k}(\vv', G^{\vu|\!|w})|\vv'\in V^{k}}|w\in V,\vu\in V^{l}})\\
    \to c_{k, l+1, TL}^{k-1}(G)
\end{align}
\end{proof}

\subsection{$\not\preceq$ Results}\label{sectionproofexample}

According to Lemma 27 from ~\citep{OSAN}, 
\begin{proposition}
    $\forall k\ge 1$, there exists a pair of graphs that $2, k$-WL(TL) can differentiate while $k+1$-WL cannot.
\end{proposition}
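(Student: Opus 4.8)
The plan is to reduce the statement to a known result about $l$-OSAN rather than construct a separating pair of graphs from scratch. By the earlier theorem $\forall l\ge 0$, $1,l$-WL $\cong$ $2,l$-WL (with the default TL pooling), and by the identification noted immediately after that theorem, $1,l$-WL is precisely $l$-OSAN of \citet{OSAN}. Composing these, $2,k$-WL(TL) has the same graph-distinguishing power as $k$-OSAN. Hence it suffices to exhibit, for each $k\ge 1$, non-isomorphic graphs $G_k, H_k$ with $c_{k\text{-OSAN}}(G_k)\neq c_{k\text{-OSAN}}(H_k)$ but $c_{k+1}(G_k)=c_{k+1}(H_k)$, and this is exactly the content of Lemma~27 of \citet{OSAN}.

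So the first step is to invoke Lemma~27 of \citet{OSAN}, which supplies such a family for every $k$. The second step is a short bookkeeping check that the reduction is faithful: (i) TL pooling — first take the multiset of $k$-tuple colors inside each labeled graph $G^{\vv}$, then the multiset over $\vv\in V^l$ — matches OSAN's subgraph-level readout followed by the aggregation over orderings of the marked tuple; and (ii) both algorithms are run to stabilization, consistent with the standing convention in \cref{SectionProofExpressiveHierarchy}. Given (i)--(ii), the chain of $\cong$'s transports the separation, so $2,k$-WL(TL) distinguishes $G_k$ from $H_k$ while $(k+1)$-WL does not, which is precisely the asserted $\not\preceq$.

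The main obstacle is not mathematical depth but fidelity of the translation between frameworks: all of the non-trivial construction lives inside \citet{OSAN}'s Lemma~27, and the only real work is confirming that their definition of $k$-OSAN — in particular how the orderings of the $k$ marked nodes are enumerated and how the final graph invariant is assembled — coincides with $2,k$-WL(TL) as set up in \cref{klWL} and the TL variant of \cref{SectionProofExpressiveHierarchy}. Were one to want a self-contained argument instead, the hard part would be re-deriving the separating gadgets (typically Cai--F\"urer--Immerman style) and proving both directions on them: the positive claim that marking $k$ nodes and running $1$-WL already tells the two graphs apart, and the negative claim that the stable $(k+1)$-WL coloring cannot, the latter usually via a pebble-game / Ehrenfeucht--Fra\"iss\'e argument tailored to the gadgets.
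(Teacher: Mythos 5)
Your proposal matches the paper's own treatment: the paper states this proposition as a direct consequence of Lemma~27 of \citet{OSAN}, implicitly using the same identification of $2,k$-WL(TL) with $k$-OSAN via the $1,l$-WL $\cong$ $2,l$-WL equivalence that you spell out. Your additional bookkeeping about the fidelity of the TL-pooling translation is a reasonable (and slightly more careful) elaboration of the same argument, not a different route.
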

\begin{corollary}
    $\forall k\ge 2, l\ge 0$, $k,l$-WL(TL) $\prec$ $k, l+1$-WL(TL), $k,l$-WL(TL) $\prec$ $k+1, l$-WL(TL).
\end{corollary}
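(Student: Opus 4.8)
The plan is to upgrade the two monotonicity lemmas already proved --- \cref{lem:kl<=kl+1} ($k,l$-WL(TL) $\preceq$ $k,l+1$-WL(TL)) and \cref{lem:kl<=k+1l} ($k,l$-WL(TL) $\preceq$ $k+1,l$-WL(TL)) --- from $\preceq$ to $\prec$, which only requires producing, for each fixed $k\ge 2$ and $l\ge 0$, one pair of non-isomorphic graphs that $k,l+1$-WL (resp.\ $k+1,l$-WL) distinguishes but $k,l$-WL does not. A \emph{single} witness will serve both purposes: the pair $P$ supplied by the preceding Proposition (Lemma~27 of \citet{OSAN}) at parameter $m = k+l-1$. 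This is legitimate because $k\ge 2$ forces $m\ge 1$, and by construction $2,k+l-1$-WL(TL) distinguishes $P$ while $(k+l)$-WL does not.

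First I would argue that $k,l$-WL does \emph{not} distinguish $P$. For $l\ge 1$ this is immediate from \cref{k+l>=kl}, which gives $k,l$-WL $\preceq$ $(k+l)$-WL, and $(k+l)$-WL fails on $P$ by the choice of $m$. For $l=0$ it is even more direct, since $k,0$-WL $\cong$ $k$-WL and here $(k+l)$-WL $=$ $k$-WL fails on $P$.

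Next I would show that both $k,l+1$-WL and $k+1,l$-WL \emph{do} distinguish $P$, by walking along the anti-diagonal $\{(k',l') : k'+l' = k+l+1\}$ using \cref{lem:kl+1<=k+1l} (which states $k',l'+1$-WL $\preceq$ $k'+1,l'$-WL for every $k'\ge 2$): starting from $(2,k+l-1)$ one obtains the chain $2,k+l-1$-WL $\preceq$ $3,k+l-2$-WL $\preceq \cdots \preceq$ $k,l+1$-WL $\preceq$ $k+1,l$-WL, where at every link the first index is $\ge 2$ and the relevant second index is $\ge 0$, so the lemma applies. Hence anything $2,k+l-1$-WL distinguishes is also distinguished by $k,l+1$-WL and by $k+1,l$-WL, so both distinguish $P$. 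Combined with the $\preceq$ statements of \cref{lem:kl<=kl+1} and \cref{lem:kl<=k+1l}, this yields $k,l$-WL(TL) $\prec$ $k,l+1$-WL(TL) and $k,l$-WL(TL) $\prec$ $k+1,l$-WL(TL), the claim.

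The step demanding the most care is the anti-diagonal bookkeeping: one must verify that \cref{lem:kl+1<=k+1l} genuinely applies at each link --- the first coordinate only increases and stays $\ge 2$, while the second coordinate can always be written as ``something $\ge 0$, plus one'' until the target is reached --- and observe that the common index sum $k+l+1$ of $(2,k+l-1)$, $(k,l+1)$, and $(k+1,l)$ is exactly what lets one chain hit both targets. A secondary point is that \cref{k+l>=kl} is only stated for $l\ge 1$, so the $l=0$ instances of both separations must be peeled off and handled directly via $k,0$-WL $\cong$ $k$-WL (for the second separation one could instead invoke the classical CFI separation $k$-WL $\prec$ $(k+1)$-WL in place of the Proposition).
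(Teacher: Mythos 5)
Your proposal is correct and follows essentially the same route as the paper's own proof: take the witness pair from Lemma~27 of \citet{OSAN} at parameter $k+l-1$, propagate distinguishability along the anti-diagonal chain $2,k+l-1\text{-WL} \preceq 3,k+l-2\text{-WL} \preceq \cdots \preceq k,l+1\text{-WL} \preceq k+1,l\text{-WL}$ via \cref{lem:kl+1<=k+1l}, note that $k,l$-WL fails on the pair since it is bounded by $(k+l)$-WL, and combine with \cref{lem:kl<=kl+1} and \cref{lem:kl<=k+1l} to upgrade $\preceq$ to $\prec$. Your explicit treatment of the $l=0$ case via $k,0$-WL $\cong$ $k$-WL is a small point of added care that the paper leaves implicit.
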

\begin{proof}
According to Lemma~\ref{lem:kl+1<=k+1l}, $2,k+l-1$-WL $\preceq 3, k+l-2$-WL $\preceq$ ... $\preceq$ $k, l+1$-WL $\preceq$ $k+1,l$-WL. Therefore, there exists a pair of graphs that $k,l+1$-WL and $k+1, l$-WL can differentiate while $k,l$-WL cannot.  Moreover, according to Lemma~\ref{lem:kl<=k+1l} and Lemma~\ref{lem:kl<=kl+1}, $k,l$-WL $\preceq$ $k,l+1$-WL and $k,l$-WL $\preceq$ $k+1, l$-WL. Therefore, $k,l$-WL $\prec$ $k, l+1$-WL, $k,l$-WL $\prec$ $k+1, l$-WL.
\end{proof}

Then we prove that $k+1,l$-WL(TL)$\not\preceq$ $k,l+1$-WL in the following paragraphs. 

\paragraph{Pebble Game}

\citet{PebbleGame} propose $C_k$ game as follows:

Given two graphs $G, H$, and $k$ pairs of pebbles $x_1,x_2,...,x_k$. Initially, no pebbles is placed on the graph. Two players act as follows in one epoch.
\begin{enumerate}
    \item Player 1 picks up the $x_i$ pebble pair for some $i$. 
\item Player 1 chooses a graph $F$ from $\{G, H\}$. 
\item Player 1 chooses a set $A$ of vertices from $F$. Player 2 answers with a set $B$ of vertices from the other graph. $|B|=|A|$
\item Player 1 places one of the $x_i$ pebbles on some vertex $b \in B$. Player 2 answers by placing the other pebble on some $a \in A$.
\item If the subgraph induced by pebbles of Player 1 is not isomorphism to that of Player 2, player 1 wins. Otherwise, player 2 wins. 
\end{enumerate}

\citet{PebbleGame} also prove that 
\begin{proposition}
    Player 2 has an winning strategy in $C_k$ game on two graphs $G, H$ iff $c_k(G)=c_k(H)$
\end{proposition}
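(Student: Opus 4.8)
The plan is to prove the two implications separately, in each case by an induction that couples a game position with the stable $k$-WL colouring. Record a \emph{position} as a pair of $k$-tuples $\va\in V(G)^k$, $\vb\in V(H)^k$, where coordinate $i$ holds the vertices currently carrying the pebble pair $x_i$; the coordinates of pebbles not yet on the board are filled by a fixed ``virtual'' completion that Player $2$ commits to at the start. This padding is harmless because the winning test only inspects the induced subconfiguration on the pebbled coordinates, and equal atomic types of two full $k$-tuples force equal atomic types of every one of their restrictions. Two properties of the stable colouring do all the work: (a) \textbf{refinement} — $c_k(\va,G)=c_k(\vb,H)$ implies $\va$ and $\vb$ have the same isomorphism (atomic) type, since the initialisation is by isomorphism type and every update step only refines; and (b) \textbf{stability} — $c_k(\va,G)$ determines, for each $i\in[k]$, the multiset $\mset{c_k(\psi_i(\va,u),G)\mid u\in V(G)}$, which is the defining fixed-point property of the stable colouring.

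For the direction $c_k(G)=c_k(H)\Rightarrow$ Player $2$ wins: from $c_k(G)=c_k(H)$ one reads off $|V(G)|=|V(H)|$ and that every stable colour class has the same size in $G$ and in $H$; pick an initial pair $\va_0,\vb_0$ of $k$-tuples with $c_k(\va_0,G)=c_k(\vb_0,H)$ as the virtual completion. Player $2$ maintains the invariant that the current position $(\va,\vb)$ satisfies $c_k(\va,G)=c_k(\vb,H)$. If Player $1$ picks index $i$, graph $G$, and a set $A\subseteq V(G)$, Player $2$ returns $B\subseteq V(H)$ with $|B|=|A|$ chosen so that for every colour $\gamma$, the number of $u\in A$ with $c_k(\psi_i(\va,u),G)=\gamma$ equals the number of $w\in B$ with $c_k(\psi_i(\vb,w),H)=\gamma$; this is possible because, by (b) and the invariant, the global multisets $\mset{c_k(\psi_i(\va,u),G)\mid u\in V(G)}$ and $\mset{c_k(\psi_i(\vb,w),H)\mid w\in V(H)}$ agree, so each local count in $A$ is at most the available count in $H$. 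Whatever $b\in B$ Player $1$ then pebbles, Player $2$ picks $a\in A$ with $c_k(\psi_i(\va,a),G)=c_k(\psi_i(\vb,b),H)$, which exists by construction of $B$; the updated position satisfies the invariant, and by (a) the pebbled subconfiguration is a partial isomorphism, so Player $2$ survives forever. The symmetric case $A\subseteq V(H)$ is identical.

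For the converse I argue the contrapositive: if $c_k(G)\ne c_k(H)$ then Player $1$ wins. If $|V(G)|\ne|V(H)|$, Player $1$ wins in round one, since Player $2$ cannot return a set of the required cardinality. Otherwise some stable colour $\gamma$ has strictly larger multiplicity in $G$ than in $H$, and I would show by induction on the number $t$ of refinement rounds needed to separate the graphs that, from any position whose pebbled tuples satisfy $c_k^{t}(\va,G)\ne c_k^{t}(\vb,H)$, Player $1$ can force in one more round a position with unequal $c_k^{t-1}$-colours: the update rule for $c_k^{t}$ exhibits a coordinate $i$ on which $\mset{c_k^{t-1}(\psi_i(\va,u),G)\mid u\in V(G)}$ and $\mset{c_k^{t-1}(\psi_i(\vb,w),H)\mid w\in V(H)}$ differ, so Player $1$ plays $i$ with $A$ equal to one $c_k^{t-1}$-class realising the discrepancy; any equal-size response $B$ must contain a vertex of a different $c_k^{t-1}$-colour, and pebbling it drives the colours apart at level $t-1$. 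At $t=0$ this is unequal atomic types, i.e. a non-partial-isomorphism, so Player $1$ wins; the initial passage from a multiplicity discrepancy to a discrepancy in pebbled colours is one such set move applied at the start.

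The main obstacle, and the step needing the most careful bookkeeping, is the exact match between the ``set move'' of the $C_k$ game and the counting/multiset statements about $c_k$: one has to verify that a single set move tests \emph{exactly one} colour class at a time — enough for Player $1$ to exploit any multiplicity gap, but restrictive enough that Player $2$'s Hall-type response always exists — and that the virtual padding of unplaced pebbles never lets Player $1$ win spuriously nor lets Player $2$ escape. The rest is a routine induction; indeed this is the classical pebble-game characterisation of $k$-WL due to \citet{PebbleGame}, so in the paper it may simply be cited, with the argument above reconstructing the standard proof.
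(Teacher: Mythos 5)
First, note that the paper does not prove this proposition at all: it is quoted directly from \citet{PebbleGame} as a known tool, so there is no internal proof to compare against; what you have written is a reconstruction of the classical Cai--F\"urer--Immerman argument. Your forward direction (equal stable colourings $\Rightarrow$ Player 2 wins) is essentially the standard proof and is sound: the invariant on full $k$-tuples, the Hall-type construction of $B$ from the agreement of the multisets $\mset{c_k(\psi_i(\va,u),G)\mid u\in V(G)}$, and the observation that equal stable colours refine equal atomic types, which pass to restrictions, together give a correct survival strategy. The virtual padding is legitimate in this direction because Player 2 is the one who chooses it.

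The gap is in the converse. Your descent induction (from unequal $c_k^{t}$-colours of a \emph{full} pebbled position, force unequal $c_k^{t-1}$-colours in one round) is fine, but the step you dismiss as ``one such set move applied at the start'' does not exist as a single move. The hypothesis $c_k(G)\neq c_k(H)$ is a discrepancy between multisets of colours of $k$-tuples, while a set move manipulates single vertices and one pebble; to reach a full position $(\va,\vb)$ with $c_k(\va,G)\neq c_k(\vb,H)$, Player 1 must place $k$ pebbles over $k$ rounds, and during those rounds the positions are partial tuples, to which the $k$-WL colouring as defined assigns no colour. The classical proof handles exactly this by introducing colours ($C^k$-types) of $j$-tuples for all $j\le k$ and showing that a multiset discrepancy propagates downward from $k$-tuples to the empty tuple, or equivalently by routing the argument through the counting logic $C^k$. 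Your virtual-padding device cannot substitute for this on Player 1's side: Player 1 does not control how the unplaced pebbles are ``virtually'' completed, so a discrepancy of padded full-tuple colours is not something Player 1 can be assumed to have engineered. You correctly flag this bookkeeping as the main obstacle, but flagging it is not the same as closing it; as written, the backward implication is incomplete. Since the paper itself only cites the result, the clean fix is to do the same, or else to carry out the $j$-tuple-type induction in full.
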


\paragraph{CFI graph}

\begin{figure}
    \centering
    \includegraphics[width=\textwidth]{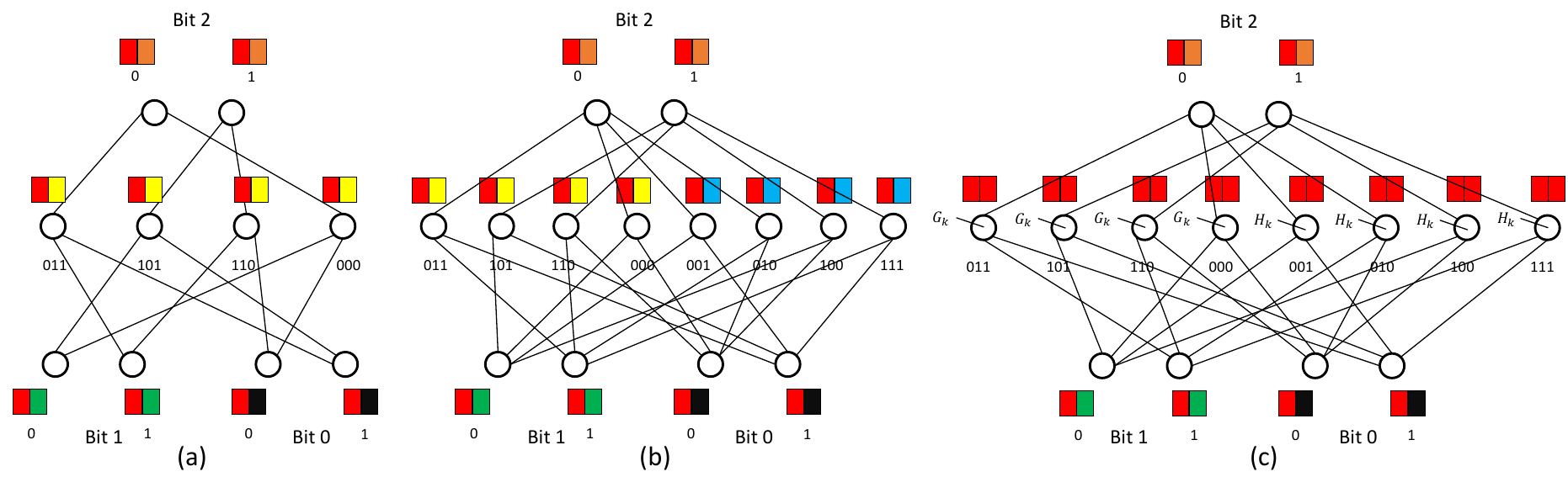}
    \caption{(a) $X_k$ }
    \label{fig:CFIblocks}
\end{figure}

\citet{PebbleGame} have built a family of graphs that $k+1$-WL can differentiate while $k$-WL cannot. We restate their conclusion here.

First, \citet{PebbleGame} define a kind of basic block $\chi_k$ as  follows:

\begin{enumerate}
    \item $2^{k-1}$ nodes, each representing a $k$-bit binary with an even number of 1’s. They all have the same color.
    \item  $2k$ nodes, representing $k$ binary bits $0/1$ $(a_i,b_i)$. Each bit has a different color. 
    \item Edges that connects a binary with each of its $k$ bits.
\end{enumerate}
A $\chi_3$ block is shown in Figure~\ref{fig:CFIblocks}(a). The block has the following property (Lemma 6.1 in ~\citep{PebbleGame}).

\begin{proposition}\label{prop:CFIblock}
There are exactly $2^{k-1}$ automorphisms of $\chi_k$. Each is determined by interchanging $a_i$ and $b_i$ for each $i$ in some subset $S$ of $\{1, . . . , n\}$ of even cardinality.
\end{proposition}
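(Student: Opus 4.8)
The plan is to show that every automorphism of $\chi_k$ is uniquely encoded by an even-cardinality subset $S\subseteq\{1,\dots,k\}$, namely the set of bit-positions $i$ at which the automorphism swaps $a_i$ and $b_i$, and then to count such subsets. First I would use the coloring: any automorphism $\sigma$ preserves color classes, so it maps the middle layer (the $2^{k-1}$ even-weight strings, which form a single color class) onto itself, cannot send a middle node to an outer node, and for each $i$ maps the pair $\{a_i,b_i\}$ onto itself (the two nodes of pair $i$ share a color distinct from every other color). Hence $\sigma$ restricted to the outer layer is described by a vector $\epsilon=(\epsilon_1,\dots,\epsilon_k)\in\{0,1\}^k$, where $\epsilon_i=1$ iff $\sigma$ interchanges $a_i$ and $b_i$; write $S=\{i:\epsilon_i=1\}$.

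Next I would argue that the outer data $\epsilon$ forces the action on the middle layer. Each middle node $m_s$ (for an even-weight $s\in\{0,1\}^k$) is adjacent to exactly one node of each pair, to $a_i$ or $b_i$ according to the value of $s_i$; distinct middle nodes therefore have distinct neighborhoods in the outer layer. Since $\sigma$ carries the neighborhood of $m_s$ to the neighborhood of $\sigma(m_s)$, and that image neighborhood contains $a_i$ exactly when $s_i\oplus\epsilon_i$ equals the value encoding ``$a$'', we get $\sigma(m_s)=m_{s\oplus\epsilon}$. For this to be a well-defined self-map of the middle layer, $s\oplus\epsilon$ must have even weight whenever $s$ does; since $|s\oplus\epsilon|\equiv|s|+|\epsilon|\pmod 2$, this holds exactly when $|\epsilon|=|S|$ is even. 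So every automorphism yields an even-cardinality $S$, and $S$ determines $\sigma$ completely (the outer part by definition of $S$, the middle part by $m_s\mapsto m_{s\oplus\mathbf{1}_S}$).

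Conversely, for each even-cardinality $S\subseteq\{1,\dots,k\}$ I would define $\sigma_S$ by interchanging $a_i\leftrightarrow b_i$ for $i\in S$, fixing the other outer nodes, and setting $m_s\mapsto m_{s\oplus\mathbf{1}_S}$, and then verify $\sigma_S\in\mathrm{Aut}(\chi_k)$: the middle map is a bijection on even-weight strings precisely because $|\mathbf{1}_S|$ is even, it preserves colors, and each edge $m_s\sim z$ (with $z\in\{a_i,b_i\}$) maps to $m_{s\oplus\mathbf{1}_S}\sim\sigma_S(z)$ with the bit relation correctly toggled in each of the cases $i\in S$ / $i\notin S$, so adjacency is preserved in both directions. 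This makes $S\mapsto\sigma_S$ a bijection between even-cardinality subsets of $\{1,\dots,k\}$ and $\mathrm{Aut}(\chi_k)$, and since $\sum_{j\ \mathrm{even}}\binom{k}{j}=2^{k-1}$, the stated count follows.

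The argument is largely routine; the one place needing care is ruling out ``extra'' symmetries — that $\sigma$ cannot mix the middle and outer layers and cannot permute outer nodes beyond within-pair swaps — which is handled entirely by the coloring hypothesis, together with the observation that the middle nodes have pairwise distinct outer-neighborhoods, which is exactly what pins down the middle action once the outer action is fixed.
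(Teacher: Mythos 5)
Your proof is correct and complete. Note that the paper itself gives no proof of this proposition --- it is quoted verbatim as Lemma~6.1 of the Cai--F\"urer--Immerman paper (\citealp{PebbleGame}, cited there as the source) --- so there is no internal argument to compare against; your write-up supplies exactly the standard argument from that source: the color classes force each pair $\{a_i,b_i\}$ to map to itself and the middle layer to itself, the pairwise-distinct outer neighborhoods force $m_s\mapsto m_{s\oplus\epsilon}$, the parity constraint $|s\oplus\epsilon|\equiv|s|+|\epsilon|\pmod 2$ yields that $|S|$ must be even, and the converse construction plus $\sum_{j\ \mathrm{even}}\binom{k}{j}=2^{k-1}$ finishes the count. (The ``$\{1,\dots,n\}$'' in the statement is a typo inherited from the source and should read $\{1,\dots,k\}$, as you correctly assume.)
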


Let $G=(V,E,X)$ denote an undirected graph whose minimal node degree $\ge 2$. The graph $\chi(G)$ is defined as follows. 
\begin{enumerate}
    \item $\forall v\in  V$, we replace $v$ by a copy of $\chi_{d(v)}$, namely $\chi(v)$, where $d(v)$ is the node degree of $v$. Each node in $\chi(v)$ inherit the node feature $X_{v}$.
    \item $\forall (v,w)\in E(G)$, we associate one of the pairs $\{a_i, b_i\}$ from $X(v)$, call this pair $a(v,w)$ and $b(v,w)$. Then, we draw the edges $(a(u,v), a(v,u)), (b(u,v), b(v,u))$.
\end{enumerate}

The twist of $\chi(G)$ ($\tilde \chi(G)$) is produced by arbitrarily choosing one edge $(v,w)\in E(G)$ and twist it ( replace $(a(u,v), a(v,u)), (b(u,v), b(v,u))$ with $(a(u,v), b(v,u)), (b(u,v), a(v,u))$. Two graphs are not isomorphic (Lemma 6.2 in \citep{PebbleGame}).
\begin{proposition}\label{prop:twistCFI}
    Let $\hat \chi (G)$ be constructed like $\chi(G)$, but with exactly $t$ of its edges twisted. Then $\hat \chi (G)$ is isomorphic to $\chi(G)$ iff $t$ is even, and $\hat \chi (G)$ is isomorphic to $\tilde \chi (G)$ iff t is odd.
\end{proposition}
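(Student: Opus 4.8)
The plan is to reduce the proposition to a single parity statement. For a subset $S\subseteq E(G)$ write $\chi_S(G)$ for the graph obtained from $\chi(G)$ by twisting exactly the edges in $S$, so that $\chi(G)=\chi_\emptyset(G)$, $\tilde\chi(G)=\chi_{\{e\}}(G)$ for any single edge $e$, and the graph $\hat\chi(G)$ with $t$ twisted edges equals $\chi_S(G)$ for some $S$ with $|S|=t$. I would prove the sharper claim
\[
\chi_S(G)\cong\chi_{S'}(G)\quad\Longleftrightarrow\quad |S|\equiv|S'|\pmod 2 ,
\]
from which the proposition follows by taking $S'=\emptyset$ (the ``$t$ even'' case) and $S'=\{e\}$ (the ``$t$ odd'' case); applying it to $t=0$ versus $t=1$ in particular yields $\chi(G)\not\cong\tilde\chi(G)$, so the two ``iff'' statements are genuinely exclusive.

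For the ``$\Leftarrow$'' direction I would move twists around using the local symmetries of \cref{prop:CFIblock}. Flipping the pair $\{a(v,w),b(v,w)\}$ inside the block $\chi(v)$ has exactly the effect of toggling whether the $G$-edge $(v,w)$ is twisted; by \cref{prop:CFIblock} the flip of any \emph{even}-sized set of pairs at a fixed vertex $v$ is an automorphism of $\chi(v)$, hence, taken to be the identity on all other blocks, extends to an isomorphism between two $\chi_S(G)$'s. In particular, for edges $e,f$ sharing a vertex one gets $\chi_S(G)\cong\chi_{S\triangle\{e,f\}}(G)$. Viewing twists as vectors in $\mathbb{F}_2^{E(G)}$, the subgroup generated by all such pairs $\{e,f\}$ with $e,f$ adjacent is, because $G$ is connected, the whole even-weight subspace: given $e,f$, choose an edge-path $e=g_0,g_1,\dots,g_r=f$ in which consecutive edges share a vertex and telescope $\{e,f\}=\sum_{i=1}^{r}\{g_{i-1},g_i\}$. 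Hence $\chi_S(G)\cong\chi_{S'}(G)$ whenever $S\triangle S'$ has even size.

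For the ``$\Rightarrow$'' direction I would exhibit $|S|\bmod 2$ as an isomorphism invariant. Let $\phi:\chi_S(G)\to\chi_{S'}(G)$ be an isomorphism. Using the colouring of the CFI construction (the bit-pairs carry distinct colours which, together with the degrees, pin down the blocks up to the $\mathrm{Aut}(G)$-action), $\phi$ carries each block $\chi(v)$ to a block and restricts on it to a block automorphism, which by \cref{prop:CFIblock} flips the pairs of an \emph{even}-sized set $T_v$ of edges incident to $v$. Tracking the inter-block edges then shows that $\phi$ transports $\chi_S(G)$ onto $\chi_{S\triangle U}(G)$, where $U\in\mathbb{F}_2^{E(G)}$ is the set of edges flipped at exactly one endpoint, i.e. $U=\sum_{v}T_v$. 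Hence $S'=S\triangle U$, and since $|U|\equiv\sum_v|T_v|\equiv 0\pmod 2$, we get $|S'|\equiv|S|\pmod 2$.

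The main obstacle is the structural input in the ``$\Rightarrow$'' direction: showing that every isomorphism respects the block decomposition and acts on each block by a block automorphism. This is exactly where the colouring of the CFI gadget does the work, and where some care is needed when $G$ has nontrivial automorphisms (so that a priori $\phi$ only sends $\chi(v)$ to $\chi(\pi(v))$ for some $\pi\in\mathrm{Aut}(G)$); everything downstream is routine linear algebra over $\mathbb{F}_2$. I would take this structural lemma, along with \cref{prop:CFIblock}, from \citet{PebbleGame}.
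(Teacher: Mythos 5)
The paper offers no proof of this proposition: it is quoted directly as Lemma~6.2 of \citet{PebbleGame}, the only original contribution here being the observation that the proof uses the gadget solely through \cref{prop:CFIblock} (so that it transfers to the modified blocks $\omega_k$ and $\gamma_{a,b}$). Your sketch is a correct reconstruction of that standard argument and is organized the right way: parity is \emph{sufficient} because the even-flip automorphisms of \cref{prop:CFIblock} let you cancel two twists meeting at a vertex and hence slide a twist along any walk in the line graph; parity is \emph{necessary} because any isomorphism decomposes into block automorphisms, each flipping an even set $T_v$ of incident pairs, so the induced change of twist set is $U=\sum_v T_v$, which has even weight, making $|S|\bmod 2$ an invariant. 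Two caveats. First, your sufficiency argument uses connectivity of $G$ (the line-graph telescoping is exactly where it enters), and correctly so, since the claim fails for disconnected $G$; the paper's construction only stipulates minimum degree $\ge 2$, so connectivity should be added as a hypothesis (it holds in the paper's only use case, $G=K_{k+1}$). Second, the structural input you defer to \citet{PebbleGame} --- that every isomorphism respects the block decomposition and the bit pairs --- is the one genuinely non-routine step and the only place the colouring matters; since you flag it explicitly and the paper itself proves nothing here, citing it is an acceptable level of detail. Your isolation of \cref{prop:CFIblock} as the only property of the block that is used matches the paper's own remark following \cref{prop:WLCFI}, which is precisely what licenses the extension to $\omega$ and $\gamma$ later in the appendix.
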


\iffalse
A separator of a graph $G = (V,E)$ is a subset $S \subseteq V$ such that the induced subgraph on $V − S$ has no connected component with more than $|V |/2$ vertices.

Lemma 6.2 in [1]: 
\fi

Such graphs is called CFI graph. \citet{PebbleGame} prove that some CFI graphs is $k$-WL indistinguishable. 
\begin{proposition}\label{prop:WLCFI}
Let $T$ be a graph such that every seperator of $T$ has at least $k+1$ vertices. Then $k$-WL cannot differentiate $X(T)$ and $\tilde X(T)$.
\end{proposition}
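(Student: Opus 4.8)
\cref{prop:WLCFI} is the classical Cai--Fürer--Immerman lower bound, so the plan is to reprove it by their pebble-game argument \citep{PebbleGame}. Since the pebble-game characterization stated above says that Player 2 wins the $C_k$ game on a pair of graphs exactly when those graphs receive the same color from $k$-WL, it suffices to exhibit a winning strategy for Player 2 on the pair $X(T),\tilde X(T)$. (That the two graphs are genuinely non-isomorphic, so that the statement is not vacuous, is \cref{prop:twistCFI} with $t=1$.) So the whole proof reduces to designing and verifying a Duplicator strategy.

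The engine of that strategy is a \emph{sliding move}: I would first prove that if $e$ and $e'$ are edges of $T$ meeting at a common vertex $v$, then the automorphism of the gadget $X(v)$ that swaps the two bit-pairs $\{a_i,b_i\}$ attached to $e$ and to $e'$ — an even-cardinality swap, hence legitimate by \cref{prop:CFIblock} — extends by the identity on every other gadget to an isomorphism between the CFI graph with its twist on $e$ and the one with its twist on $e'$, an isomorphism that fixes $X(w)$ for all $w\neq v$. Composing such moves along a path in $T$ then gives the key lemma: for any vertex set $S\subseteq V(T)$ and any two edges $e,e'$ lying in the same connected component of $T\setminus S$, there is an isomorphism between the corresponding singly-twisted CFI graphs that is the identity on $X(w)$ for all $w\in S$, hence carries any pebble placement supported on the gadgets over $S$ to the matching placement. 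Player 2 then plays to maintain the invariant that she holds an isomorphism $X(T)\to\tilde X(T)$ whose single virtual twist sits on an edge $e$ both of whose endpoints are unpebbled and which maps the current pebbles of $X(T)$ to those of $\tilde X(T)$; when Player 1 presents his set and then pins a vertex, she uses the sliding lemma to push the twist off the (at most $k$) vertices of $T$ whose gadgets carry a pebble and reads her reply off the updated isomorphism. The hypothesis that every separator of $T$ has at least $k+1$ vertices is exactly what makes this possible: at most $k$ pebbled vertices can never form a separator, so $T$ with them deleted retains a component containing more than half of $V(T)$ into which the twist can be parked; and a newly pinned vertex is incident to at most one endpoint of the twist edge, while the twist only disturbs edges crossing it and the opposite endpoint is still pebble-free, so no edge between two pebbled vertices is ever broken. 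Hence Player 2 never loses, she wins the $C_k$ game, and $c_k(X(T))=c_k(\tilde X(T))$.

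The main obstacle is the bookkeeping that makes the invariant actually survive a full round: the twist must be kept slidable through the interleaving of Player 1 lifting a pebble pair (which only helps) and then pinning a vertex Player 2 must accommodate, and one must track how deleting that vertex can split a component of $T\setminus S$ so that the twist is never irretrievably stranded in a small piece. This is precisely where the balanced-separator formulation earns its keep — showing that the giant components of $T\setminus S$ nest appropriately as $S$ grows and shrinks, so that the twist can always be re-routed into the current giant component — and verifying this, together with checking that the chain of gadget automorphisms composes consistently along the chosen path, is the only genuinely technical part; everything else is the standard CFI machinery recalled around \cref{fig:CFIblocks} and \cref{prop:CFIblock,prop:twistCFI}.
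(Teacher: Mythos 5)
The paper does not actually prove this proposition — it is imported directly from \citet{PebbleGame}, and your sketch is a faithful reconstruction of exactly that classical pebble-game argument: the twist-sliding gadget automorphisms licensed by \cref{prop:CFIblock}, the invariant that the twist is parked on an edge whose two endpoint gadgets carry no pebbles, and the use of the separator hypothesis to guarantee that the at most $k$ pebbled vertices of $T$ always leave a component of more than half the vertices into which the twist can be re-routed. The only wording to repair is the invariant itself: there is no isomorphism $X(T)\to\tilde X(T)$ (the graphs are non-isomorphic by \cref{prop:twistCFI}); what Player 2 maintains is a vertex bijection that is a local isomorphism everywhere except across the single twisted edge — equivalently, an isomorphism from $\tilde X(T)$ onto the singly-twisted copy with twist at $e$, composed with the identity on vertices — and the verification you correctly flag as the technical core is that when the newly pinned vertex splits the current giant component, the sliding path into the new giant component only disturbs bit-pairs attached to edges of that path, none of which join two pebbled gadgets.
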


Note that Proposition~\ref{prop:WLCFI} and Proposition~\ref{prop:twistCFI} do not use any property of the basic block other than Proposition~\ref{prop:CFIblock}. Therefore, we can use other basic blocks to keep these these two conclusions.

Based on results above, \citet{WLgoSparse} build a family of graphs. Let $K_l$ denote $l$-cluster with nodes colored as $1,2,...,l$. Let $H_k=X(K_{k+1}),G_k=\tilde X(K_{k+1})$. A direct corollory of Proposition~\ref{prop:WLCFI} is 

\begin{corollary}
    $\forall k\ge 2$, $k$-WL cannot differentiate $H_k$, $G_k$.
\end{corollary}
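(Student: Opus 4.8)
The plan is to obtain the statement as an immediate application of Proposition~\ref{prop:WLCFI} to the graph $T = K_{k+1}$. Recall that by construction $H_k = X(K_{k+1})$ and $G_k = \tilde X(K_{k+1})$ are exactly the CFI graph of $K_{k+1}$ and its twist, and Proposition~\ref{prop:WLCFI} asserts that $k$-WL cannot separate $X(T)$ from $\tilde X(T)$ whenever every separator of $T$ has at least $k+1$ vertices. So the only thing to do is verify this separator condition for the complete graph on $k+1$ vertices.

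First I would check the hypothesis. Since $K_{k+1}$ is complete, for any vertex set $S$ with $|S|\le k$ the induced subgraph on the remaining $k+1-|S|\ge 1$ vertices is again a complete graph, hence nonempty and connected; thus removing at most $k$ vertices never breaks $K_{k+1}$ into a balanced collection of small pieces. Consequently $K_{k+1}$ admits no separator of size at most $k$, i.e. every separator of $K_{k+1}$ has at least $k+1$ vertices, which is precisely the premise of Proposition~\ref{prop:WLCFI} with parameter $k$. Applying the proposition gives $c_k(H_k)=c_k(G_k)$, i.e. $k$-WL cannot distinguish $H_k$ and $G_k$, which is the claim. The only point requiring any care is making sure that the exact notion of ``separator'' used in \citep{PebbleGame} matches the one invoked here — balanced separator versus vertex cut — but for a clique both readings coincide on the relevant conclusion, since a clique has no nontrivial vertex cut and no balanced separator of size $\le k$, so there is no genuine obstacle.

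As a sanity check, and as a self-contained alternative should one prefer not to invoke Proposition~\ref{prop:WLCFI} as a black box, one can argue directly through the $C_k$ pebble game: it suffices to give Player~2 a winning strategy on $(H_k,G_k)$ with $k$ pairs of pebbles. The invariant to maintain is an isomorphism between $H_k$ and $G_k$ on the gadgets not currently carrying a pebble, with the ``twist'' parked on an edge of $K_{k+1}$ incident to some currently unpebbled vertex; because at most $k$ of the $k+1$ vertices of $K_{k+1}$ can be pebbled at once, such a vertex always exists, and the even-cardinality automorphisms of the basic blocks (Proposition~\ref{prop:CFIblock}) let Player~2 slide the twist around that vertex to dodge each newly placed pebble, with Proposition~\ref{prop:twistCFI} guaranteeing the two graphs stay non-isomorphic throughout. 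The bookkeeping of this strategy is the only mildly technical part, but it is exactly what is already packaged inside Proposition~\ref{prop:WLCFI}; hence the first route is preferable and the corollary is essentially immediate.
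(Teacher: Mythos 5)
Your primary route is the same as the paper's: the paper gives no proof beyond the phrase ``a direct corollary of Proposition~\ref{prop:WLCFI}'', i.e.\ apply that proposition to $T=K_{k+1}$. The difficulty sits exactly in the step you flagged and then waved away. The claim that a clique has ``no balanced separator of size $\le k$'' is false: under the balanced-separator definition used by \citet{PebbleGame} (a set $S$ such that every connected component of $T-S$ has at most $|V(T)|/2$ vertices), removing any $S$ from $K_{k+1}$ leaves the connected clique on $k+1-|S|$ vertices, so $S$ is a separator iff $k+1-|S|\le (k+1)/2$, i.e.\ iff $|S|\ge\lceil (k+1)/2\rceil$. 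Hence $K_{k+1}$ admits balanced separators of size $\lceil (k+1)/2\rceil\le k$ for every $k\ge 2$ (for instance any $2$-subset of $K_3$ when $k=2$), and the hypothesis ``every separator of $T$ has at least $k+1$ vertices'' fails. The two readings of ``separator'' therefore do not coincide on $K_{k+1}$: the vertex-cut reading makes the hypothesis vacuously true, the balanced reading makes it false, and the balanced one is what Proposition~\ref{prop:WLCFI} is stated for. So the corollary is not literally an instance of that proposition applied to $K_{k+1}$, and your verification of the premise does not go through.

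The statement itself is true, and your backup route is the sound way to get it: \citet{WLgoSparse} prove $c_k(H_k)=c_k(G_k)$ by a direct pebble-game argument on the $K_{k+1}$-based CFI pair, maintaining essentially the invariant you describe --- at most $k$ of the $k+1$ gadgets carry pebbles at any time, so the twist can always be slid onto an edge incident to an unpebbled gadget using the even-cardinality automorphisms of Proposition~\ref{prop:CFIblock}, with Proposition~\ref{prop:twistCFI} keeping the two graphs non-isomorphic. To repair the write-up you should either invoke that result directly, or carry out the game argument you sketch; the reduction to the separator condition of Proposition~\ref{prop:WLCFI} cannot be the whole proof.
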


Moreover, \citet{WLgoSparse} and \citet{OSAN} prove the following conclusion.

\begin{proposition}
    $k+1$-WL and $2,k-1$-WL(TL) can differentiate $H_k, G_k$
\end{proposition}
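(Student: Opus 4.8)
The plan is to treat the two halves separately, noting first that the non-isomorphism of $H_k$ and $G_k$ is already supplied by \cref{prop:twistCFI}: exactly one edge of $K_{k+1}$ is twisted, an odd number, so $\tilde X(K_{k+1})\not\cong X(K_{k+1})$. It then remains to show (i) $k+1$-WL separates $H_k$ and $G_k$, and (ii) $2,k-1$-WL(TL) does too; both are specializations of the Cai--F\"urer--Immerman analysis, the first essentially verbatim from \citet{PebbleGame, WLgoSparse} and the second from \citet{OSAN}.

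For (i) I would use the pebble-game characterization quoted above: it suffices to exhibit a winning strategy for Player~1 in the $C_{k+1}$ game on $H_k, G_k$. Since $K_{k+1}$ has exactly $k+1$ vertices, Player~1 has one pebble per block, and the standard CFI strategy is to place the pebbles so as to pin the automorphism of every block (by \cref{prop:CFIblock} each block of $H_k$ has only its $2^{k-1}$ even-flip automorphisms). Once no block retains a nontrivial flip, the odd twist can no longer be absorbed anywhere, so whatever bijections Player~2 returns the sub-structure induced by the pebbles must realize the twist on some edge and therefore differ between the two graphs; hence Player~1 wins and $c_{k+1}(H_k)\neq c_{k+1}(G_k)$. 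I would simply adapt the corresponding argument of \citet{PebbleGame} specialized to the complete graph.

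For (ii) I would first invoke the earlier equivalence $1,l$-WL $\cong$ $2,l$-WL to replace $2,k-1$-WL(TL) by $1,k-1$-WL(TL), i.e.\ color refinement run on the graph after attaching $k-1$ distinct labels, with the multiset of stable colors taken over all label placements. The strategy is to choose a placement that marks one vertex inside $k-1$ of the $k+1$ blocks, leaving, say, $\chi(v_0)$ and $\chi(v_1)$ unlabeled (the precise choice being the one made in \citet{OSAN}). By \cref{prop:CFIblock} a block automorphism fixing a marked vertex is trivial, so those $k-1$ blocks become rigid; and because block flips occur only in even-cardinality subsets, the twist on $\{v_0,v_1\}$ cannot be undone by flips inside the two unlabeled blocks alone without twisting some edge that runs into a rigid labeled block. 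The labeled $H_k$ and labeled $G_k$ are therefore non-isomorphic, and --- this is the crux --- the $k-1$ distinct labels seed color refinement so that it stabilizes to different colorings of $\chi(v_0)$ and $\chi(v_1)$ in the twisted and untwisted cases; comparing the resulting color multisets over all placements then yields $c_{2,k-1}(H_k)\neq c_{2,k-1}(G_k)$.

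The hard part is exactly that crux in (ii): showing that plain $1$-WL --- which only moves local multiset information --- actually detects the \emph{global} parity obstruction of the CFI gadget, and that the multiset comparison over label placements is not spoiled by a relabeling of $G_k$ matching $H_k$. Removing the block symmetries by labeling is routine; the work lies in verifying that $k-1$ distinct labels rigidify enough of the graph that color refinement reaches a fixed point already distinguishing twisted from untwisted. This is the content carried out in \citet{OSAN}, which I would invoke (or reprove along the same lines). Part (i), by contrast, is immediate from standard CFI theory, so the bulk of a self-contained write-up would go into (ii).
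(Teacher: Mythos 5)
Your proposal is consistent with the paper, which does not prove this proposition itself but simply attributes it to \citet{WLgoSparse} and \citet{OSAN}; you defer to the same sources while additionally sketching the underlying pebble-game and labeling arguments. The sketch is sound and correctly identifies that the substantive work lies in the $2,k-1$-WL(TL) half carried out in \citet{OSAN}.
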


\paragraph{Extended CFI graph}

We propose the following basic block to build other family of graphs. $\omega_k$ are as follows:

\begin{enumerate}
    \item $2^{k-1}$ nodes, each representing a $k$-bit binary with an even number of 1’s. They all have the same color $c_{e}$.
    \item $2^{k-1}$ nodes, each representing a $k$-bit binary with an odd number of 1’s. They all have the same color $c_{o}$.
    \item  $2k$ nodes, representing $k$ binary bits $0/1$ $(a_i,b_i)$. Each bit has a different color. 
    \item Edges that connects a binary with each of its $k$ bits.
\end{enumerate}
Am $\omega_3$ block is shown in Figure~\ref{fig:CFIblocks}(b). The block has the following property as $\chi_k$ block.

\begin{proposition}
There are exactly $2^{k-1}$ automorphisms of $\omega^k$. Each is determined by interchanging $a_i$ and $b_i$ for each $i$ in some subset $S$ of $\{1, . . . , n\}$ of even cardinality.
\end{proposition}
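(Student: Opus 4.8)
The plan is to follow the proof of Proposition~\ref{prop:CFIblock}, the analogous statement for the block $\chi_k$, almost verbatim; the one genuinely new point is the colour constraint created by the extra odd-weight binary nodes. Throughout, write the bit nodes of $\omega_k$ as the pairs $\{a_i,b_i\}$ for $i\in[k]$, each pair carrying its own private colour; for a string $\epsilon\in\{0,1\}^k$ let $v_\epsilon$ be the corresponding binary node, coloured $c_e$ if $\epsilon$ has even weight and $c_o$ otherwise, and adjacent to $a_i$ when $\epsilon_i=0$ and to $b_i$ when $\epsilon_i=1$.

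First I would show that any colour-preserving automorphism $\phi$ permutes the bit nodes among themselves: the colours $c_e$, $c_o$ and the $k$ pair-colours are pairwise distinct, so $\phi$ cannot send a binary node to a bit node, and it can neither merge nor swap different pairs $\{a_i,b_i\}$; hence for each $i$ it either fixes $a_i,b_i$ or interchanges them. Encode this by $S=\{i\in[k] : \phi(a_i)=b_i\}$ with indicator vector $s\in\{0,1\}^k$. Next I would observe that $S$ already pins down $\phi$ on the binary nodes: $v_\epsilon$ is the unique binary node whose neighbour in pair $i$ realises the bit $\epsilon_i$, and this neighbour is flipped by $\phi$ exactly when $i\in S$, forcing $\phi(v_\epsilon)=v_{\epsilon\oplus s}$. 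Conversely, for \emph{every} $S\subseteq[k]$ the map $\phi_S$ that interchanges $a_i\leftrightarrow b_i$ for $i\in S$, fixes the other bit nodes, and sends $v_\epsilon\mapsto v_{\epsilon\oplus s}$ (well defined since all $2^k$ strings index binary nodes of $\omega_k$) is a bijection preserving all edges and all pair-colours, directly from the construction. So $\mathrm{Aut}(\omega_k)\subseteq\{\phi_S : S\subseteq[k]\}$, and the only remaining question is which $\phi_S$ respect the $c_e/c_o$ colouring.

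This is where $\omega_k$ departs from $\chi_k$: the weight of $\epsilon\oplus s$ is congruent to $\mathrm{wt}(\epsilon)+|S|$ modulo $2$, so $\phi_S$ maps even-weight binaries to even-weight binaries (and odd to odd) precisely when $|S|$ is even, whereas for odd $|S|$ it swaps the two colour classes and is not an automorphism of the coloured graph. Hence $\mathrm{Aut}(\omega_k)=\{\phi_S : |S|\ \text{even}\}$, and since exactly $2^{k-1}$ of the $2^k$ subsets of $[k]$ have even cardinality, there are exactly $2^{k-1}$ automorphisms, each of the stated form. I do not expect a real obstacle here: this is the standard CFI-block rigidity argument, and the only points needing care are the parity bookkeeping in the last paragraph and the opening remark that the distinctness of $c_e,c_o$ together with the private pair-colours forbids any automorphism outside the family $\{\phi_S\}$.
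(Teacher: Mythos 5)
Your proof is correct, but it takes a more self-contained route than the paper's. The paper argues by reduction: since any automorphism of $\omega_k$ preserves colours, it maps $c_e$-nodes to $c_e$-nodes and each pair $\{a_i,b_i\}$ to itself, so its restriction to the even-weight binaries together with the bit nodes is an automorphism of $\chi_k$; the classification then follows from the already-cited Proposition~\ref{prop:CFIblock}, and one only has to note that each even-$|S|$ interchange extends back to $\omega_k$. You instead redo the rigidity argument from scratch and, in doing so, isolate a point the paper's reduction obscures: in $\omega_k$ every one of the $2^k$ maps $\phi_S$ is an edge- and pair-colour-preserving bijection (because all $2^k$ binary strings occur as nodes), and it is solely the $c_e/c_o$ colouring, via the parity identity $\mathrm{wt}(\epsilon\oplus s)\equiv \mathrm{wt}(\epsilon)+|S| \pmod 2$, that eliminates the odd-$|S|$ maps --- whereas in $\chi_k$ the odd-$|S|$ maps fail for the different reason that their images would be nonexistent odd-weight nodes. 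Your version is longer but does not lean on the unproved $\chi_k$ lemma and makes the forced action $v_\epsilon\mapsto v_{\epsilon\oplus s}$ on binary nodes explicit, which the paper leaves implicit; the paper's version is shorter at the cost of delegating the real content to the cited result. Both are sound.
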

\begin{proof}
Any automorphism $\phi$ of $X_k'$ must keeps node colors. Therefore, $\phi$ must map $C_{e}$ nodes to $C_{e}$ nodes, and $\{a_i,b_i\}$ to $\{a_i,b_i\}$, so $\phi$ is also an automorphism of $X_k$. 

Therefore, $\phi$ is determined by interchanging $a_i$ and $b_i$ for each $i$ in some subset $S$ of $\{1, . . . , n\}$ of even cardinality. Moreover, each of such interchanging is also an automorphism. 
\end{proof}

Similar to $\chi_k$, we can define CFI graph $\omega(G)$ and its twist $\tilde \omega(G)$. The following proposition still holds.

\begin{proposition}\label{prop:twistECFI}
    Let $\hat \omega (G)$ be constructed like $\omega(G)$, but with exactly $t$ of its edges twisted. Then $\hat \omega (G)$ is isomorphic to $\omega(G)$ iff $t$ is even, and $\hat \omega (G)$ is isomorphic to $\tilde \omega (G)$ iff t is odd.
\end{proposition}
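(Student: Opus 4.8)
The plan is to mirror, essentially verbatim, the proof of Lemma~6.2 in \citep{PebbleGame} (our Proposition~\ref{prop:twistCFI}), exploiting the remark already made above: that proof uses the gadget $\chi_k$ only through the description of its automorphism group in Proposition~\ref{prop:CFIblock}, and we have just shown that $\omega_k$ has exactly the same automorphism group --- the $2^{k-1}$ maps obtained by interchanging $a_i\leftrightarrow b_i$ over the even-cardinality subsets $S\subseteq[k]$. Since the coloring of $\omega(G)$ (inherited features together with the distinct colors on the bit nodes) has the same structure as that of $\chi(G)$, every step below is purely combinatorial and never sees the extra color class $c_o$. As in \citep{PebbleGame}, we use throughout that the underlying graph $G$ is connected with minimum degree $\ge 2$.

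For the ``if'' directions I would argue by \emph{twist migration}. Fix a gadget $\omega(v)$ and indices $i\ne j$; applying the automorphism of $\omega(v)$ that interchanges $a_i\leftrightarrow b_i$ and $a_j\leftrightarrow b_j$ (legal, since $\{i,j\}$ has even size) toggles, for exactly the two edges of $G$ incident to $v$ through the pairs $i$ and $j$, whether that edge was built ``straight'' or ``twisted'', and changes nothing else. Hence a twist located on an edge $e=(v,w)$ can be moved onto any other edge incident to $v$ and, iterating along a path in $G$, onto any edge of $G$; two twists on the same edge cancel. Therefore any placement of $t$ twists is isomorphic to the placement with $0$ twists when $t$ is even and to the placement with a single fixed twist when $t$ is odd, i.e. $\hat\omega(G)\cong\omega(G)$ for even $t$ and $\hat\omega(G)\cong\tilde\omega(G)$ for odd $t$.

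For the ``only if'' direction I would extract a $\bmod\ 2$ invariant. Because every bit node carries a distinct color and the gadgets are identified by their inherited features, any isomorphism $\phi$ between two twisted copies $\hat\omega_1(G),\hat\omega_2(G)$ (with twisted-edge sets $T_1,T_2$) must be gadget-respecting: it maps each $\omega(v)$ onto some $\omega(\sigma v)$ with $\sigma$ a color-automorphism of $G$, and on that gadget it agrees, after the relabeling, with one of the even sign-flips of the preceding proposition on $\mathrm{Aut}(\omega_k)$. Record by $f^v_e\in\{0,1\}$ whether the gadget at $v$ flips the pair associated to an incident edge $e$. Compatibility of $\phi$ across an edge $e=(u,v)$ forces $[e\in T_1]+[\sigma e\in T_2]+f^u_e+f^v_e\equiv 0\ (\bmod\ 2)$; summing over all edges and regrouping the $f$-terms by vertices yields $|T_1|+|T_2|+\sum_v(\text{number of pairs flipped in }\omega(v))\equiv 0$, and the last sum is even because each gadget flips an even-cardinality subset. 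Hence $|T_1|\equiv|T_2|\ (\bmod\ 2)$, so $\hat\omega(G)\cong\omega(G)$ forces $t$ even and $\hat\omega(G)\cong\tilde\omega(G)$ forces $t$ odd, which together with the previous paragraph gives the claimed equivalences.

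The only genuinely load-bearing point --- and the one place the enlarged block could in principle cause trouble --- is the rigidity claim that every isomorphism is gadget- and pair-respecting and restricts to an even sign-flip on each gadget. Once the preceding proposition pins $\mathrm{Aut}(\omega_k)$ down to exactly the group appearing for $\chi_k$, this follows from the color bookkeeping exactly as in \citep{PebbleGame}, so I do not expect any obstacle beyond writing that bookkeeping out carefully; the rest of the argument is the standard CFI twist-migration/parity reasoning applied to the new gadget.
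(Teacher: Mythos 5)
Your proposal is correct and takes essentially the same route as the paper: the paper proves this proposition simply by observing (just before Proposition~\ref{prop:WLECFI}'s analogue for $\chi$) that the original twist argument of \citet{PebbleGame} uses the block only through its automorphism group as characterized in Proposition~\ref{prop:CFIblock}, which the new block $\omega_k$ shares. Your write-up additionally spells out the twist-migration and parity-invariant steps of that argument, but the load-bearing observation is identical to the paper's.
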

\begin{proposition}\label{prop:WLECFI}
Let $T$ be a graph such that every seperator of $T$ has at least $k+1$ vertices. Then $k$-WL cannot differentiate $\omega(T)$ and $\tilde omega(T)$.
\end{proposition}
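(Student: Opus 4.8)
The plan is to reduce the proposition to the CFI argument already invoked for the $\chi_k$ gadget in \cref{prop:WLCFI}, showing that this argument is insensitive to replacing $\chi_k$ by $\omega_k$. By the pebble-game characterization of \citet{PebbleGame} (Player $2$ has a winning strategy in the $C_k$ game on $G,H$ iff $c_k(G)=c_k(H)$), it suffices to exhibit a winning strategy for Player $2$ in the $C_k$ game played on $\omega(T)$ and $\tilde\omega(T)$; I claim the strategy \citet{PebbleGame} use for $\chi(T)$ versus $\tilde\chi(T)$ transfers verbatim.

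The key point is that the CFI strategy uses only two structural facts about the gadget: (i) its automorphism group, namely that every automorphism of a single gadget interchanges $a_i$ with $b_i$ over the indices $i$ in some even-cardinality set $S$; and (ii) the twist lemma, i.e.\ that twisting an even number of edges produces an isomorphic graph while twisting an odd number produces a graph isomorphic to the once-twisted one. Fact (i) for $\omega_k$ is exactly the automorphism proposition established immediately above --- its short proof being that any color-respecting automorphism of $\omega_k$ fixes the color class $c_o$ setwise and fixes each pair $\{a_i,b_i\}$, hence restricts to an automorphism of the embedded $\chi_k$, and conversely every such map extends. Fact (ii) for $\omega_k$ is \cref{prop:twistECFI}. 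The only feature distinguishing $\omega_k$ from $\chi_k$ --- the additional $c_o$-colored block of odd-weight binaries --- is mapped setwise to itself by every color-preserving partial isomorphism and carries no bit structure, so it never widens Player $1$'s choices nor disturbs Player $2$'s bookkeeping. Thus the first concrete step is a line-by-line pass through the $\chi_k$ proof in \citet{PebbleGame} to confirm it uses nothing beyond (i) and (ii); the text preceding \cref{prop:twistECFI} already asserts exactly this, and turning that remark into a checked fact is the one genuinely conceptual point.

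I would then record the strategy itself, which is identical to \citet{PebbleGame}: at most $k$ pebbles sit on $\omega(T)$ at any time, projecting to a set $P\subseteq V(T)$ with $|P|\le k$; since no subset of size $\le k$ is a separator of $T$, the graph $T-P$ has a connected component on more than $|V(T)|/2$ vertices, and Player $2$ keeps the single twist confined to a fixed such component while maintaining a color-preserving bijection that restricts to an isomorphism of the pebbled induced subgraphs. When Player $1$ moves a pebble, $P$ changes by at most one vertex; invoking \cref{prop:twistECFI} together with the gadget automorphisms of (i), Player $2$ slides the twist to an edge avoiding the new $P$ and composes the local bijection with the resulting gadget automorphisms, so that the invariant --- pebbled subgraphs isomorphic, twist disjoint from $P$ --- is preserved. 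Hence Player $2$ never loses and $c_k(\omega(T))=c_k(\tilde\omega(T))$, which is the claim.

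The main obstacle I anticipate lies entirely in the re-routing step and its dependence on the separator hypothesis: one must verify that after deleting the at most $k$ pebbled vertices there is always enough connected room in $T$ to slide the twist clear of the pebbles --- which is precisely what ``every separator has at least $k+1$ vertices'' provides --- and that the compensating gadget automorphisms can always be chosen of even type, so that the global parity of the twist, and hence the identity of the once-twisted graph, is unchanged. Both points are handled exactly as in the $\chi_k$ case; since that argument is gadget-agnostic in the sense above, I expect no new difficulty beyond careful bookkeeping, the substantive content having been front-loaded into \cref{prop:twistECFI} and the $\omega_k$-automorphism proposition.
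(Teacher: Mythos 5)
Your proposal is correct and follows essentially the same route as the paper: the paper's justification for this proposition is precisely the remark that Propositions~\ref{prop:WLCFI} and~\ref{prop:twistCFI} use no property of the basic block beyond its automorphism group (Proposition~\ref{prop:CFIblock}), which the $\omega_k$ gadget is shown to share, so the CFI/pebble-game argument transfers verbatim. You simply spell out the pebble-game strategy and the role of the separator hypothesis in more detail than the paper does.
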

A direct corollary is 
\begin{proposition}
    k-WL cannot distinguish $\omega(K_{k+1}),\tilde \omega(K_{k+1})$.
\end{proposition}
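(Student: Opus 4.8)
The plan is to obtain the statement as the special case $T = K_{k+1}$ of Proposition~\ref{prop:WLECFI}, in exact analogy with how the Corollary following Proposition~\ref{prop:WLCFI} derives the standard CFI separation for $H_k = X(K_{k+1})$ and $G_k = \tilde X(K_{k+1})$. First I would check that $K_{k+1}$ is admissible input for the extended CFI construction: it is $k$-regular, so for $k \ge 2$ its minimum degree is at least $2$, each vertex $v$ is replaced by a copy of $\omega_{d(v)} = \omega_k$, and $\tilde\omega(K_{k+1})$ is obtained by twisting a single edge. Proposition~\ref{prop:twistECFI} (with $t=1$, which is odd) then guarantees $\omega(K_{k+1}) \ncong \tilde\omega(K_{k+1})$, so the claim is non-vacuous.

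It then remains only to verify the hypothesis of Proposition~\ref{prop:WLECFI}, namely that every separator of $K_{k+1}$ has at least $k+1$ vertices. This is a property of the graph $K_{k+1}$ by itself, with no reference to the gadget ($\chi_k$ or $\omega_k$) used to inflate it, so it is exactly the condition on $K_{k+1}$ that already underlies the Corollary to Proposition~\ref{prop:WLCFI}. With this in hand, Proposition~\ref{prop:WLECFI} applies verbatim to $T = K_{k+1}$ and yields $c_k\big(\omega(K_{k+1})\big) = c_k\big(\tilde\omega(K_{k+1})\big)$, i.e.\ $k$-WL cannot distinguish the two graphs.

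I do not expect a genuine obstacle here: the substantive content — translating $k$-WL indistinguishability into the existence of a winning strategy for Player~2 in the $C_k$ pebble game of \citet{PebbleGame}, and building such a strategy on $\omega(T)$ versus $\tilde\omega(T)$ out of the large-separator assumption on $T$ — is already packaged inside Proposition~\ref{prop:WLECFI}. The one point where the extended block matters is that this argument relies on the $\omega_k$-analogue of the automorphism count in Proposition~\ref{prop:CFIblock} ($\omega_k$ has exactly $2^{k-1}$ automorphisms, each interchanging $a_i \leftrightarrow b_i$ over an even-size index set), which is already established in the excerpt via the observation that any automorphism of $\omega_k$ must preserve the colors $c_e, c_o$ and the per-bit colors and hence restricts to an automorphism of $\chi_k$. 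Consequently, should a self-contained argument be preferred to the one-line specialization, the remaining work is just the routine pebble-game bookkeeping of \citet{PebbleGame} carried out with $\omega_k$ in place of $\chi_k$.
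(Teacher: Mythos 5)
Your proposal matches the paper's own derivation: the paper states this proposition as ``a direct corollary'' of Proposition~\ref{prop:WLECFI}, i.e.\ the specialization $T = K_{k+1}$, exactly as you do, with the separator condition on $K_{k+1}$ being the same one already invoked for the unextended CFI pair $X(K_{k+1}), \tilde X(K_{k+1})$. Your additional checks (non-isomorphism via Proposition~\ref{prop:twistECFI} and the automorphism count of $\omega_k$) are consistent with the surrounding text, so the proposal is correct and essentially identical in approach.
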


Moreover, as algorithms can directly ignore nodes with $c_{o}$ color
\begin{proposition}
    $k+1$-WL and $2,k-1$-WL(TL) can distinguish $\omega(K_{k+1}),\tilde \omega(K_{k+1})$.
\end{proposition}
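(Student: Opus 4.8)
The plan is to reduce to the already-established fact (the preceding proposition, due to \citet{WLgoSparse} and \citet{OSAN}) that $k+1$-WL and $2,k-1$-WL(TL) distinguish the ordinary CFI pair $H_k=X(K_{k+1})$ and $G_k=\tilde X(K_{k+1})$, by showing that the colour-$c_o$ vertices of $\omega(\cdot)$ are inert decoration that any WL-type algorithm can ignore. First I would record the structural observation that, as coloured graphs, $\omega(K_{k+1})=X(K_{k+1})\cup D$ and $\tilde\omega(K_{k+1})=\tilde X(K_{k+1})\cup D$, where $D$ is the set of all odd-parity ($c_o$-coloured) nodes: a twist only reroutes edges among bit-nodes and $c_o$-nodes are adjacent only to bit-nodes of their own block, so $D$ carries the same incident edges in the twisted and untwisted copies; moreover the colours occurring on $D$ are disjoint from those on the rest, and deleting $D$ recovers $X(K_{k+1})$ (resp. $\tilde X(K_{k+1})$) on a common vertex set.

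The main tool is a \textbf{monotonicity lemma}: whenever a graph $\Gamma$ is a union $\Gamma_0\cup D$ in which the colours used on $D$ are disjoint from those on $\Gamma_0$ (with arbitrary edges between $D$ and $\Gamma_0$), then for every $m\ge 1$ and iteration $t$ there is a function $\phi_m^{(t)}$, depending only on $m$ and $t$, with $\phi_m^{(t)}(c_m^{t}(\vv,\Gamma))=c_m^{t}(\vv,\Gamma_0)$ for all $m$-tuples $\vv$ of $\Gamma_0$-vertices. I would prove this by induction on $t$: at $t=0$ the isomorphism type of a $\Gamma_0$-tuple is the same computed in $\Gamma$ or in $\Gamma_0$, since $\Gamma_0$ is induced; in the step, each multiset $\mset{c_m^{t-1}(\psi_i(\vv,u),\Gamma)\mid u\in V(\Gamma)}$ splits into the $u\in V(\Gamma_0)$ and $u\in D$ contributions, which are colour-separated already at $t=0$ (a tuple meeting $D$ carries a $c_o$-colour), so the $V(\Gamma_0)$-part is recoverable and, after applying the inductive $\phi_m^{(t-1)}$ element-wise, becomes the corresponding $\Gamma_0$-neighbourhood multiset. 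Since $\phi_m^{(t)}$ is independent of $\Gamma$, one fixed $t$ (large enough for all four graphs below, which have equal size) gives the \emph{same} $\phi_m$ serving both $\Gamma_0=X(K_{k+1})$ and $\Gamma_0=\tilde X(K_{k+1})$.

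Both cases then follow by contraposition. If $k+1$-WL did not separate $\omega(K_{k+1})$ from $\tilde\omega(K_{k+1})$, their multisets of stable $(k+1)$-tuple colours would agree; tuples meeting $D$ form a colour-recognisable sub-multiset, so the sub-multisets over $c_o$-free tuples agree, and applying $\phi_{k+1}$ turns them into the $(k+1)$-tuple colour multisets of $X(K_{k+1})$ and $\tilde X(K_{k+1})$, forcing those to agree, contradicting the preceding proposition. (An equivalent route restricts a hypothetical Player-2 winning strategy in the bijective pebble game on $(\omega,\tilde\omega)$ to moves on $c_o$-free vertices, using that $c_o$ cannot be matched to another colour and that $X(K_{k+1})$ is induced in $\omega(K_{k+1})$.) For $2,k-1$-WL(TL), suppose it did not separate the two graphs, so $\mset{c_2(\omega(K_{k+1})^{\vv})\mid \vv}=\mset{c_2(\tilde\omega(K_{k+1})^{\vw})\mid \vw}$. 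Since each per-copy $2$-WL colour refines the initial colour multiset, it reveals whether any label sits on a $c_o$-vertex, so the sub-multisets over labellings supported on the $c_o$-free vertices (which are exactly $V(X(K_{k+1}))=V(\tilde X(K_{k+1}))$) also agree. For such a $\vv$ the copy $\omega(K_{k+1})^{\vv}$ equals $X(K_{k+1})^{\vv}\cup D$ with $D$ still $c_o$-coloured and label-free, so the monotonicity lemma for $m=2$ supplies a uniform $\phi_2$ with $\phi_2(c_2(\omega(K_{k+1})^{\vv}))=c_2(X(K_{k+1})^{\vv})$ and $\phi_2(c_2(\tilde\omega(K_{k+1})^{\vw}))=c_2(\tilde X(K_{k+1})^{\vw})$; applying $\phi_2$ element-wise gives $\mset{c_2(X(K_{k+1})^{\vv})\mid \vv}=\mset{c_2(\tilde X(K_{k+1})^{\vw})\mid \vw}$, i.e. $2,k-1$-WL(TL) does not separate $X(K_{k+1})$ and $\tilde X(K_{k+1})$, again a contradiction.

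I expect the crux to be the $2,k-1$-WL(TL) step: one must ensure pooling over all $n^{k-1}$ labelled copies does not wash out the discrepancy, which needs both that ``a label lies on a $c_o$-vertex'' is readable off each per-copy colour and that $\phi_2$ is the \emph{same} map across all $c_o$-free copies, so that it commutes with the multiset pooling. Proving the monotonicity lemma in a form that delivers the refinement together with this uniformity is therefore the real work; the structural description of $\omega(K_{k+1})$ and the two contraposition arguments are then routine.
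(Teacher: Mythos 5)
Your proposal is correct and follows exactly the route the paper intends: the paper justifies this proposition only by the one-line remark that ``algorithms can directly ignore nodes with $c_o$ color,'' thereby reducing to the preceding proposition that $k+1$-WL and $2,k-1$-WL(TL) separate $X(K_{k+1})$ and $\tilde X(K_{k+1})$. Your monotonicity lemma and the two contraposition arguments are a faithful, rigorous elaboration of that same reduction (including the needed uniformity of $\phi$ and the colour-recognisability of tuples and labellings meeting the $c_o$-vertices), so no further comparison is warranted.
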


\paragraph{A pair of graph that $k+1,l$-WL can distinguish while $k, l+1$-WL cannot}

We propose the another kind of basic block similar to $\omega_k$. $\gamma_{a,b}$ are as follows:

\begin{enumerate}
    \item $2^{b-1}$ nodes, each representing a $k$-bit binary with an even number of 1’s. Each node is connected to all nodes of a instance of $G_a$, which has a fixed special color $c_{br}$ on each node.
    \item $2^{b-1}$ nodes, each representing a $k$-bit binary with an odd number of 1’s. They all have the same color $c_{o}$. Each node is connected to all nodes of a instance of $H_a$, which has a fixed special color $C_{br}$ on each node.
    \item  $2b$ nodes, representing $k$ binary bits $0/1$ $(a_i,b_i)$. Each bit has a different color. 
    \item Edges that connects a binary with each of its $k$ bits.
\end{enumerate}
$G_a, H_a$ is generally called branch in the block. A $\gamma_{3,k}$ block is shown in Figure~\ref{fig:CFIblocks}(b). Branches $G_a, H_a$ can be considered as $c_{o}$ and $c_{e}$ in $\omega_k$. 
Similar to $\omega_k$, we can define CFI graph $\gamma_{a}(G)$ and its twist $\tilde \gamma_{a}(G)$, which is produced by using block $\gamma_{k, a}$. The following proposition still holds.

\begin{proposition}\label{prop:twistEGCFI}
    Let $\hat \gamma_a (G)$ be constructed like $\gamma_a(G)$, but with exactly $t$ of its edges twisted. Then $\hat \gamma_a (G)$ is isomorphic to $\gamma_{a}(G)$ iff $t$ is even, and $\hat \gamma_{a} (G)$ is isomorphic to $\tilde \gamma_{a} (G)$ iff t is odd.
\end{proposition}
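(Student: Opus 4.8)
The plan is to follow exactly the template already used for \cref{prop:twistCFI} and \cref{prop:twistECFI}: first determine how automorphisms of the new building block $\gamma_{a,b}$ act, showing this action coincides with the even-bit-flip group of \cref{prop:CFIblock}, and then invoke the classical CFI ``twist-pushing'' argument essentially verbatim, since (as already noted there) that argument uses no property of the block beyond its automorphism structure. Throughout, $G$ is assumed connected with minimum degree $\ge 2$, as in all the applications.

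\textbf{Step 1 (how automorphisms of $\gamma_{a,b}$ act).} I would show that every automorphism of $\gamma_{a,b}$ acts on the $2b$ bit nodes $\{a_i,b_i\}_{i\in[b]}$ by interchanging $a_i$ and $b_i$ for all $i$ in some even-cardinality set $S\subseteq[b]$ (and otherwise fixing them), and that every such even-bit-flip is realized by some automorphism; the remaining freedom (internal automorphisms of the attached branch copies) fixes every bit node, hence acts trivially on the edges joining different blocks, and is therefore harmless. The argument: an automorphism preserves colors, so it fixes setwise the bit nodes (each a distinct color), the branch nodes (the single color $c_{br}$), and the binary nodes, and on the bit nodes it either fixes or swaps each pair $\{a_i,b_i\}$. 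Each branch node is adjacent to exactly one binary node, so a branch copy must be sent to a branch copy rooted at the image of that binary node; and since $G_a\not\cong H_a$ (this is \cref{prop:twistCFI} applied to $K_{a+1}$, $a\ge2$), a binary node carrying a $G_a$-branch cannot be mapped to one carrying an $H_a$-branch. Hence the induced action on the $\chi_b$-skeleton (binary nodes, bit nodes, incidence edges) is an automorphism of it, which by \cref{prop:CFIblock} is an even-bit-flip; conversely every even-bit-flip extends by carrying along the mutually isomorphic branch copies.

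\textbf{Step 2 (twist-pushing).} Identify a twist pattern with a vector in $\mathbb{F}_2^{E(G)}$, the zero vector giving $\gamma_a(G)$ and any weight-one vector giving $\tilde\gamma_a(G)$. By Step 1, the block automorphism at a vertex $v$ that flips the bits indexed by an even set $S$ toggles precisely the twist-status of the $|S|$ edges of $G$ incident to $v$ corresponding to $S$. The even-weight subspace of $\mathbb{F}_2^{E(G)}$ is spanned by two-element sets $\{e,f\}$, and each such set is a sum of toggles of pairs of edges sharing a common vertex along a path from $e$ to $f$ (connectivity of $G$), so every even-weight twist pattern yields a graph isomorphic to $\gamma_a(G)$; thus $t$ even $\Rightarrow\hat\gamma_a(G)\cong\gamma_a(G)$ and $t$ odd $\Rightarrow\hat\gamma_a(G)\cong\tilde\gamma_a(G)$. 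For the converse, the rigid coloring of the bit nodes forces any isomorphism between two twisted copies to preserve the skeleton data (which bit-pair of which vertex of $G$ goes where), so it is a composition of block automorphisms with a relabeling of the blocks induced by an automorphism of $G$; both types preserve the number of twisted edges modulo $2$ (the former by the always-even toggle count of Step 1, the latter because a vertex relabeling merely permutes $E(G)$). Hence the parity of $t$ is an isomorphism invariant, which gives the ``only if'' directions, and in particular $\gamma_a(G)\not\cong\tilde\gamma_a(G)$.

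\textbf{Main obstacle.} The only genuinely new work is Step 1, specifically the bookkeeping needed to rule out automorphisms created by the branches: one must verify that the uniform color $c_{br}$, the fact that each branch copy is attached to a single binary node, and the non-isomorphism $G_a\not\cong H_a$ jointly force an automorphism to respect the even/odd partition of the binary nodes and to act on the pairs $\{a_i,b_i\}$ exactly as some $\chi_b$-automorphism. Once Step 1 is in place, Step 2 is word-for-word the proof of \cref{prop:twistCFI} and \cref{prop:twistECFI}.
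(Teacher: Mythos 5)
Your proposal is correct and follows essentially the same route as the paper: the paper reduces all three twist propositions to the even-bit-flip automorphism property of the underlying block (\cref{prop:CFIblock}) and then reuses the Cai--F\"urer--Immerman twist-pushing argument verbatim, exactly as in your Step~2. The paper in fact omits the verification of that automorphism property for the block $\gamma_{a,b}$ (it proves the analogue only for $\omega_k$ and asserts the rest by analogy), so your Step~1 --- using $G_a\not\cong H_a$ to pin down branch types and reduce to the $\chi_b$-skeleton --- supplies precisely the detail the paper leaves implicit.
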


Then we come to our primary result.

\begin{theorem}
    $\forall k\ge 2, l\ge 0$, $k,l+1$-WL(TL) cannot distinguish $\gamma_{k}(K_{k+l+1})$ and $\tilde \gamma_{k}(K_{k+l+1})$, while $k+1, l$-WL(TL) can.
\end{theorem}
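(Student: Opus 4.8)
The plan is to prove the two halves separately, both resting on the CFI gadgets and the pebble game recalled above.

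\textbf{$k+1,l$-WL(TL) separates the two graphs.} The idea is that a $(k+1)$-dimensional test sees through the branches. Since $(k+1)$-WL differentiates the branch gadgets $G_k=\tilde X(K_{k+1})$ and $H_k=X(K_{k+1})$ (the cited fact that $(k+1)$-WL, equivalently $2,k{-}1$-WL(TL), does so \citep{WLgoSparse,OSAN}), a run of $k+1,l$-WL on $\gamma_k(K_{k+l+1})$ eventually gives, inside every block, one stable color to each even-parity string node (carrying a $G_k$ branch) and a different stable color to each odd-parity string node (carrying an $H_k$ branch). A routine WL-monotonicity argument then shows such a run simulates a $k+1,l$-WL run on the collapsed graph in which the branches are deleted and the string nodes keep only a parity color; by construction (together with \cref{prop:twistEGCFI}) this collapse is the same map on $\gamma_k(K_{k+l+1})$ and its twist and sends them to $\omega(K_{k+l+1})$ and $\tilde\omega(K_{k+l+1})$ respectively. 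Hence it suffices that $k+1,l$-WL(TL) separates $\omega(K_{k+l+1})$ from $\tilde\omega(K_{k+l+1})$, and this follows from the cited fact that $2,k{+}l{-}1$-WL(TL) already does (ignore the $c_o$ nodes, reducing to $X(K_{k+l+1})$ versus $\tilde X(K_{k+l+1})$) together with the chain $2,k{+}l{-}1$-WL(TL) $\preceq 3,k{+}l{-}2$-WL(TL) $\preceq\dots\preceq k+1,l$-WL(TL) obtained by iterating \cref{lem:kl+1<=k+1l}.

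\textbf{$k,l+1$-WL(TL) does not separate them.} Here I would use the pebble game. By the $TL$-pooling definition, $c_{k,l+1,TL}(G)=c_{k,l+1,TL}(H)$ exactly when, after Spoiler commits an $(l+1)$-tuple in one graph and Duplicator replies with one in the other, Duplicator wins the $k$-pebble bijective game on the two labeled graphs; so it suffices to supply, for each Spoiler labeling of $\gamma_k(K_{k+l+1})$, a Duplicator reply on $\tilde\gamma_k(K_{k+l+1})$ and a winning strategy. Two facts drive it. First, \emph{branch opacity at width $k$}: because $k$-WL does not differentiate $G_k$ from $H_k$ (the cited corollary of \cref{prop:WLCFI}), Duplicator wins the $k$-pebble game confined to a single branch, so a $G_k$ branch and an $H_k$ branch stay interchangeable while at most $k$ pebbles lie inside one branch --- exposing a string node's parity forces $k+1$ pebbles onto its branch, hence at least one committed label permanently stuck there. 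Second, \emph{twist mobility}: by \cref{prop:twistEGCFI} (as in \cref{prop:twistCFI}) the single twisted edge can be slid along any edge of $K_{k+l+1}$ by composing with a branch automorphism (an even set of $a_i\leftrightarrow b_i$ swaps), so Duplicator keeps a local isomorphism between the pebbled-plus-labeled parts whenever those base vertices do not separate the twisted edge. The bookkeeping then closes it: by the cited corollary $k{+}l$-WL cannot distinguish $X(K_{k+l+1})$ from $\tilde X(K_{k+l+1})$, so separating an edge of the base CFI graph needs more than $k{+}l$ available markers; with $k$ movable pebbles and $l+1$ labels --- $k{+}l{+}1$ markers in all, but only the $l+1$ labels deployable into branches --- Spoiler can never simultaneously separate the base graph and expose the parities the separation would require. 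Thus Duplicator maintains the invariant forever and the two graphs get equal $k,l+1$-WL(TL) colors.

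\textbf{Where the difficulty lies.} The delicate step is the second half: turning the resource bookkeeping into a genuine Duplicator strategy. One must exhibit a single invariant --- a family of partial maps respecting at once the $\mathbb{Z}/2$ twist parity of the base CFI graph (via twist mobility) and the $k$-WL-equivalence of the two branch gadgets (block by block) --- and check it survives every Spoiler move in the combined ``$l+1$ committed labels plus $k$ bijective pebbles'' game, with the count tight enough that $k,l+1$ resources never suffice while $k+1,l$ resources do (the extra unit being movable, and the branches being colored globally by the $(k+1)$-WL preprocessing of the first half). Pinning down exactly this asymmetry between raising $k$ and raising $l$ is the heart of the theorem, and is what makes \cref{k-1l+1<=kl} nontrivial.
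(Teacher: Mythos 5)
Your first half is essentially the paper's argument: use the fact that $(k{+}1)$-WL separates the branch gadgets to collapse $\gamma_k(K_{k+l+1})$ and $\tilde\gamma_k(K_{k+l+1})$ to $\omega(K_{k+l+1})$ and $\tilde\omega(K_{k+l+1})$, then conclude via $2,k{+}l{-}1$-WL(TL) $\preceq k{+}1,l$-WL(TL) from the iterated \cref{lem:kl+1<=k+1l}. No issues there.

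The second half, however, has a genuine gap, and it is exactly the step you flag at the end as "the delicate step": you never construct the Duplicator strategy, and the resource bookkeeping you offer in its place does not close the argument. The count "separating the base CFI graph needs more than $k{+}l$ markers" cuts the wrong way, since Spoiler has $k+(l{+}1)=k{+}l{+}1$ markers in total, so that inequality alone does not stop him; and the claim that "only the $l{+}1$ labels are deployable into branches" is false --- the $k$ movable pebbles can enter branches freely. The missing idea, which is the entire content of the paper's proof, is a pigeonhole on \emph{blocks}, not on edges of the base graph: there are $k{+}l{+}1$ blocks (one per vertex of $K_{k+l+1}$) and only $l{+}1$ fixed labels, so some block $\gamma(0)$ contains no labeled node at all. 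By twist mobility (\cref{prop:twistEGCFI}) Duplicator may assume the twist sits on a bit pair $(a_1,b_1)$ of that block. One then writes down an explicit map $\phi$ that is the identity outside $\gamma(0)$, swaps $a_1\leftrightarrow b_1$, fixes the other bit nodes, and sends each string node $d_i$ of $\gamma(0)$ to $d_{i\wedge 1}$ (flipping the first bit, hence the parity). This $\phi$ is edge- and color-preserving everywhere except that it pairs each $G_k$-branch of $\gamma(0)$ with an $H_k$-branch; since those are $k$-WL-indistinguishable and the unlabeled block's branches can only ever be visited by the $k$ movable pebbles (no label lives there), Duplicator composes $\phi$ with the branch-level $C_k$ winning strategies to answer every move. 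That composite map is the "single invariant" you say must be exhibited; without it, the negative direction is an intuition, not a proof. Note also that the paper proves the stronger statement $c_k(G^{\vu})=c_k(H^{\vu})$ for \emph{every} $(l{+}1)$-tuple $\vu$ (the identity pairing of labelings), which immediately gives equality of the TL-pooled multisets.
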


We prove it in two steps.

\begin{lemma}
    $\forall k\ge 2, l\ge 0$, $k,l+1$-WL(TL) cannot distinguish $\gamma_{k}(K_{k+l+1})$ and $\tilde \gamma_{k}(K_{k+l+1})$.
\end{lemma}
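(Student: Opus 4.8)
The plan is to prove this via the bijective $k$-pebble game. Since $k,l+1$-WL(TL) applied to a graph $G$ returns the multiset $\bmset{c_k(G^{\vv})\mid \vv\in V(G)^{l+1}}$, it suffices to build a bijection $\Phi$ from the $(l+1)$-tuples of $H:=\gamma_k(K_{k+l+1})$ onto those of $\tilde H:=\tilde\gamma_k(K_{k+l+1})$ such that, for every tuple $\vv$, the labeled graphs $H^{\vv}$ and $\tilde H^{\Phi(\vv)}$ receive the same stable $k$-WL color; then the two multisets coincide. A label is merely an extra unary color, so on $H^{\vv}$ the $l+1$ labeled nodes behave as $l+1$ pebbles that stay fixed for the whole game, and I would therefore use the $C_k$-game of \citet{PebbleGame}: $k$-WL gives the two labeled graphs the same color iff Player~2 wins.

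The first real step is to relocate the twist away from the labels. Given $\vv\in V(H)^{l+1}$, each coordinate lies inside a CFI gadget $\chi(x)$ or a branch ($G_k$ or $H_k$) attached to it, for some base vertex $x$ of $K_{k+l+1}$; let $W$ collect these at most $l+1$ base vertices. Since $k\ge 2$, the complement $V(K_{k+l+1})\setminus W$ has at least $k\ge 2$ vertices, hence contains an edge $e_W$. Now $H$ and $\tilde H$ share a vertex set and differ only by twisting one edge; by \cref{prop:twistEGCFI} the graph $\hat H_{e_W}$ obtained by instead twisting $e_W$ is isomorphic to $\tilde H$, and (as in the standard CFI argument) such an isomorphism $\phi_W:\hat H_{e_W}\to\tilde H$ can be chosen to act as bit-permutations inside the gadgets along a path joining the two twisted edges and as the identity on the base-vertex structure. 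I set $\Phi(\vv):=\phi_W(\vv)$; because $W$ is determined by $\vv$ and $\phi_W$ restricts to a bijection on tuples with footprint $W$, $\Phi$ is a bijection, and the claim reduces to showing Player~2 wins the $C_k$-game between $H^{\vv}$ and $\hat H_{e_W}^{\vv}$ — two labeled graphs that are identical except that the edge $e_W$ (disjoint from $W$) is twisted.

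That last game is the CFI automorphism-maintenance argument: Player~2 keeps a partial isomorphism agreeing with the honest graph everywhere except near one "floating" twisted edge, and uses the even-bit-flip automorphisms of the gadgets to keep that edge away from the $\le k$ currently pebbled gadgets. What is new, and what I expect to be the main obstacle, is the interplay of the $l+1$ permanent label-pebbles with the branches: where a label lands inside a branch $G_k$ or $H_k$, Player~2 can no longer pretend that branch has the opposite type, since $k$ movable pebbles together with one pinned label inside a branch can separate $G_k$ from $H_k$ (here one uses $G_k\not\cong H_k$ and that $(k+1)$-WL distinguishes them), whereas $k$-WL alone cannot. The crux is therefore a careful counting argument reconciling (i) the CFI separator bound for $K_{k+l+1}$, (ii) the $\le k$ movable pebbles, (iii) the $\le l+1$ label-pebbles and the finer fact that a pebble/label inside a branch of $\chi(x)$ constrains only one binary node of $\chi(x)$ rather than all of it, and (iv) the parity slack from \cref{prop:twistEGCFI}; showing that together these always leave Player~2 a legal twist-hiding response to each of Player~1's four move types is the technical heart, after which checking that the invariant is preserved is routine.
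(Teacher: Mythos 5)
Your setup matches the paper's opening moves: reduce to the $C_k$ pebble game on labeled graphs, observe that the $\le l+1$ labels cannot touch all $k+l+1$ gadgets, and use \cref{prop:twistEGCFI} to relocate the twist to an unlabeled part of the graph (the paper in fact works with the identity on tuples and a single fully unlabeled block $\gamma(0)$, proving the stronger pointwise statement $c_k(G^{\vu})=c_k(H^{\vu})$ for every $\vu$, which makes your bijection $\Phi$ unnecessary). The problem is that everything after that reduction — which you yourself flag as ``the technical heart'' — is missing, and the strategy you sketch for it is the wrong mechanism. A floating-twist/automorphism-maintenance argument needs an unconstrained gadget to hide the twist in at every round; here the $k$ movable pebbles together with the $l+1$ permanent label-pebbles can simultaneously constrain all $k+l+1$ gadgets of $K_{k+l+1}$, so the separator-versus-pebble count you propose to reconcile in items (i)--(iv) does not close, and indeed on the plain CFI graphs $\chi(K_{k+l+1})$ the corresponding statement would be false ($k+l+1$ pebbles on $K_{k+l+1}$ suffice to detect the twist).

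What actually makes the lemma true, and what the paper's proof supplies, is a \emph{static} strategy exploiting the branches: inside the unlabeled block, the map $a_1\leftrightarrow b_1$, $d_i\mapsto d_{i\wedge 1}$ (a single-bit flip, well defined only because the $\gamma$ blocks contain binary nodes of \emph{both} parities) undoes the twist on all bit--binary and cross-block edges, at the sole cost of exchanging each even-parity node's $G_k$-branch with an odd-parity node's $H_k$-branch. The twist never moves again; Player 2 plays this fixed near-isomorphism $\phi$ everywhere outside the branches and, inside each of the $2^{k+l}$ branch pairs, independently runs the winning $C_k$-game strategy that exists because $G_k$ and $H_k$ are $k$-WL-indistinguishable (at most $k$ movable pebbles ever sit in any one branch pair, and no labels do). This branch-absorption step is the entire reason the $\gamma_{a,b}$ blocks were introduced in place of $\chi_k$, and without it your argument does not go through; you would need to replace your step (4) with this construction rather than a counting argument.
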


\begin{proof}
Let $G=\gamma_{k}(K_{k+l+1})=(V,E,X),H=\tilde \gamma_{k}(K_{k+l+1})=(V,E',X)$. As $k,l+1$-WL produce node colors as the multiset of labeled graphs' $k$-WL color. We are going to prove that
\begin{equation}
    \forall \vu \in V^{l+1}, c_{k}(G^{\vu})=c_{k}(H^{\vu})
\end{equation}
which is equivalent to that $\forall \vu\in V^{l+1}$, player 2 has an winning strategy in $C_k$ game on $G^{\vu}$, $H^{\vu}$.

As $l+1<k+l+1$, there exists at least one basic block with no node in $\vu$ (not labeled). For simplicity, let $\gamma(0)$ denote the block. Let $\{(a_i,b_i)|i\in [k+l]\}$ denote the bit nodes in the block. Let $\{d_{i}|i+1\in [2^{k+l}]\}$ denote the nodes corresponding to binary numbers in $\gamma(0)$. Let $e_{i}$ denote the set of nodes in the branch connected to $d_{i}$. 

No matter where the twist happens, the results graphs are isomorphic, so we can simply assume that the twist happen in $(a_1, b_1)$. As the subgraphs induced by $V-V(\gamma(0))$ are isomorphic, we can define the following function $\phi: V\to V\cup \{-1\}$.

\begin{equation}
\phi(v)
=\begin{cases}
    v & \text{if }v\in V-V(\gamma(0))\\
    b_1 & \text{if }v=a_1\\
    a_1 & \text{if }v=b_1\\
    b_i & \text{if }v\in \{b_i|i\in [k+l],i\neq 1\}\\
    a_i & \text{if }v\in \{a_i|i\in [k+l],i\neq 1\}\\
    d_{i\wedge1}& \text{if }v\in \{d_i|i+1\in [2^{k+1}]\}\\
    -1 & \text{otherwise}
\end{cases}
\end{equation}
where $\wedge$ means bitwise xor.

Let $\vu\in (V\cup \{-1\})^k$ denote the configure of the pebbles, $\vu_i$ is the node that $x_i$ pebble placed. $\vu_i=-1$ means $x_i$ pebble is currently not placed in the graph. As $G_k, H_k$ are $k$-WL indistinguishable, player 2 has a winning strategy of $C_k$ game on $G_k, H_k$, which is equivalent to that given $F\in \{G_k, H_k\}$, $A\subseteq V(F)$, feasible configuration $\vf$ in $F$ and configuration $\vf'$ in the other graph, player 2 can answer a nodeset $B_k(A, F, \vf, \vf')$. Moreover, when player 1 select $a\in B_k(A, F, \vf, \vf')$, player 2 can answer a node $b_k(A, F, \vf, \vf')$ to keep induced subgraph isomorphism.

Given a configuration $\vu$ and a node subset $U$, $\vu\cap U$ means the configuration on the induced subgraph, which considers pebbles put out side ths subgraph as $-1$. Let $\mu_i$ denote a isomorphism node mapping from branch $e_i$ to the origin graph $G_k$ or $H_k$ and $\mu_i^{-1}$ denote the inverse mapping. Given a node set $S$ or a configuration $\vu$,$\mu_i(S),\mu_i^{-1}(S),\mu_i(\vu), \mu_i^{-1}(\vu)$ means element-wise transformation ($-1$ is not transformed). Given a graph $F$ and a node $S$, let $F[S]$ denote the subgraph of $F$ induced by $S$.

Then player 2 can use the same strategy on branches as in $C_k$ games on $G_k, H_k$.  
\begin{enumerate}
    \item Given a graph $F$ and node set $A$ and configuration $\vf$ in $F$ and $\vf'$ in the other graph, player 2 answer the following set.
    \begin{align}
    \{\phi(v)|v\in A, \phi(v)\neq -1\}\cup\\ \bigcup_{i=0}^{2^{k+l}-1} \mu_{i\wedge1}^{-1} B_k\Big(\mu_i(A\cap e_i), F[e_i], \mu_i(\vf\cap e_i), \mu_{i\wedge1}(\vf'\cap e_{i\wedge1})\Big)
    \end{align}
    \item If player 1 select $a\in B$. Player 2 can select
    \begin{equation}
        \begin{cases}
        \phi^{-1}(a)&\text{if } a\notin \bigcup_{i=0}^{2^{k+l}-1} e_i\\
        \mu^{-1}_{i\wedge1}\Big(
        b_k\big(\mu_i(A\cap e_i), F[e_i], \mu_i(\vf\cap e_i), \mu_{i\wedge1}(\vf'\cap e_{i\wedge1})\big)
        \Big) &\text{if } a\in e_i
        \end{cases}
    \end{equation}
\end{enumerate}
\end{proof}

\begin{lemma}
    $k+1, l$-WL can distinguish $\gamma_{k}(K_{k+l+1})$ and $\tilde \gamma_{k}(K_{k+l+1})$.
\end{lemma}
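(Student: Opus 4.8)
The plan is to exploit the one structural asymmetry that the construction was engineered around: a branch is a copy of $X(K_{k+1})$ or $\tilde X(K_{k+1})$, and these are told apart by $(k+1)$-WL but \emph{not} by $k$-WL (this is exactly the corollary recalled just above, and the companion proposition that $k+1$-WL distinguishes $H_k,G_k$). Write $G=\gamma_k(K_{k+l+1})$, $H=\tilde\gamma_k(K_{k+l+1})$. First I would observe that, since $(k+1)$-WL separates the two branch graphs, a short argument (attaching a branch uniformly to a binary node cannot decrease the discriminating power of $(k+1)$-WL) shows that under $(k+1)$-WL the ``even'' binary nodes of each $\gamma_k$-gadget (those joined to a $G_k$-branch) get stable colors distinct from the ``odd'' binary nodes (joined to an $H_k$-branch). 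Hence the $(k+1)$-WL coloring of $\gamma_k(T)$ refines the coloring that an $\omega$-gadget already carries by fiat through its parity colors $c_e,c_o$; since $\gamma_k(\cdot)$ and $\omega(\cdot)$ share the same bit-node/twist skeleton, and the branch nodes only add information, it suffices to separate the pair $\big(\omega(K_{k+l+1}),\,\tilde\omega(K_{k+l+1})\big)$ by $k{+}1,l$-WL.

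For that I would use the pebble-game characterization (Player~2 has a winning strategy in the $C_{k+1}$ game on two graphs iff their $(k+1)$-WL colors coincide), in the variant where $l$ pebbles are placed once by Player~1 at the start and never moved — this is precisely $k{+}1,l$-WL. Player~1's strategy: choose $l$ of the $k{+}l{+}1$ gadgets, drop one fixed (label) pebble on a suitable node of each; the remaining $k{+}1$ gadgets span a $K_{k+1}$. On that residual clique Player~1 plays the classical CFI Spoiler strategy with the $k{+}1$ movable pebbles, ending with \emph{every} gadget occupied. By \cref{prop:twistEGCFI} the twisted edge of $H$ can only be relocated by gadget automorphisms (an even number of edge-twists), so once all gadgets are pinned no automorphism respecting both the pebbled configuration and the parity colors can carry the twisted edge onto an untwisted one; after Player~1's final bit-query move the pebbled induced subgraphs differ and Player~1 wins. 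One also checks, exactly as in Lemma~27 of \citet{OSAN}, that the labels can be picked so that the distinguishing labeling of $G$ is not matched by any labeling of $H$ in the TL-pooled multiset — routine because every $(k{+}1)$-subset of the non-labeled vertices of $K_{k+l+1}$ is again a clique.

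It is worth recording why the same token count does \emph{not} rescue $k,l{+}1$-WL (the preceding lemma): there $k$ movable and $l{+}1$ fixed pebbles again total $k{+}l{+}1$, but since $k$-WL cannot see through a branch, each gadget effectively carries the full hiding capacity of a $K_{k+1}$-CFI instance, so those pebbles cannot corner the twist — which is exactly the content of the complementary lemma. The extra dimension, not an extra label, is what ``collapses'' the branches.

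The main obstacle is the pebble-game step: proving that Player~2 \emph{loses}, i.e.\ that $l$ immovable pebbles (one per gadget) together with $k{+}1$ movable ones genuinely corner the twist on $K_{k+l+1}$. The delicate points are (i) that the residual internal symmetry of the $\omega$-gadgets cannot be exploited by Player~2 once the parity colors are visible (this is where Step~1 is really used), and (ii) the bookkeeping that deleting/occupying $l$ gadgets leaves exactly $K_{k+1}$, so that $k{+}1$ movable pebbles hit the exact threshold — equivalently, that $K_{k+l+1}$, and not $K_{k+l}$ or $K_{k+l+2}$, is the right base graph, which also fixes the quantifier range $k\ge 2,\ l\ge 0$. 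Checking the $l=0$ base case (just ``$(k{+}1)$-WL separates CFI over $K_{k+1}$'', with the branches only helping) is a useful sanity check on the counting.
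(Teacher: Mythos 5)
Your first step---using the fact that $(k{+}1)$-WL separates the branch graphs $G_k,H_k$ to collapse each branch to an effective parity color, and thereby reducing the task to separating $\omega(K_{k+l+1})$ from $\tilde\omega(K_{k+l+1})$ under $k{+}1,l$-WL---is exactly the paper's first step. Where you diverge is in how the $\omega$ pair is then separated. The paper plays no pebble game here at all: it invokes the proposition established earlier in the same appendix (via the observation that an algorithm may ignore the $c_o$-colored nodes, which reduces the $\omega$ pair to the ordinary CFI pair handled by Lemma~27 of \citet{OSAN}) that $2,k{+}l{-}1$-WL(TL) already distinguishes $\omega(K_{k+l+1})$ from its twist, and then climbs the chain $2,k{+}l{-}1\text{-WL}\preceq 3,k{+}l{-}2\text{-WL}\preceq\cdots\preceq k{+}1,l\text{-WL}$ by iterating \cref{lem:kl+1<=k+1l}. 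The entire second half of the argument thus reduces to two previously proved facts.

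Your alternative---a direct $C_{k+1}$ game with $l$ immovable label pebbles---is in principle viable, but you have left precisely the hard part unproven: you yourself flag ``proving that Player~2 loses'' as the main obstacle, and the claim that $l$ fixed pebbles plus $k{+}1$ movable ones corner the twist on $K_{k+l+1}$ is not routine; it amounts to re-deriving the OSAN-type separation that the paper simply cites. Two specific concerns: (i) the TL-pooling bookkeeping is asserted rather than argued---in a multiset comparison Player~2 effectively gets to choose the matching label tuple in $H$, so ``Player~1 places $l$ pebbles at the start'' is not literally the right game, and the counting argument that no labeling of $H$ matches the distinguishing labeling of $G$ needs to be carried out; (ii) your appeal to \cref{prop:twistEGCFI} controls where the twist can be moved by \emph{automorphisms}, but the pebble game requires ruling out \emph{partial} isomorphisms of pebbled induced subgraphs, which is a strictly stronger statement and is where the classical CFI analysis does its real work. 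I would replace your second step with the paper's reduction, which uses only machinery already proved; alternatively, if you want the self-contained game argument, the missing step must be written out in full rather than deferred.
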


\begin{proof}

Let $G=\gamma_{k}(K_{k+l+1}), H=\tilde \gamma_{k}(K_{k+l+1})$

$k+1,l$-WL can run $k+1$-WL on each branches. Then $G_k$ and $H_k$ branches are distinguished. In other words, given a graph $F\in \{G, H\}$ branch $e_i$ and the corresponding binary number node $d_i$. $\forall j\in [k+1], \vu\in V^k$
\begin{equation}
    c(\psi_j(\vu, d_i))\to \mset{\psi_j(\vv, d_i)|\vv\in e_i^k} \to c_k(F[e_i])
\end{equation}
Therefore, $k+1,l$-WL is more expressive than ignore branches (nodes with initial color $c_{b}$) while label $d_i$ with $c_k(F[e_i])$, which is equivalent to running $k+1,l$-WL on $\omega(K_{k+l+1}), \tilde \omega(K_{k+l+1})$. As $2, k+l-1$-WL can distinguish $\omega(K_{k+l+1}), \tilde \omega(K_{k+l+1})$, and $2, k+l-1$-WL $\preceq$ $k+1, l$-WL, $k+1, l$-WL can distinguish these two graphs.
\end{proof}

\section{Relationship with other works}\label{relationshipwithother}

In this section, we give a comprehensive discussion on the relationship between our framework and existing works. $k,l$-WL framework unifies (local) relational pooling based methods, a considerably wide range of subgraph GNNs and other works like OSAN \citep{OSAN}, establishing the connections between these fields.

First, we give a systematical discussion on our $k,l$-WL framework with GNN extensions, which is ommited in \cref{sectionunify}. \citep{TheoreticalComparison} points out that GNN extensions can be classified into higher order WL, counting substructures, injecting local information, and marking neighbors. While this classification is flexible (and not necessarily include all existing works), we can discuss the relationship between our work and these four kinds of GNN extensions and show that our method is more universal and superior.

\begin{itemize}
    \item Our model naturally incorporates traditional WL algorithms, including higher order $k$-WL. Our $k,l$-WL is a more comprehensive and universal hierarchy.
    \item Our model can fully count substructures (whose $\#$ nodes is within $k+l$). Also, statistically, our model does not need this upper bound number of labels to count substructures in practice (see proposition 4.10 as an example). Therefore, $k,l$-WL can naturally encode substructure information into its representations. Since we do not manually design which substructures to encode (as is done in GSN), it can capture more local information than manual methods. Additionally, it can also contain structure information beyond substructure counting.
    \item $k,l$-WL can naturally encode information up to a certain radius, which can be naturally achieved by performing message passing on labeled graph, or manually introducing the localized version. Note that in localized $k,l$-WL, the subgraph size even does not have to be fixed and can be easily transformed into other forms, including knowledge up to a flexible radius $r$. Users can design concrete methods under our framework according to their practical demands.
    \item Our method is based on and beyond marking. As is stated in \citep{TheoreticalComparison} and some other works, marking is the most efficient strategy to improve expressivity (at least in node-based subgraph GNNs). Our method incorporates GNN extensions with node markings (controlled by parameter $l$) and is the most universal, expressive one. Moreover, $k,l$-WL can simulate any other common GNN extensions with appropriate implementation.
\end{itemize}

In conclusion, our framework can incorporate all four extensions, thus incorporating an extensively wide range of GNN variations. 

Then we again emphasize the conclusions in \cref{sectionunify}.

\begin{itemize}
    \item $k,l$-WL can incorporate all relational pooling (RP) and local relational pooling methods, since node marking is the most general and expressive form and can simulate all other extensions \cite{TheoreticalComparison}. 
    \item $k,l$-WL incorporates a wide range of subgraph GNNs. \citet{AComplteHierarchy} shows that all node-based subgraph GNNs fall in one of 6 equivalent class of Subgraph Weisfeiler-Lehman Tests (SWL). \iffalse They show that SSWL is the most expressive form of SWL, which is upper bounded by 3-WL. \fi Remarkably, SWL is exactly $1,1$-WL (and equivalently, $2,1$-WL) in our framework, which reveals the connection between our work and many other subgraph GNNs unified by \citet{AComplteHierarchy}. Moreover, $k,l$-WL also incorporate some other subgraph GNNs that are without the scope of SWL \cite{AComplteHierarchy}, such as $I^2$-GNN \cite{I2GNN}. Our $1,2$-WL is a slightly more powerful version than $I^2$-GNN since we consider all $2$-labeled tuples, while $I^2$-GNN only consider those connected $2$-labeled tuples. $1,2$-WL can distinguish some non-isomorphic graph pairs that SWL and 3-WL fail to discriminate, and the algorithm becomes even more powerful as we increase $k$ or $l$. 
    \item  A number of other works such as OSAN \citep{OSAN} are a strict class of our framework. There are still some works cannot be incorporated directly, though. For example, $k,l$-WL operates independently on different labeled graphs and does not include intersection between labeled (sub)graphs as in \citet{GNNAK}. Introducing inter-labeled-graph message passing will increase expressivity of $k,l$-WL, but at an intractable computation cost. It will also be complicated to analyze its theoretical expressivity if we introduce labeled graph interactions, which we leave for future work.
\end{itemize}

In conclusion, our $k,l$-WL and $k,l$-GNN can capture a large number of subgraph GNNs and relational pooling based method. Here are some examples. 

\begin{proposition}
    By definition, $1,1$-WL (or $1$-IDMPNN) incorporates ID-aware GNN, $1,2$-WL incorporates $I^2$-GNN, $1,k$-WL incorporates $k$-OSWL and $k$-OSAN.
\end{proposition}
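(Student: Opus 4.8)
The plan is to read ``$A$ incorporates $B$'' as the expressivity domination $B \preceq A$ from \cref{Theory}, and to prove each of the four inclusions by exhibiting the model $B$ as a (possibly localized, possibly tuple-restricted) special case of $1,l$-WL and then checking that the colors produced by full $1,l$-WL \emph{express} those produced by $B$. Throughout I would lean on two elementary facts. (i) For an MPNN base encoder, running $K$ rounds of message passing on the $K$-hop ego-net of a set of marked nodes carries exactly the same information as running $K$ rounds on the whole marked graph, because $K$ layers of message passing from a vertex depend only on the induced subgraph on its $K$-hop ball. (ii) $1,l$-WL is run to a stable coloring, hence is at least as refined as any fixed finite number of message-passing rounds. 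Combining (i) and (ii), any MPNN run on ego-net subgraphs of marked $l$-tuples is refined by $c_1(G^{\vv})$ for the corresponding labeled graph $G^{\vv}$; moreover marking is known to subsume the heterogeneous-message variants of these models \citep{TheoreticalComparison}.

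For ID-aware GNN \citep{IDAwareGNN}: each node $v$ is taken as a root, receives a distinguished label, an MPNN is run on its ego-net, and the rooted representations are pooled. Choosing $\mathcal{P}_t$ to be ``label one node'' and localizing to ego-nets, this is a localized $1,1$-GNN, so by the observation above its rooted color is a function of $c_1(G^{(v)})$; since $1,1$-WL pools $\mset{c_1(G^{(v)})\mid v\in V}$ through an injective hash, $c_{1,1}(G)$ determines the ID-GNN graph representation, i.e. ID-GNN $\preceq$ $1,1$-WL. For $I^2$-GNN \citep{I2GNN}: a subgraph is extracted for each \emph{connected} pair $(u,v)$, both endpoints are labeled, and an MPNN is run. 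The only extra ingredient is the restriction to edges, and I would note that after a single message-passing round the label-$1$ node's neighbor multiset already records whether it is adjacent to the label-$2$ node, so adjacency of $(u,v)$ is a function of $c_1(G^{(u,v)})$. Hence from the full multiset $\mset{c_1(G^{(u,v)})\mid (u,v)\in V^2}$ one can filter the connected pairs and reconstruct the $I^2$-GNN output, giving $I^2$-GNN $\preceq$ $1,2$-WL (and one expects strictness, since $1,2$-WL additionally retains the non-adjacent pairs, which accounts for the pairs $1,2$-WL separates but $I^2$-GNN and $3$-WL do not).

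For $k$-OSWL and $k$-OSAN \citep{OSAN}: $k$-OSWL marks an ordered $k$-tuple of nodes and runs $1$-WL (on the whole graph or a subgraph), aggregating the colors over all tuples; this is exactly the color-refinement reading of $1,k$-WL, with the localization handled by fact (i), so $k$-OSWL $\cong$ $1,k$-WL, equivalently $\cong$ $2,k$-WL by the earlier equivalence $1,l$-WL $\cong$ $2,l$-WL. The network $k$-OSAN instantiates $k$-OSWL with MPNN layers and an injective-in-the-limit readout, so its graph-isomorphism power is upper bounded by $k$-OSWL and hence by $1,k$-WL; conversely a $1,k$-GNN with an MPNN base and injective pooling realizes it, so $1,k$-WL incorporates $k$-OSAN with equality of distinguishing power, matching the identification of $l$-OSAN with ID-MPNN noted in \cref{RelatedWork}.

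The routine part is simply unwinding each of the four definitions; the step that needs the most care is making the localization harmless, i.e. proving fact (i) precisely and verifying that every model's tuple- or subgraph-selection rule is permutation invariant, so that the multiset each model computes over its selected subgraphs is a well-defined deterministic function of (and therefore expressible from) the multiset $1,l$-WL computes over \emph{all} labeled graphs. A secondary bookkeeping point is that each model's final readout must be dominated by an injective hash, which is the standard ``GNN $\le$ WL with injective aggregators'' argument \citep{HowPowerfulGNN}. Once these two facts are in place, the four inclusions follow immediately from the definitions, as the proposition asserts.
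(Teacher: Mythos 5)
The paper never actually proves this proposition: it is asserted ``by definition'', and the only formal support elsewhere is the theorem $1,1$-WL $\cong$ SWL(VS) (which, via the cited unification of node-based subgraph GNNs, covers ID-GNN) together with the informal remark that $I^2$-GNN restricts attention to connected $2$-tuples. So your attempt to spell out all four inclusions fills a real gap, and several of your ingredients are sound: reading ``incorporates'' as $\preceq$; recovering the root's color from $c_1(G^{\vv})$ because labeled nodes are identifiable inside the multiset; observing that adjacency of a labeled pair is visible already in the first-round colors, so the connected pairs can be filtered out of $\mset{c_1(G^{(u,v)})\mid (u,v)\in V^2}$; and the standard GNN-$\preceq$-WL step for the readouts.

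The genuine gap is your fact (i). It is true that $K$ rounds of message passing determine the \emph{root's} embedding from its $K$-hop ball alone, but the models in question do not only read off the root: $I^2$-GNN (and NGNN-style models) pool the embeddings of \emph{all} nodes of a truncated extracted subgraph, and a non-root node near the boundary of that subgraph sees a different neighborhood inside the subgraph than in the whole labeled graph. Consequently ``MPNN run on the ego-net'' is not in general a function of ``MPNN run on the whole marked graph'', and the claimed domination by full $1,l$-WL does not follow from fact (i). The paper itself warns about exactly this in \cref{sectionlocality} and \cref{example}: localized $k,l$-WL can be \emph{strictly more} expressive than the full version (restricting to $1$-hop subgraphs lets $1,1$-FWL separate the Rook's graph from the Shrikhande graph, which the full-graph $1,1$-FWL cannot). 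As stated, your fact (i) would also ``prove'' that localized $1,l$-WL is dominated by $1,l$-WL, which that example refutes. Your argument does go through for ID-GNN, whose node representation is the root's embedding, and for $k$-OSWL/$k$-OSAN if the marked ``subgraph'' is taken to be the whole graph (which is how the paper identifies $l$-OSAN with $2,l$-WL); but for $I^2$-GNN you must either assume the extracted subgraph is the entire graph or restrict the comparison to the marked nodes' embeddings, and you should say which of these readings you are adopting.
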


\begin{proposition}
    $1,l$-WL (or $l$-IDMPNN) incorporate GSN that counts isomophism within $l$ nodes, when the hash function is strictly injective and powerful enough.
\end{proposition}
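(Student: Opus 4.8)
The plan is to establish the inclusion in the $\preceq$ sense: for any graphs $G,H$, equal stable $1,l$-WL graph colors force equal GSN graph representations, where GSN augments node features with subgraph counts of patterns on at most $l$ vertices and then runs an MPNN. The argument I have in mind splits into three parts: (i) $1,l$-WL recovers every substructure count GSN uses; (ii) it recovers these counts \emph{localized at each node}, since GSN consumes them as node features; and (iii) its iterations dominate the message passing GSN runs on the structurally augmented graph.

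For part (i), I would fix $\vv\in V^l$ with pairwise distinct entries and run the base MPNN on $G^{\vv}$. Because the $l$ labels are unique, after one round the color of node $\vv_i$ encodes $\big(i,\{\,j:(\vv_i,\vv_j)\in E\,\}\big)$, so with an injective node pooling the multiset $\mset{c_1(u,G^{\vv}):u\in V}$, and hence $c_1(G^{\vv})$, determines the ordered induced subgraph on $(\vv_1,\dots,\vv_l)$; tuples with repeated entries are recognized by the non-singleton label multisets they produce and contribute only lower-order information. Pooling over all $\vv\in V^l$ with strictly injective hashes, $c_{1,l}(G)$ determines $\mset{\text{iso-type of } G[\{\vv_1,\dots,\vv_l\}]\text{ with its induced ordering}:\vv\in V^l}$. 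From this multiset one reads off, for every pattern $F$ on at most $l$ vertices, the number of induced embeddings of $F$ into $G$ (equal to the induced-subgraph count times $|\mathrm{Aut}(F)|$), and then by M\"obius inversion over the subgraph lattice every not-necessarily-induced subgraph count of a pattern on at most $l$ vertices. Thus $c_{1,l}(G)$ determines all structural statistics available to GSN.

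For part (ii), I would set $c_{1,l}(v,G):=\text{Hash}\big(\mset{c_1(v,G^{\vv}):\vv\in V^l}\big)$ and rerun the argument of part (i) with the first coordinate pinned to $v$; this shows $c_{1,l}(v,G)$ determines the multiset of ordered induced subgraphs on at most $l$ vertices containing $v$ in the first position, hence the orbit-wise count through $v$ of every such pattern — exactly the feature GSN appends to node $v$. For part (iii), GSN's graph representation is what $1$-WL produces on the graph with initial node features $g(v):=\big(X_v,\ \text{structural feature of }v\big)$ for a fixed $g$; since $c_{1,l}(v,G)$ determines $g(v)$ and $1,l$-WL also refines plain $1$-WL on $G$ (by chaining \cref{kl+1>kl} from $l=0$, where $1,0$-WL is the plain $1$-WL test), the stable $1,l$-WL coloring refines the $1$-WL coloring of the $g$-augmented graph. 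Hence equal $1,l$-WL graph colors force equal GSN node embeddings and, after graph pooling, equal GSN graph representations.

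The main obstacle I anticipate is part (iii): making it precise that the $1,l$-WL refinement dominates ``$1$-WL run on a graph whose initial node features are an arbitrary function of the $1,l$-WL node colors.'' I would isolate a short monotonicity lemma — if an algorithm's stable node colors refine both a feature map $\phi$ and the plain $1$-WL coloring, then they refine the $1$-WL coloring initialized with $\phi$ — proved by induction on rounds, using that refining the initialization only refines the output and that the algorithm's own update already subsumes one $1$-WL step applied to its current colors. Everything else is bookkeeping: injective, sufficiently powerful hashes turn colors into exact multisets (this is precisely where the stated hypothesis enters), and inclusion–exclusion relates induced to non-induced subgraph counts.
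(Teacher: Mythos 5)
The paper's own justification for this proposition is a two-sentence remark: it asserts the claim ``is a natural conclusion since $l$-IDMPNN can count substructures within $l$ nodes,'' i.e.\ it defers entirely to graph-level counting power and never engages with what GSN actually computes. Your proposal takes a genuinely different and substantially more complete route. You correctly identify that graph-level counts are not enough: GSN attaches pattern counts to individual nodes as features and then runs message passing on the augmented graph, so one must additionally show (ii) that $1,l$-WL determines the counts \emph{localized at each node} and (iii) that the $1,l$-WL refinement dominates the downstream $1$-WL run on the feature-augmented graph. Your part (i) (unique labels let one round of message passing reconstruct the ordered induced subgraph on the labeled tuple, with M\"obius inversion converting induced to non-induced counts) is sound, and the monotonicity lemma you isolate for (iii) is exactly the right tool; its proof by induction on rounds goes through. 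In this sense your write-up supplies precisely the content the paper omits.

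The one place where your argument does not line up with the paper's definitions is part (ii). The node-level color you introduce, $c_{1,l}(v,G)=\text{Hash}\big(\mset{c_1(v,G^{\vv})\,:\,\vv\in V^l}\big)$, aggregates over labeled graphs for a \emph{fixed} node; this is the inner hash of what the appendix calls LT pooling. The default $1,l$-WL (and both $l$-IDMPNN architectures) use TL pooling: first pool node colors within each labeled graph, then pool over labeled graphs. The TL graph color is a multiset of multisets indexed the other way around, and recovering the multiset over $v$ of your $c_{1,l}(v,G)$ from it is a matrix-transposition problem that is not solvable in general from one marginal. So, as written, your proof establishes $\text{GSN}\preceq 1,l\text{-WL(LT)}$; since the appendix asserts LT pooling is at least as expressive as TL, this does not by itself yield the statement for the default TL-pooled $1,l$-WL. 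To close this you would need either an extra argument that node-aligned pattern counts survive TL pooling (e.g.\ via the colors of the labeled nodes themselves inside each $c_1(G^{\vv})$, suitably re-aggregated), or to state the result for LT pooling explicitly. The paper's proposition and proof never specify the pooling order, so this is as much a gap in the paper as in your proof, but it is the one step you should flag rather than treat as bookkeeping.
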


This is a natural conclusion since $l$-IDMPNN can count substructure within $l$ nodes. Note that GSN \citep{GSN} count non-isomorphic substructures in a hand-craft way, while our method can encode more information than counting substructures within $l$ nodes. 

\begin{proposition}
    $1,l$-WL capture K-hop GNN either by message passing with label of root node or by localization (perform localized $k,l$-WL within K-hop neighbors).
\end{proposition}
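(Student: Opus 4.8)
The plan is to realize $K$-hop message-passing GNNs as instances living inside the $1,l$-WL hierarchy, giving one construction for each clause of the statement. Write the $K$-hop update (for the shortest-path variant; the diffusion-distance variant is identical) at layer $t$ as
\[
h^{t}(v)=\mathrm{Hash}\Bigl(h^{t-1}(v),\ \bigl(\mset{h^{t-1}(u)\mid d(u,v)=\kappa}\ \big|\ \kappa\in[K]\bigr)\Bigr),
\]
with $h^{0}(v)=X_v$ and graph color $\mathrm{Hash}(\mset{h^{T}(v)\mid v\in V})$, and aim to show this is dominated by $1,l$-WL.

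\textbf{Via root labelling (Method 1).} The first ingredient is a \emph{distance-recovery lemma}: when $1$-WL is run on the singly-labelled graph $G^{v}$, the color $c_1^{r}(u,G^{v})$ determines $\min(d(u,v),r)$, because the unique root label of $v$ advances exactly one hop per iteration. The second ingredient is that $1,1$-WL exposes, for each root $v$, the \emph{entire} labelled-graph coloring $c_1^{\infty}(G^{v})=\mathrm{Hash}(\mset{c_1^{\infty}(u,G^{v})\mid u\in V})$; combined with the lemma this multiset determines $\mset{(d(u,v),\,c_1^{\infty}(u,G^{v}))\mid u\in V}$, hence each hop-stratified multiset $\mset{c_1^{\infty}(u,G^{v})\mid d(u,v)=\kappa}$. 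I would then argue, by induction on $t$ (with the subtlety noted below), that a suitable iteration budget $r_t$ makes the $1,1$-WL colors refine the layer-$t$ $K$-hop colors: at step $t$ one reads off the hop shells as above and applies injectivity of $\mathrm{Hash}$ so the color attached to $v$ is at least as fine as $h^{t}(v)$; pooling over roots then gives $K$-hop GNN $\preceq 1,1$-WL. Since $1,1$-WL is the $l=1$ member and $l\mapsto 1,l$-WL is monotone (\cref{kl+1>kl}), this places $K$-hop GNN below $1,l$-WL.

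\textbf{Via localization (Method 2).} Alternatively, for each $v$ run the $1,1$-WL procedure on the induced subgraph $B_{TK}(v)$ — the $TK$-hop ego-net — with $v$ labelled. The whole $T$-layer $K$-hop computation at $v$ reads only nodes within distance $TK$ of $v$, so restriction to $B_{TK}(v)$ loses nothing, and inside the ego-net the distance-recovery lemma together with enough $1$-WL rounds recover the hop shells exactly as above. This is the ``localized $k,l$-WL within $K$-hop neighbors'' instance of \cref{sectionlocality} with $k=1$, and it is cheaper, since each run touches only $|B_{TK}(v)|$ nodes rather than all of $V$.

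\textbf{Main obstacle.} The delicate point, shared by both constructions, is composition across the $T$ layers: one $K$-hop layer at $v$ needs the layer-$(t-1)$ colors $h^{t-1}(u)$ of \emph{every} hop-neighbour $u$ of $v$, but a single root label at $v$ reveals only distances to $v$, not the refinement structure centred at those neighbours, so $c_1^{r}(u,G^{v})$ does not by itself recover $h^{t-1}(u)$. Resolving this requires either iterating the whole $1,1$-WL construction $T$ times (and checking the iterate still lies within the $1,l$-WL hierarchy for a correspondingly larger $l$), or — in the localized version — letting the ego-net radius and the number of base-encoder rounds both grow linearly in $T$ and pinning down the resulting uniform iteration budget. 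The remaining pieces (the distance-recovery lemma, injectivity of $\mathrm{Hash}$, monotonicity in $l$) are routine once this bookkeeping is settled.
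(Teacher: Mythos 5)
Your proposal is far more ambitious than what the paper actually does, and the obstacle you flag at the end is not bookkeeping --- it is the crux, and it remains open in your write-up. Your induction needs $c_1^{r}(u,G^{v})$ to refine $h^{t-1}(u)$ for every $u$ in a hop shell around $v$, but $h^{t-1}(u)$ is built from distance information centred at $u$, while $G^{v}$ carries a label only at $v$; the distance-recovery lemma gives you $d(\cdot,v)$, not $d(\cdot,u)$, so the argument genuinely breaks at $t=2$. Your suggested repairs (``iterate the whole $1,1$-WL construction $T$ times'', ``grow the ego-net radius with $T$'') are not constructions inside the $1,l$-WL hierarchy as defined --- the hierarchy has no mechanism for re-rooting or for feeding the output of one labelled run into the initial colors of another --- so as written Method 1 does not establish $K$-hop GNN $\preceq 1,1$-WL, and Method 2 inherits the same hole.

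The paper closes the proposition by a different and much cruder device: take $l$ large enough that \emph{all} nodes of the relevant $K$-hop subgraph receive distinct labels. Once every node of the subgraph is uniquely labelled, the $1$-WL colors on the labelled graph determine the full adjacency among the labelled nodes, hence the isomorphism type of the induced subgraph, hence anything a $K$-hop GNN (including peripheral-edge variants) could compute from it --- no layer-by-layer simulation and no distance recovery is needed. This costs a large $l$ (on the order of the $K$-hop neighbourhood size) rather than $l=1$, but it sidesteps your composition problem entirely. To salvage your $l=1$ route you would need either to restrict to a single $K$-hop layer, where your shell-stratification argument does go through, or to prove a genuinely new comparison between multi-layer $K$-hop message passing and $1,1$-WL, which is not obviously true and is certainly not ``routine''.
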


Since $1,l$-WL can distinguish any non-isomorphic subgraph if nodes of subgraph are all labeled, it captures all information obtained by K-hop GNN, including peripheral edge information.

\begin{proposition}
    $1,l$-WL capture node-based subgraph GNN, including node deletion and node marking. 
\end{proposition}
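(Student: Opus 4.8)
The plan is to reduce the statement to two facts and then invoke monotonicity in $l$. Recall that a node-based subgraph GNN with policy $\pi$ computes, for each $v\in V$, a subgraph $G_v=\pi(G,v)$, runs a base MPNN on each $G_v$, and pools $\mset{\text{MPNN}(G_v)\mid v\in V}$. Since any MPNN on $G_v$ is refined by $1$-WL on $G_v$, it suffices to take the base model to be $1$-WL. For the \emph{node-marking} policy, $G_v$ is exactly the labeled graph $G^{(v)}$ that appears in $1,1$-WL for the singleton tuple $\vv=(v)$ (here $X^{(v)}_v=\text{Hash}(X_v,\mset{1})$ and $X^{(v)}_u=\text{Hash}(X_u,\emptyset)$ for $u\ne v$), and the pooling over $v\in V$ matches the outer $\mset{\cdot\mid\vv\in V^1}$ aggregation; hence the node-marking subgraph WL test is $\cong 1,1$-WL. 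By \cref{kl+1>kl}, $1,1$-WL $\preceq 1,l$-WL for every $l\ge1$, which settles the marking case.

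For the \emph{node-deletion} policy, $G_v=G-v$, and the key step is to show that the $1$-WL coloring of $G^{(v)}$ determines the $1$-WL coloring of $G-v$. Write $\tilde c^{t}$ and $c^{t}_{G-v}$ for the $1$-WL colors after $t$ rounds on $G^{(v)}$ and on $G-v$. I would run an induction on $t$ resting on two observations. First, \emph{the mark never disappears}: $\tilde c^{0}(v)\ne\tilde c^{0}(u)$ for all $u\ne v$ because $v$ alone carries the label $\mset{1}$, and WL refinement is monotone (colors, once distinct, stay distinct), so $\tilde c^{t}(v)\ne\tilde c^{t}(u)$ for every $t$ and every $u\ne v$. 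Second, from a neighbor multiset $\mset{\tilde c^{t}(w)\mid w\in N(u,G)}$ one can delete the (at most one) occurrence of the unique color $\tilde c^{t}(v)$ and so recover $\mset{\tilde c^{t}(w)\mid w\in N(u,G)\setminus\{v\}}=\mset{\tilde c^{t}(w)\mid w\in N(u,G-v)}$. Combining the second observation with the inductive hypothesis $\tilde c^{t}(w)\to c^{t}_{G-v}(w)$ for $w\ne v$, the $1$-WL update on $G-v$ is reproducible from that on $G^{(v)}$; the base case follows from the node features. Taking $t$ large enough for stabilization and passing to graph-level multisets (again deleting the identifiable $\tilde c(v)$ entry before applying the map), the graph color of $G-v$ is a function of the graph color of $G^{(v)}$; pooling over $v$ preserves this, so the node-deletion subgraph WL test is $\preceq 1,1$-WL $\preceq 1,l$-WL.

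The main obstacle is exactly the pair of observations in the deletion case: one must argue that the marked vertex stays uniquely colored throughout all WL rounds, so that ``deleting $v$'' is faithfully simulated by ``ignoring the single mark-bearing color inside every aggregation.'' Everything else is routine bookkeeping with $\text{Hash}$ and the multiset operations of the kind already used in \cref{SectionProofExpressiveHierarchy}. The same template extends to other permutation-invariant node-based policies (e.g.\ ego-net extraction): mark $v$, use that the mark persists, and simulate the policy-specific rewriting of $G$ by a corresponding rewriting of the messages that can be read off the marked colors; for the two policies named in the statement the argument above is already complete.
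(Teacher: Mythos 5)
Your proof is correct and follows the same route the paper intends: the paper's own justification is only the one-line remark that deletion and marking ``can both be simulated by specific label mappings and update functions,'' i.e.\ that marking is the universal node-based policy. Your persistent-mark argument (the marked vertex keeps a unique, identifiable color through every refinement round, so its contribution can be excised from each neighbor multiset to simulate deletion) supplies exactly the detail the paper omits, and the reduction to $1,1$-WL followed by monotonicity in $l$ via \cref{kl+1>kl} matches the paper's intended argument.
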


The proposition holds because node deletion and marking can both simulated by specific label mappings and update functions. In other words, our labeling method is the strongest form in sense of distinguishing non-isomorphic graphs when the number of labels is sufficient.

\section{An example to illustrate (localized) $k,l$-WL hierarchy}
\label{example}

\begin{figure}[ht]
\vskip 0.2in
\begin{center}
\centerline{\includegraphics[width=\columnwidth/2]{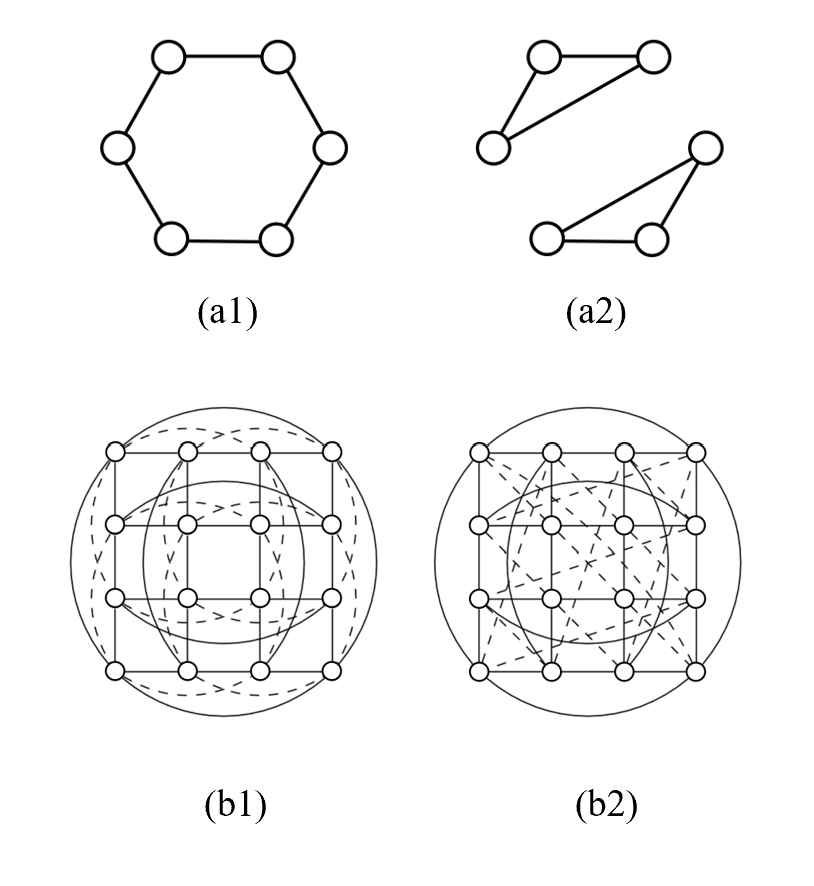}}
\caption{Two pairs of non-isomorphic graphs. 6-cycle (a1) and two 3-cycle (a2) are regular graphs which 1-WL and 2-WL fails to distinguish. 4x4 Rook’s graph (b1) and the Shrikhande graph (b2) are strongly regular graphs which 3-WL (2-FWL) fails to distinguish. Edges are dashed simply for visualization. The figure is modified from \citet{I2GNN}.}
\label{examplefigure}
\end{center}
\vskip -0.2in
\end{figure}

Here we give an example to show all parameters in $k,l$-WL hierarchy as well as localization have influence on expressiveness. In this section, we use $k,l$-FWL, which is equivalent to $k+1,l$-WL. The capacity of different $k,l$-WL and their localized variations for distinguishing these two pairs of non-isomorphic graph are summarized in \cref{regular-graph} and \cref{strongly-regular-graph}.

One can easily tell either increasing $k$ or increasing $l$ would lead to stronger expressivity. As for localization, we notice that if we extract 1-hop subgraph of any root node from (b1) and (b2), we'll get (a2) and (a1) respectively, except that all 6 nodes are connected with the root node. We can prove that if we perform $k,l$-FWL on extracted 1-hop subgraph, the result will be the same as \cref{regular-graph}. Hence decreasing the subgraph size improves the expressive power of $1,1$-FWL, $2,0$-FWL and $2,1$-FWL in this case.

% Additionally, non-equivalence of $k$ and $l$ is revealed in this example, namely $k_1,l_1$-WL do not necessarily have same expressivity as $k_2,l_2$-WL, given $k_1+l_1=k_2+l_2$. This further validates necessity of our work.

\begin{table}[t]
\caption{Capability of different $k,l$-FWL to distinguish 6-cycle (a1) and two 3-cycle (a2)}
\label{regular-graph}
\vskip 0.15in
\begin{center}
\begin{small}
\begin{sc}
\begin{tabular}{cccc}
\toprule
        & l=0 & l=1 & l=2\\
\midrule
    k=1 &  \XSolidBrush & \Checkmark & \Checkmark\\
    k=2 & \Checkmark & \Checkmark & \Checkmark\\

\bottomrule
\end{tabular}
\end{sc}
\end{small}
\end{center}
\vskip -0.1in
\end{table}

\begin{table}[t]
\caption{Capability of different $k,l$-FWL to distinguish 4x4 Rook’s graph (b1) and the Shrikhande graph (b2)}
\label{strongly-regular-graph}
\vskip 0.15in
\begin{center}
\begin{small}
\begin{sc}
\begin{tabular}{cccc}
\toprule
        & l=0 & l=1 & l=2\\
\midrule
    k=1 &  \XSolidBrush & \XSolidBrush & \Checkmark\\
    k=2 & \XSolidBrush & \Checkmark & \Checkmark\\

\bottomrule
\end{tabular}
\end{sc}
\end{small}
\end{center}
\vskip -0.1in
\end{table}

\section{Discussion on explicit labels}\label{discussionID}

\paragraph{Insight of labels}

Here we empirically analyze the benefit of explicit introducing orders of nodes (i.e. labels or IDs). First, it expands the feature space considered in the histograms by introducing labels $\mathbf l$, which increase the number of isomorphism classes to better encode non-isomorphic states. Next, the algorithm introduces local asymmetry to the initialization states, allowing the algorithm to better distinguish local patterns, e.g. peripheral edges. Third, since the labels are unchanged in the iterations, they can be identified in all $k$-tuples, which is more powerful than simply counting histograms. In other words, each $k$-tuple can better acquire the source of information it obtains from neighbors, as well as the changing routes of node colors. 

Besides its effectiveness in graph isomorphism, our method turn out to be effective on real world tasks as well, which have less requirement for strictly powerful GI capability. We suggest that IDs enable the model to learn relative relations between nodes, and better capture structure information. For example, using only 2 labels can we distinguish 6-ring, while benzene rings play a important role in molecule properties.

\paragraph{Flexible number of labels} To align with $k$-WL hierarchy, we fix our number of labels across different subgraphs once the parameter $l$ of algorithm is given. However, a flexible number of labels within different subgraphs is a possible and practical way to improve performance, especially on real world tasks, see \cref{abalationflexible}. Corresponding theoretical analysis, however, will be harder, which we leave for future works.

\section{Theoretical analysis about two architectures of $k,l$-GNN}\label{architecturediscussion}

In this section, we theoretical analyze the expressivity of two architecture of $k,l$-GNN presented in \cref{architecture}. For the convenience of discussion, we currently ignore the effect of subgraph hierarchy. 

When the base encoder and structure encoder are both $k$-WL equivalent, the two structures have similar computation cost. When the base encoder is a higher order $k$-WL equivalent GNN, the base encoder itself has high complexity. Take PPGN as example, it has complexity of $O(n^2)$. If we use the architecture (a) to learn all $O(n^l)$ ordered subgraph features, the total complexity will be $O(n^{l+k})$. However, if we adopt structure (b) and use $k'$-WL ($k'<k$) equivalent GNN as structure encoder (e.g. $k'=1$ for MPNN), the total complexity will be $O(n^k+n^{l+k'})<O(n^{l+k})$. In this setting, the architecture (b) is equivalent to running $k,l$-WL and $k'$-WL in parallel, and further aggregate the information. This will make the network lose part of the expressivity compared with $k,l$-WL performed in structure \cref{architecture} (a). The above result suggests that expressivity of structure encoder will also influence the final GI capability.

\begin{proposition}
    The expressivity of architecture (b) is bounded by the union of $k',l$-WL and $k$-WL. Architecture (b) can improve expressivity of base encoders as long as there exist non-isomorphic graphs distinguishable by $k',l$-WL but not by $k$-WL.
\end{proposition}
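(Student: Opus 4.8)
The plan is to view architecture (b) as an algorithm whose graph-level color is an injective function of a pair of colors: the $k$-WL color of the original attributed graph, produced by the base encoder, and the color produced by the structure-encoder branch. The first step is to identify the latter precisely. By construction the structure encoder is a $k'$-WL-equivalent model run on each $l$-labeled graph $G^{\vv}$ with the original node features stripped, and the resulting per-labeled-graph representations are combined over all $\vv\in V^l$ by an injective multiset aggregator. Invoking the GNN/WL correspondence recalled in the Preliminaries (an MPNN is bounded by $1$-WL, higher-order networks by the corresponding $k$-WL, with injective readouts attaining this bound), a $k'$-WL-equivalent encoder on $G^{\vv}$ produces a coloring that refines into exactly the $k'$-WL coloring of $G^{\vv}$, and the injective pooling over $\vv$ then reproduces exactly the aggregation step of $k',l$-WL. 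Hence the structure branch computes the $k',l$-WL color of $G$ (on the feature-stripped graph), so that $c_{(b)}(G)$ is a function of the pair $(c_{k}(G),\,c_{k',l}(G))$.

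The upper bound is then immediate: for any two graphs $G,H$ with $c_{k}(G)=c_{k}(H)$ and $c_{k',l}(G)=c_{k',l}(H)$ we obtain $c_{(b)}(G)=c_{(b)}(H)$. Equivalently, architecture (b) is no more powerful than the algorithm that outputs both the $k$-WL and the $k',l$-WL colors, i.e. the union of the two in the expressivity lattice; formally $c_{(b)}\to (c_{k},c_{k',l})$, hence architecture (b) $\preceq$ the join of $k$-WL and $k',l$-WL.

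For the improvement direction, note first that $c_{k}(G)$ is recoverable from $c_{(b)}(G)$ as one coordinate of an injective concatenation, so $k$-WL $\preceq$ architecture (b) unconditionally. To get strictness, take a pair of non-isomorphic graphs $G,H$ (with identical, or absent, node features) that $k',l$-WL distinguishes but $k$-WL does not; such a pair exists by hypothesis. Then $c_{k}(G)=c_{k}(H)$ while $c_{k',l}(G)\neq c_{k',l}(H)$, so by the first step the structure branch separates them, and since the final color is an injective combination that includes the structure-branch output, $c_{(b)}(G)\neq c_{(b)}(H)$. Thus architecture (b) distinguishes a pair that $k$-WL cannot, giving $k$-WL $\prec$ architecture (b); that is, the base encoder's expressivity is strictly improved.

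The main obstacle, and the place where the statement should be read with care, is the treatment of node features: the structure encoder discards the attributes $X$, so on attributed graphs its branch realizes $k',l$-WL on the feature-stripped graph $(V,E)$ rather than on $(V,E,X)$, which need not dominate genuine $k',l$-WL. Consequently the ``union'' upper bound is really a bound by the join of $k$-WL on $(V,E,X)$ and $k',l$-WL on $(V,E)$, and the witnessing pair for the improvement claim must be separable using topology and the injected labels alone; in the uniformly colored or purely structural regime these coincide with the plain $k',l$-WL statements. A secondary point to make explicit is that ``$k'$-WL-equivalent'' must be taken to mean the branch actually attains the $k',l$-WL coloring with injective readouts: a generic encoder gives only the upper-bound direction, while the improvement direction needs this concrete expressivity. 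Both points are routine given the correspondences already cited in the paper.
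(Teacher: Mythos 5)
Your argument is essentially the paper's own proof, which simply observes that if both the base-encoder branch ($k$-WL) and the structure-encoder branch ($k',l$-WL) fail to distinguish a pair, the aggregator receives identical inputs and the model must fail, while a pair separated by $k',l$-WL but not $k$-WL witnesses the strict improvement. Your additional caveats --- that the structure branch realizes $k',l$-WL only on the feature-stripped graph, and that the improvement direction requires the branch to actually attain $k',l$-WL with injective readouts --- are correct refinements that the paper's one-line proof glosses over, but they do not change the route.
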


\begin{proof}
    This holds since if both two algorithms fail to distinguish, the input of aggregator will be the same, resulting the model fails to distinguish.
\end{proof}

\begin{proposition}
    $k,l$-GNN implemented by architecture (a) is upper-bounded by $k,l$-WL. Suppose $k,l$-GNN implemented by architecture (b) uses a $k'$-WL ($k'\leq k$) equivalent structure encoder, the $k,l$-GNN (b) is upper-bounded by $k,l$-WL, and is less powerful than architecture (a).
\end{proposition}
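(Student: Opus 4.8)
The plan is to verify all three assertions — (i) architecture (a) $\preceq k,l$-WL, (ii) architecture (b) $\preceq k,l$-WL, and (iii) architecture (b) is no more powerful than architecture (a) — purely inside the expressibility calculus ($\to,\leftrightarrow$) set up in \cref{app:proof}, never constructing a concrete graph. I take a "$k$-WL-equivalent encoder'' to mean an idealized GNN $\mathcal A$ with $\mathcal A(F)\leftrightarrow c_k(F)$ on every (labelled) input $F$. Since both architectures only ever post-process encoder outputs by permutation-invariant pooling and concatenation \emph{at the graph level}, the composition rules $g\to h\Rightarrow f\circ g\to f\circ h$, $(g\to h,\ f\to s)\Rightarrow f\circ g\to s\circ h$, and ($f$ bijective $\Rightarrow f\circ g\to g$) are all I need, together with the convention that $\mathrm{Hash}$ is injective.

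First I would handle architecture (a). On input $G$ it forms, for each $\vv\in V^l$, the labelled copy $G^{\vv}$ (the replicated node features concatenated with the ID features are bijective with $X^{\vv}_u$), runs the shared base encoder on each copy, and pools. Hence its output is a function of $\mset{\mathcal A_{\mathrm{base}}(G^{\vv})\mid\vv\in V^l}$, and via $\mathcal A_{\mathrm{base}}(G^{\vv})\leftrightarrow c_k(G^{\vv})$ this maps to $\mset{c_k(G^{\vv})\mid\vv\in V^l}\leftrightarrow c_{k,l}^{\mathrm{TL}}(G)$. So the output of (a) is $\to c_{k,l}(G)$, i.e. (a)$\preceq k,l$-WL for any pooling, and when the pooling is injective (Deepset) the chain becomes a bijection, giving (a)$\cong k,l$-WL.

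Next, architecture (b) outputs a function of the pair $\bigl(\mathcal A_{\mathrm{base}}(G),\ \mathrm{AGG}_{\vv\in V^l}\,\mathcal A_{\mathrm{str}}(\tilde G^{\vv})\bigr)$, where $\tilde G^{\vv}=(V,E,\tilde X^{\vv})$ retains only the ID features $\tilde X^{\vv}_u=\mset{i\mid\vv_i=u}$ (the original node features forgotten) and $\mathcal A_{\mathrm{str}}$ is $k'$-WL-equivalent with $k'\le k$. I would show each component is expressible from $c_{k,l}(G)$. Component one: $\mathcal A_{\mathrm{base}}(G)\leftrightarrow c_k(G)\to c_{k,l}(G)$, the last step because iterating Lemma~\ref{lem:kl<=kl+1} gives $c_{k,0}\to c_{k,1}\to\cdots\to c_{k,l}$ while $c_{k,0}\leftrightarrow c_k$ by the theorem $k,0$-WL$\cong k$-WL (and $1,l$-WL$\cong 2,l$-WL covers $k=1$). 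Component two: $\mathrm{AGG}_{\vv}\mathcal A_{\mathrm{str}}(\tilde G^{\vv})\to\mset{\mathcal A_{\mathrm{str}}(\tilde G^{\vv})\mid\vv}\leftrightarrow\mset{c_{k'}(\tilde G^{\vv})\mid\vv}$, and it remains to see $\mset{c_{k'}(\tilde G^{\vv})\mid\vv}\to c_{k,l}(G)$. This follows once $c_k(G^{\vv})\to c_{k'}(\tilde G^{\vv})$ holds over all graphs, since then we recover $\mset{c_k(G^{\vv})\mid\vv}$ from $c_{k,l}^{\mathrm{TL}}(G)$ and apply that map elementwise; and $c_k(G^{\vv})\to c_{k'}(\tilde G^{\vv})$ is itself the chain $c_k(G^{\vv})\to c_{k'}(G^{\vv})$ ($k'$-WL$\preceq k$-WL, by iterating Lemma~\ref{lem:kl<=k+1l} at $l=0$ together with $1$-WL$\cong 2$-WL) followed by $c_{k'}(G^{\vv})\to c_{k'}(\tilde G^{\vv})$ (forgetting node features only merges initial colour classes, a tuple-level refinement lifted to graph level exactly as in the induction on $t$ inside Lemma~\ref{lem:kl<=kl+1}). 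Hence (b)$\preceq k,l$-WL, and claim (iii) is immediate: (b)$\preceq k,l$-WL$\cong$(a) with injective pooling.

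The hard part will be the feature-forgetting step $c_{k'}(G^{\vv})\to c_{k'}(\tilde G^{\vv})$: I must argue carefully that passing from $G^{\vv}$ to $\tilde G^{\vv}$ is a genuine coarsening (monotonicity of WL-refinement in the initial colouring), and in particular that $X^{\vv}_u=\mathrm{Hash}(X_u,\mset{i\mid\vv_i=u})$, being injective, still exposes the pure-label information that $\mathcal A_{\mathrm{str}}$ uses. A secondary point to state explicitly is that "$k$-WL-equivalence'' constrains only graph-level colours and not intermediate tuple colours, which is harmless here because (a) and (b) interact with the encoders solely through their graph-level outputs. Finally, the proposition's "less powerful'' is the non-strict $\preceq$ of \cref{Theory}; establishing \emph{strict} separation of (b) from (a) would additionally require a graph pair distinguished by $k,l$-WL but by neither $k$-WL nor $k'$-WL, which lies outside this statement and is instead supported by the surrounding discussion and the experiments in \cref{sectionimprovements}.
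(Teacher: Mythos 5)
Your proposal is correct and follows essentially the same route as the paper's own proof: it bounds architecture (a) by observing it literally executes $k,l$-WL with TL pooling, and bounds architecture (b) by decomposing its output into the base-encoder component (handled via $k$-WL $\preceq$ $k,l$-WL) and the structure-encoder component (handled via $k'\le k$ and the multiset $\mset{c_{k'}(\tilde G^{\vv})}$ being recoverable from $c_{k,l}(G)$), then concludes (b) $\preceq$ $k,l$-WL $\cong$ (a); in fact you are more careful than the paper, which silently skips the feature-forgetting step $c_{k'}(G^{\vv})$ determines $c_{k'}(\tilde G^{\vv})$ that you correctly isolate as the point needing the monotonicity-in-initial-colouring argument. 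One caution: several of your arrows (e.g.\ $c_{k,0}\to c_{k,1}\to\cdots\to c_{k,l}$ and $\mset{c_{k'}(\tilde G^{\vv})}\to c_{k,l}(G)$) follow the paper's prose definition of $f\to g$ ("$f$ expressible by $g$") while others (e.g.\ $c_k(G^{\vv})\to c_{k'}(\tilde G^{\vv})$) follow the opposite convention actually used in Lemmas~\ref{lem:kl<=kl+1} and~\ref{lem:kl<=k+1l}, so you should fix one orientation throughout, even though the intended implications are all stated in the correct logical direction in your surrounding prose.
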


\begin{proof}
    It's straight forward to verify that $k,l$-GNN with architecture (a) just perform the $k,l$-WL procedure. For architecture (a), the upper-bound can be achieved if and only if the forward function of base encoder and the aggregation function are injective. We then show that architecture (b) is upper-bounded by architecture (a) if the structure encoder in (b) is not more powerful than $k$-WL. By definition, there exists non-isomorphic pairs that is distinguishable by the base encoder but indistinguishable by structure encoder. The non-isomorphic graphs (b) can distinguish is the union of $k',l$-WL distinguishable graphs $G(k',l)$ and $k$-WL distinguishable graphs $G(k,0)$. Since both of $\{G(k',l)\}$ and $\{G(k,0)\}$ are a subset that $k,l$-WL can distinguish $\{G(k,l)\}$, i.e. $\{G(k',l)\}\supsetneqq \{G(k,l)\}, \{G(k,0)\}\supsetneqq \{G(k,l)\}$, according to the conclusions in \cref{Theory}. Therefore, architecture (b) cannot distinguish more non-isomorphic graphs than (a): $(\{G(k',l)\}\cup \{G(k,0)\})\supsetneqq \{G(k,;)\}$. 
\end{proof}

However, given that even $1,l$-WL is powerful enough in most cases, architecture (b) can still lift up expressivity of base encoder even if the structure encoder is MPNN. Meanwhile, it enjoys a much lower complexity compared with architecture (a). The above results theoretical enable us to improve base encoder's expressivity with only ID and lower complexity models like MPNN.

\section{Implementation details}\label{implementationdetail}

\paragraph{Explicitly embed IDs} As revealed in our paper, explicitly assigning IDs to nodes is a crucial step in improving GI power, which benefit both the initialization and update procedure of $k,l$-WL. However, previous work concerning relational pooling \citep{CountSubstructures, Murphy2019RelationalPF} do not break symmetry by explicitly introducing IDs.

In our implementation, a simple yet effective method to explicitly introduce IDs is using an embedding layer. For real world tasks, the embeddings are trainable to better adjust for the task. While in graph isomorphism tasks, the concrete values of network parameters do not matter, thus the embeddings of IDs can be fixed. In both trainable and fixed setting, the network is permutation invariance as long as we traverse all element in the symmetric group.

\paragraph{ID-MPNN and ID-GINE}

As illustrated in \cref{architecture} (b), we use two message-passing based GNN to learn graph features and ID features respectively. The hyper-parameters for these two encoders can be tuned independently.

Additionally, since there are a large amount of subgraphs, parallel computation efficiency should be considered. We implement MPNN that aggregates neighboring information by einsum of adjacency matrix and node feature tensors. Compared with official implementation of GINE, this implementation reveal approximately a $30\%$ of both time and memory consumption. 

In real world experiments, we perform message passing on the whole graph, i.e. set $m=n$, allowing our model to better capture global information of the graph. In substructure counting, we set $m=l$, which means only message passing on the labeled nodes. For graph isomorphism tasks, $m$ are adjusted according to the expressiveness hierarchy of localized $k,l$-WL.

\section{Experiment setting details}

On synthetic datasets, the training configuration is not limited due to strong expressivity of our model. We only report training details on real-world datasets.

\paragraph{QM9 dataset}
For QM9, we use $4$-IDMPNN. The hidden size is 64. We adopt parallel architecture, and 6-layer MPNN, 4-layer MPNN, 4-layer MPNN as base encoder, structure encoder and aggregator respectively. We use hierarchical and constrained based subgraph selection policy, with a selection rate of $0.05$. The initial learning rate
is 0.001 and we use ReduceLROnPlateau learning rate scheduler with a patience of 7 and a reduction
factor of 0.8. The maximum number of epochs is 200. We use AdamW optimizer and the batchsize is 32.

\paragraph{ZINC12k dataset}
For QM9, we use IDMPNN with different $l$. For the model with best performance, $l=4$ with a hidden size of 64. We adopt parallel architecture. Base encoder, structure encoder and aggregator are 5-layer MPNN, 4-layer MPNN, 4-layer MPNN, respectively. We use hierarchical and constrained based subgraph selection policy, with a selection rate of $0.1$. We use cosine learning rate with a warmup of 50 epochs, and the total number of epochs is 1000. The initial learning rate
is 0.001. We use AdamW optimizer and the batchsize is 32.

\paragraph{ogbg-molhiv dataset}
For ogbg-molhiv, we use $4$-IDMPNN with a hidden size 96. We adopt parallel architecture. Base encoder, structure encoder and aggregator are 6-layer MPNN, 4-layer MPNN, 4-layer MPNN, respectively. We use hierarchical and constrained based subgraph selection policy, with a selection rate of $0.05$. We use step scheduler with step size 20 and decay factor 0.5. The initial learning rate is 0.001 and the maximum number of epochs is 60. We use AdamW optimizer and the batchsize is 32.

\section{Ablation study for ID-MPNN}\label{ablation}

We conduct ablation study for ID-MPNN on the ZINC12k dataset. The following factors are considered: two architectures in \cref{architecture}, types of aggregator, number of IDs $l$, subgraph sampling and permutation sampling. Finally, we make some attempts on flexible number of IDs and use attention to label as aforementioned.

\paragraph{Comparison of two architectures}

Though in \cref{architecturediscussion} we theoretically proved that architecture (b) is strictly upper-bounded by architecture (a) in sense of graph isomorphism, experiments show that architecture (b) tend to have better performance in real world tasks. 

\begin{table}[t]
\caption{Comparison for two architecture of $l$-IDMPNN on ZINC.}
\label{ablationarchitecture}
\vskip 0.15in
\begin{center}
\begin{small}
\begin{sc}
\begin{tabular}{ccc}
\toprule
    Architecture & Test MAE\\
\midrule
    (a) &  $0.105 \pm 0.004$\\
    (b) & $0.083 \pm 0.003$\\

\bottomrule
\end{tabular}
\end{sc}
\end{small}
\end{center}
\vskip -0.1in
\end{table}

\cref{ablationarchitecture} presents results of $4$-IDMPNN's performance on ZINC12k with architecture (a) and (b) respectively. All other hyper-parameters and training configurations are set the same. This is because real world tasks like molecular property regression do not necessarily demand higher order GI capability. In architecture (b), the original input graph feature and the structure information provided by IDs are learned in a parallel way, enabling the structure encoder to better learn the relative structure information, independent of the original graph features.

\paragraph{Comparison of different aggregators}

In this section, we investigate the influence of aggregator in architecture (b). In graph isomorphism task, as long as the way we combine two features are injective, no downstream models are required to achieve full performance of $k,l$-WL. However, this may not be sufficient for real world tasks. To combine features output by base encoder and structure encoder, we use three methods: add, hadamard product (element-wise product) and concat. Then the combined feature is either directly passed to output linear layer, or further processed by a network (MPNN/MLP). Results are listed in \cref{ablationaggregator}.

\begin{table}[t]
\caption{Comparison for ID-MPNN with different aggregators on ZINC.}
\label{ablationaggregator}
\vskip 0.15in
\begin{center}
\begin{small}
\begin{sc}
\begin{tabular}{ccc}
\toprule
    Aggregator & Test MAE\\
\midrule
    add &  $0.125 \pm 0.011$\\
    add + MPNN & $0.083 \pm 0.003$\\
    hadamard product + MPNN & $0.084 \pm 0.002$\\
    concat + MPNN & $0.086 \pm 0.003$\\
\bottomrule
\end{tabular}
\end{sc}
\end{small}
\end{center}
\vskip -0.1in
\end{table}

The results indicate that although a downstream model in aggregator is not necessary in sense of GI, it's essential in graph regression task. A MPNN can better process the combination of features learned by base encoder and structure encoder. The way to combine these two features, however, is less important. All three methods (add, hadamard product and concat) work relatively well.

\paragraph{Comparison of $l$}

Theoretical analysis in \cref{Theory} has proved that graph isomorphism power of $k,l$-WL always increase w.r.t the $l$, the number of IDs. Now let's see how parameter $l$ in our $k,l$-GNN influence performance on real world tasks. 

\begin{table}[t]
\caption{Comparison for ID-MPNN with different number of IDs on ZINC.}
\label{ablationl}
\vskip 0.15in
\begin{center}
\begin{small}
\begin{sc}
\begin{tabular}{ccc}
\toprule
    Model & Test MAE\\
\midrule
    $3$-IDMPNN &  $0.085 \pm 0.003$\\
    $4$-IDMPNN & $0.083 \pm 0.003$\\
    $5$-IDMPNN & $0.089 \pm 0.004$\\
    $6$-IDMPNN & $0.094 \pm 0.004$\\
\bottomrule
\end{tabular}
\end{sc}
\end{small}
\end{center}
\vskip -0.1in
\end{table}

Results are in \cref{ablationl}, all other hyper-parameters and training configurations are set the same. We can conclude that though with higer power in GI, a larger number of IDs do not necessarily improve performance on real world tasks. This could be caused by several reasons, including a limited number of subgraphs selected by hierarchical and contraint-based policy for large $k$. The ablation study proves that our method can achieve satisfying performance with a relative small $l$.

\paragraph{Labeled set selection}

Now let's view the problem from relational pooling perspective. An equivalent way to select ordered subgraphs is to find a set of $l$ nodes and perform permutation on their assigned labels. Therefore, the total complexity of $k,l$-WL is $O(n^k \cdot n_s \cdot k!)$, where $n_s$ is the number of unordered subgraphs. Hence, labeled tuples selection in main text can be further separated into two stages: (1) selecting labeled node set, and (2) reducing number of permutation in the labeled set. Now we first look at the labeled set selection.

To evaluate influence of labeled set dropping, we conduct ablation study using $4$-IDMPNN with a smaller parameter amount. Results are shown in \cref{ablationsubgraph}. All hyper-pameters except the labeled set sample rate are set the same. We only drop labeled subgraphs when training, while retain all labeled subgraphs during inference. We use random selection policy in this section. Results are shown in \cref{ablationsubgraph}.

\begin{table}[t]
\caption{Comparison for ID-MPNN with different subgraph sample rate on ZINC.}
\label{ablationsubgraph}
\vskip 0.15in
\begin{center}
\begin{small}
\begin{sc}
\begin{tabular}{ccc}
\toprule
    Subgraph sample rate & Test MAE & Running time (s/epoch)\\
\midrule
    1.0 &  $0.162 \pm 0.002$ & 98\\
    0.5 & $0.161 \pm 0.003$ & 53\\
    0.25 & $0.153 \pm 0.004$ & 29\\
    0.1 & $0.150 \pm 0.004$ & 15\\
    0.05 & $0.154 \pm 0.004$ & 9\\
\bottomrule
\end{tabular}
\end{sc}
\end{small}
\end{center}
\vskip -0.1in
\end{table}

One can observe that even with a very small subgraph sample rate, our model can still achieve comparable or even better performance than using all subgraphs. This is partly due to the redundancy between subgraphs. Meanwhile, the time consumption is basically linear w.r.t subgraph sample rate, enabling us to lift up the training process of our model without hurting performance.

\paragraph{Permutation selection}

Now we further consider permutation drop for unordered subgraphs. A random selection of permutation $g\in S_l$ is a possible way due to its unbiasedness and consistency. However, this will break permutation invariance in practical, resulting in a large variance and poorer performance compared with adopting the full symmetric group, see \cref{ablationpermutation} below.

Another possible direction is to find a possible permutation group other than the symmetric group. \citep{PG-GNN} proposed a smaller permutation method that is able to capture 2-ary relationship. However, its experimental performance are limited, and there are still a large space to theoretically explore the permutation design.

In our experiments, $4$-IDMPNN with a larger parameter amount is used as test model, and the subgraph sample rate is $0.1$. Results are shown in \cref{ablationpermutation}. Different from subgraph drop, a low permutation sample rate will catastrophically hurt the performance. Note that models with fewer permutations tend to be more likely to overfit. \iffalse still work well on training set, but cannot transfer well to test set, revealing a tendency of overfitting.\fi Statistically, a permutation drop is an unbiased estimator, but is permutation sensitive in every single run, resulting a larger variance on the test set. The above result again emphasizes the importance of permutation invariance for GNNs.

\begin{table}[t]
\caption{Comparison for ID-MPNN with different permutation sample rate on ZINC.}
\label{ablationpermutation}
\vskip 0.15in
\begin{center}
\begin{small}
\begin{sc}
\begin{tabular}{ccc}
\toprule
    Permutation sample rate & Train MAE & Test MAE\\
\midrule
    1.0 &  $0.008 \pm 0.001$ & $0.083\pm 0.003$ \\
    0.9 & $0.010 \pm 0.003$ & $0.085\pm 0.004$\\
    0.5 & $0.009 \pm 0.002$ & $0.097\pm 0.008$\\
    0.2 & $0.007 \pm 0.001$ & $0.141\pm 0.018$\\
\bottomrule
\end{tabular}
\end{sc}
\end{small}
\end{center}
\vskip -0.1in
\end{table}

\begin{table}[t]
\caption{Comparison for test MAE and running time of different methods on ZINC}
\label{abalationrunningtime}
\vskip 0.15in
\begin{center}
\begin{small}
\begin{sc}
\begin{tabular}{ccc}
\toprule
    Model & Test MAE & Running time (s/epoch)\\
\midrule
    GraphSAGE &  $0.398 \pm 0.002$ & 16.61\\
    DeepLRP-7-1 & $0.223 \pm 0.008$ & 72\\
     PPGN & $0.256\pm 0.054$ & 334.69\\
     PG-GNN & $0.282 \pm 0.011$ & 6.92 \\

\midrule
    4-IDMPNN & $0.083\pm 0.003$ & 17 \\
    3-IDMPNN & $0.085\pm 0.003$ & 12\\
\bottomrule
\end{tabular}
\end{sc}
\end{small}
\end{center}
\vskip -0.1in
\end{table}

\paragraph{Comparison with other models applying local relational pooling}

For comparison, we select two permutation invariant GNNs using local relational pooling (and can be incorporated into our $k,l$-GNN framework): PG-GNN \citep{PG-GNN} and DeepLRP \citep{CountSubstructures}, evaluating their test MAE and computation efficiency. GraphSAGE \citep{GraphSage} and PPGN are also included as baselines. It turns out that our instances have significantly better performance as well as computational efficient. The result is shown in \cref{abalationrunningtime}.

\begin{table}[h]
\caption{Comparison for data-driven methods on ZINC}
\label{abalationflexible}
\vskip 0.15in
\begin{center}
\begin{small}
\begin{sc}
\begin{tabular}{ccc}
\toprule
    Method & Test MAE \\
\midrule
    GIN &  $0.163 \pm 0.004$\\
    OSAN & $0.187 \pm 0.004$ \\ %& 72\\
    Flexible label & $0.128\pm 0.011$\\
    Attention-based label & $0.135 \pm 0.008$\\
\bottomrule
\end{tabular}
\end{sc}
\end{small}
\end{center}
\vskip -0.1in
\end{table}

\paragraph{Other experiments}

Based on Implicit-MLE, we propose another method to let our model learn to label. We can measure "importance" of nodes by calculating self-attention of node features obtained by a base encoder (e.g. a MPNN), and label according to the importance. This method allow us to label arbitrary number of labels in $O(1)$ complexity, since no permutation is demand to keep invariance. However, this method is not applicable in GI tasks when graphs reveal strong symmetry, since the representation are bounded by expressivity of base encoder. See below for more experimental results.

Here we report 2 more methods to label subgraphs in \cref{abalationflexible}: using a flexible number of labels across subgraphs, and the aforementioned labeling nodes according to the self-attention weight of nodes. Concretely, the former one label each substructure (pre-calculated node cluster) according to the size of cluster. The latter learns the ID in a data-driven manner, hence do not need any permutations. We choose OSAN with I-MLE as baseline model. Our results show that these methods slight improve performance of base encoder, but still have a wide design space for future work.

\end{document}